\newcommand{\cV}{\mathcal{V}}
\newcommand{\cE}{\mathcal{E}}
\newcommand{\bA}{\mathbf{A}}
\newcommand{\cS}{\mathcal{S}}
\newcommand{\cG}{\mathcal{G}}
\newcommand{\cZ}{\mathcal{Z}}
\newcommand{\cO}{\mathcal{O}}
\newcommand{\cD}{\mathcal{D}}
\newcommand{\bz}{\mathbf{z}}
\newcommand{\cQ}{\mathcal{Q}}
\newcommand{\sm}{\text{SIM}}
\newcommand{\dsm}{\text{DISSIM}}
\newcommand{\coh}{\text{COHESION}}
\newcommand{\sep}{\text{SEPARATION}}
\newcommand*\samethanks[1][\value{footnote}]{\footnotemark[#1]}
\begin{document}

\title{\Large Maximizing Cohesion and Separation in Graph Representation Learning: \\ A Distance-aware Negative Sampling Approach}
\author{M. Maruf\thanks{Dept. of Computer Science, Virginia Tech, VA, USA}
\and Anuj Karpatne\samethanks}

\date{}

\maketitle

\fancyfoot[R]{\scriptsize{Copyright \textcopyright\ 2021 by SIAM\\
Unauthorized reproduction of this article is prohibited}}


\begin{abstract} \small\baselineskip=9pt 
The objective of unsupervised graph representation learning (GRL) is to learn a low-dimensional space of node embeddings that reflect the structure of a given unlabeled graph. Existing algorithms for this task rely on negative sampling objectives that maximize the similarity in node embeddings at nearby nodes (referred to as ``cohesion'') by maintaining positive and negative corpus of node pairs. While positive samples are drawn from node pairs that co-occur in short random walks, conventional approaches construct negative corpus by uniformly sampling random pairs, thus ignoring valuable information about structural dissimilarity among distant node pairs (referred to as ``separation''). In this paper, we present a novel Distance-aware Negative Sampling (DNS) which maximizes the separation of distant node-pairs while maximizing cohesion at nearby node-pairs by setting the negative sampling probability proportional to the pair-wise shortest distances. Our approach can be used in conjunction with any GRL algorithm and we demonstrate  the efficacy of our approach over baseline negative sampling methods over downstream node classification tasks on a number of benchmark datasets and GRL algorithms. All our codes and datasets are available at \url{https://github.com/Distance-awareNS/DNS/}.
\end{abstract}


\section{Introduction}


The goal of graph representation learning (GRL) is to learn a low-dimensional embedding of every node in the graph that captures the structure of interactions among nodes. The learned embeddings can be used as input features in downstream tasks such as network classification or link prediction. In GRL problems where every node has an associated set of attributes and target labels, e.g., over many benchmark datasets such as CiteSeer, Cora, and PubMed \cite{DBLP:journals/corr/YangCS16}, one can employ supervised learning methods to extract node embeddings \cite{DBLP:journals/corr/KipfW16, DBLP:journals/corr/GilmerSRVD17, DBLP:journals/corr/abs-1710-03059, hamilton2017inductive, velivckovic2017graph} that achieve state-of-the-art performance.
However, in a general GRL problem, we may not always have access to node features or labels, or the node features may be available in complex and varying formats (e.g., as molecular structures in protein-protein interaction or drug-drug interaction graphs). Further, we may be interested in learning a ``universal'' embedding of the nodes that captures the graph structure and is independent of downstream supervised learning tasks. Such a universal representation can then be used as input features for a new downstream task without re-training the embeddings. For these reasons, we focus our attention to the problem of unsupervised GRL, where the node embeddings are required to be learned solely from the graph structure (i.e., the adjacency matrix) and we do not consider the presence of any node or edge attributes or labels. Henceforth, we will use the term GRL to  refer to unsupervised GRL.

Most GRL algorithms are rooted in the idea of distributional similarity developed in the natural language processing (NLP) community \cite{mikolov2013efficient}, whereby words appearing in similar \emph{contexts} (e.g., sentences in a document) are mapped to similar representations. Similarly, most GRL algorithms aim to maximize the similarity of embeddings at nearby nodes, which are assumed to belong to similar contexts based on the structure of the graph. This is generally performed by maintaining a \emph{positive} corpus of nearby node-pairs (termed {positive} pairs) and a \emph{negative} corpus of randomly sampled node-pairs (termed {negative} pairs). The similarity of embeddings over the positive corpus is then contrasted with that over the negative corpus, and their difference is maximized to ensure positive pairs occupy similar embeddinggs. A common strategy for sampling the negative pairs is to use a unigram distribution over all nodes, referred to as the unigram negative sampling (UNS) method.

While maximizing the similarity at nearby node-pairs is an important objective, a second objective that is important yet mostly overlooked in existing GRL algorithms is to maximize the \emph{dissimilarity at distant node-pairs}. 
This is important because ideally, we would like to learn embeddings where the structural similarity of nodes (e.g., based on the distance of the shortest path between two nodes, or network distance) is preserved in the embedding space. In other words, the similarity of node-pairs in the embedding space should be proportional to their network distance. As a result, by maximizing this second objective, we can obtain well-separated and meaningful embeddings, whereby node-pairs that are nearby occupy similar embeddings while those that are far apart occupy dissimilar embeddings. Using an analogy from the domain of clustering, we refer to the first objective as maximizing \emph{graph cohesion}, i.e., similarity at nearby nodes, and the second objective as maximizing \emph{graph separation}, i.e., dissimilarity at distant nodes. We present an intuitive negative sampler for maximizing both cohesion and separation in GRL by sampling negative pairs with probability proportional to the distance between the nodes, termed  as Distance-aware Negative Sampler (DNS).

\paragraph{Our Contributions:} (1) We introduce and define the concepts of cohesion and separation in the context of GRL. (2) We propose a novel Distance-aware Negative Sampler (DNS) that maximizes both cohesion and separation. (3)  We theoretically show the effectiveness of our DNS approach in maximizing cohesion and separation as compared to UNS. (4) We present a scalable DNS approach with reduced space and time complexity for large networks. (5) We empirically show the ability of our DNS approach to learn meaningful representations, thus leading to better predictive performance on downstream ML tasks on several benchmark datasets in comparison with baseline GRL algorithms. 

\section{Related work}
\label{sec:related_work}
\paragraph{Unsupervised graph representation learning methods:}
\label{subsection:baselines}

A number of existing unsupervised GRL methods maximize embedding similarity at nearby nodes directly without performing negative sampling. Some examples include matrix factorization based methods \cite{ahmed2013distributed, cao2015grarep, ou2016asymmetric} and skip-gram based methods \cite{perozzi2014deepwalk, DBLP:journals/corr/abs-1710-09599, armandpour2019robust}. Some GRL methods use a variety of negative sampling strategies to learn node embeddings. This category includes methods that use input node features such as Graph Convolutional Network (GCN) encoders \cite{kipf2016variational, hamilton2017inductive, velivckovic2018deep} that have achieved state-of-the-art performances on benchmark GRL datasets. However, they are not directly relevant to our GRL problem since we consider the formulation where no node features are available. Negative sampling based methods that do not use node features include node2vec \cite{grover2016node2vec}, which optimizes random walk objectives and LINE \cite{tang2015line}, which uses first- or second-order neighborhoods to construct similar nodes. Note that while DeepWalk \cite{perozzi2014deepwalk} was originally proposed using a Hierarchical Softmax objective,  we can adapt it to construct a negative sampling based version of DeepWalk.

\paragraph{Negative sampling strategies:}
\label{subsection:Negative_Sampling}
Here we discuss some of the common strategies for negative sampling that are at the basis of several unsupervised GRL algorithms. There are two generic types of negative samplers, edge-based \cite{kipf2016variational, tang2015line} and node-based \cite{grover2016node2vec, hamilton2017inductive}.  Edge-based samplers construct the positive corpus by selecting node pairs that have an edge between them, and the negative corpus by randomly selecting node pairs that do not have an edge. On the other hand, node-based samplers use random walk objectives to construct the positive corpus and select random node pairs distributed with unigram distribution to construct the negative corpus. Among unigram distributions, two are common; one chooses negative samples with uniform probability \cite{kipf2016variational, tang2015line} and the other uses degree-based probability \cite{grover2016node2vec, hamilton2017inductive}, where the negative sampling probability is proportional to the $\frac{3}{4}$th power of the degree of each node. It is known that degree-based unigram sampler suffers from the \textit{popular neighbor} problem \cite{armandpour2019robust}, as this approach may choose a nearby node with high degree as a negative sample. Henceforth, by Unigram Negative Sampler (UNS), we refer the unigram sampler with uniform probability, and unigram-deg/UNS-deg denotes degree-based unigram negative sampler. There are some more negative samplers that have been proposed in recent works \cite{armandpour2019robust, yang2020understanding}; however, none of them use the notion of network distances in negative sampling.
\section{Preliminaries and problem objective}
\label{sec:problem_definition}
\subsection{Notations}
We are given an undirected graph $\cG = (\cV, \cE)$ where $|\cV| = n$, $|\cE| = m$, and the adjacency matrix is given by $\bA = [a{(i,j)}]_{n\times n}$. We assume that the graph is unweighted such that $a({i,j}) = 1$ iff $ (i,j) \in \cE$, otherwise $0$. We denote the set of all possible node-pairs as $\cS = \cV \times \cV$. Further, for every node-pair $(i,j) \in \cS$, we denote the distance or length of the shortest path between the nodes as $d(i,j)$. Incidentally, $d(i,j) = 1$ iff $(i,j) \in \cE$, i.e., there exists an edge between nodes $i$ and $j$. Let us refer to the maximum value of $d(i,j)$ in graph $\cG$ as $d_{max}$. We can then talk about the subset of node-pairs whose distance is equal to $d$, i.e., $\cS_d = \{(i,j) \in \cS | d(i,j) = d\}$. It is easy to verify that $\cS = \cS_0 \cup \cS_1 \cup \ldots \cS_{d_{max}}$ and $\cS_1 = \cE$.

With this setup, we consider the problem of unsupervised GRL where the goal is to map every node $i$ to an $l$-dimensional vector embedding, $\bz_i \in \mathbb{R}^l$, such that the embedding space $\cZ = \{\bz_i\}_{i=1}^n$ preserves the structural properties of nodes in graph $\cG$ (typically, $l\ll|\cV|$). In particular, we consider two generic types of measures in the embedding space of a pair of nodes, (i) $\sm_\cZ (i,j):=$ similarity score between embeddings $\bz_i$ and $\bz_j$ (some examples include the dot product $\bz_i^T\bz_j$ and its monotonic transformations $\sigma(\bz_i^T\bz_j)$ and $\log(\sigma(\bz_i^T\bz_j))$, where $\sigma$ denotes the sigmoid function), and (ii) $\dsm_\cZ (i,j):=$ dissimilarity score between embeddings $\bz_i$ and $\bz_j$ (some examples include $-\bz_{ij}$, $\sigma(-\bz_i^T\bz_j)$ and $\log(\sigma(-\bz_i^T\bz_j))$). Note that there are multiple choices of similarity and dissimilarity functions to instantiate these two generic measures in any problem. Also, maximizing the similarity score of a node-pair is usually equivalent to minimizing its dissimilarity score for common function choices.

Ideally, we want to learn an embedding space $\cZ$ such that $\sm_\cZ (i,j)$ is large for nearby node-pairs (i.e., when $d(i,j)$ is small) and $\dsm_\cZ (i,j)$ is large for distant node-pairs (i.e., when $d(i,j)$ is large). This objective, which is at the basis of the distributional hypothesis in linguistics \cite{harris1954distributional}, can be expressed using the notions of \emph{cohesion} and \emph{separation} in GRL, formally defined in the following.

\subsection{Cohesion and Separation}
\vspace{-1ex}
\begin{Definition}
\textbf{Cohesion:} The cohesion of an embedding space $\cZ$ represents the aggregate similarity score between embeddings at nearby node-pairs in the graph. Formally, we define cohesion using the following weighted sum over similarity scores:
\vspace{-2ex}
\begin{align}
    \coh (\alpha, \cZ) = \sum_{d=1}^{d_{max}} \alpha_d \sum_{(i,j) \in \cS_d} \sm(i,j),\quad  \nonumber \\
    \text{where} ~ \alpha_d ~\geq ~ \alpha_{d+1}, ~~ \alpha_d \geq 0 ~~ \forall d. \nonumber
    \vspace{-1ex}
\end{align}
Observe that since the weights $\alpha_d$ monotonically decrease with $d$, this weighted sum pays greater emphasis to the similarity at nearby node-pairs (i.e., $\cS_d$ with small $d$). This is a generic definition of cohesion that can be instantiated using different choices of the weights $\alpha$. For example, if we specify $\alpha_1 = 1$ and $\alpha_d = 0 ~\forall~ d > 1$, then cohesion will be equal to the aggregate similarity over all the edges in $\cG$. As we will see later, a common approach for specifying $\alpha_d$ in most GRL algorithms is performing random walks and computing the probability of sampling a node-pair at a distance $d$ in the random walk.
\label{def:cohesion}
\end{Definition}

\begin{Definition}
\textbf{Separation:} The separation of an embedding space $\cZ$ captures the aggregate dissimilarity between embeddings at distant node-pairs. Similar to cohesion, we can formally define separation using the following weighted sum:
\vspace{-2ex}
\begin{align}
    \sep (\beta, \cZ) = \sum_{d=1}^{d_{max}} \beta_d \sum_{(i,j) \in \cS_d} \dsm(i,j),\quad  \nonumber \\
    \text{where} ~ \beta_d ~\leq ~ \beta_{d+1} ~~ \beta_d \geq 0 ~~ \forall d. \nonumber
\end{align}
In this generic definition, since $\beta_d$ monotonically increases with $d$, the dissimilarity at distant node-pairs have a greater contribution in the separation. Again, there can be multiple ways to instantiate $\beta_d$. For example, we can set $\beta_{d_{max}} = 1$ and $\beta_d = 0 ~\forall ~d < d_{max}$ such that the separation is equal to the dissimilarity at the farthest node-pairs in $\cG$.
\label{def:separation}
\end{Definition}

Since dissimilarity score is inversely related to similarity, it may seem that maximizing cohesion automatically maximizes separation. However, this is not true since the weighted sums involved in cohesion and separation focus on different subsets of node-pairs in $\cS$: while cohesion focuses on $\cS_d$ for small $d$, separation focuses on node-pairs in $\cS_d$ for large $d$.
We present the following theorem to prove this point. 

\vspace{-1ex}
\begin{theorem}
Given two embedding spaces, $\cZ^1$ and $\cZ^2$, $\coh(\alpha, \cZ^1) = \coh(\alpha, \cZ^2)$ does not imply that $\sep(\beta, \cZ^1) = \sep(\beta, \cZ^2)$, for all choices of $\alpha$ and $\beta$. 
\label{thm:1}
\end{theorem}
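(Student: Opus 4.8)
The plan is to disprove the implication by exhibiting a single counterexample: a graph $\cG$, a fixed choice of weights $\alpha$ and $\beta$ satisfying the monotonicity and nonnegativity constraints in Definitions~\ref{def:cohesion} and~\ref{def:separation}, and two embedding spaces $\cZ^1, \cZ^2$ for which $\coh(\alpha, \cZ^1) = \coh(\alpha, \cZ^2)$ but $\sep(\beta, \cZ^1) \neq \sep(\beta, \cZ^2)$. Since cohesion and separation are weighted sums over the \emph{disjoint} distance classes $\cS_d$, the key idea is to perturb an embedding in a way that leaves the similarity scores on the near classes (those carrying nonzero $\alpha_d$) untouched while changing the dissimilarity scores on the far classes (those carrying nonzero $\beta_d$).

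Concretely, I would take $\cG$ to be the path on three nodes $1 - 2 - 3$, so that $d_{max} = 2$, $\cS_1 = \cE = \{(1,2),(2,3)\}$ and $\cS_2 = \{(1,3)\}$ (up to symmetry). Choose $\alpha_1 = 1$, $\alpha_2 = 0$, which is admissible since $\alpha_1 \geq \alpha_2 \geq 0$; and $\beta_1 = 0$, $\beta_2 = 1$, which is admissible since $\beta_1 \leq \beta_2$ and both are $\geq 0$. With the dot-product instantiations $\sm(i,j) = \bz_i^T\bz_j$ and $\dsm(i,j) = -\bz_i^T\bz_j$, these weights collapse the two functionals to $\coh(\alpha, \cZ) = \bz_1^T\bz_2 + \bz_2^T\bz_3$ and $\sep(\beta, \cZ) = -\bz_1^T\bz_3$.

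Next I would exhibit a one-parameter family of one-dimensional embeddings, $\bz_2 = 1$, $\bz_1 = 1+t$, $\bz_3 = 1-t$ for $t \in \mathbb{R}$. Then $\coh(\alpha, \cZ) = \bz_2(\bz_1 + \bz_3) = 2$ is independent of $t$, whereas $\sep(\beta, \cZ) = -\bz_1\bz_3 = t^2 - 1$ varies with $t$. Taking, e.g., $\cZ^1$ at $t = 0$ and $\cZ^2$ at $t = 1$ yields two embedding spaces with identical cohesion ($=2$) but different separation ($-1$ versus $0$), which proves the statement.

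There is essentially no analytical obstacle here — the claim is a separating-example statement and the construction is elementary. The only points that need care are bookkeeping ones: verifying that the chosen $\alpha, \beta$ obey the required constraints, noting that a valid counterexample needs only \emph{one} instantiation of the generic $\sm$/$\dsm$ measures (so specializing to the dot product is harmless), and observing that embedding spaces are unconstrained points in $\mathbb{R}^l$, so no ``realizability by a GRL algorithm'' argument is needed. One may optionally remark that the same perturbation idea works for any graph with $d_{max} \geq 2$ and any $\alpha, \beta$ whose supports differ, which is the conceptual reason cohesion and separation are genuinely distinct objectives and motivates distance-aware negative sampling.
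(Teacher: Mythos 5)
Your proposal is correct and takes essentially the same approach as the paper: a counterexample in which $\alpha$ is supported on the edge class and $\beta$ on the farthest distance class, with embeddings perturbed so that cohesion is preserved while separation changes. The paper does this pictorially with a 4-node toy graph and three 2D embeddings (Figure~\ref{fig:toy}); your version is the same idea made fully explicit with coordinates and a one-parameter family, which is if anything a tighter write-up.
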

\vspace{-5ex}
\begin{proof}
We use a counter-example to show that there can exist multiple embedding spaces such that their cohesion values are equal but their separation values are different. Figure \ref{fig:proof_1_g} shows a toy graph with 4 nodes that is represented in three different two-dimensional embedding spaces in Figures \ref{fig:proof_1_a}, \ref{fig:proof_1_b}, and \ref{fig:proof_1_c}. If we specify cohesion to be the aggregate similarity over edges (shown as dotted lines), we can see that all three embedding spaces have the same cohesion. However, if we define separation to be the aggregate dissimilarity at farthest nodes (at distance 3), we can see that the separation ranges from large (Figure \ref{fig:proof_1_a}) to small (Figure \ref{fig:proof_1_c}).
\end{proof}

\begin{figure*}
\begin{subfigure}{.24\textwidth}
  \centering
  \includegraphics[width=.8\linewidth]{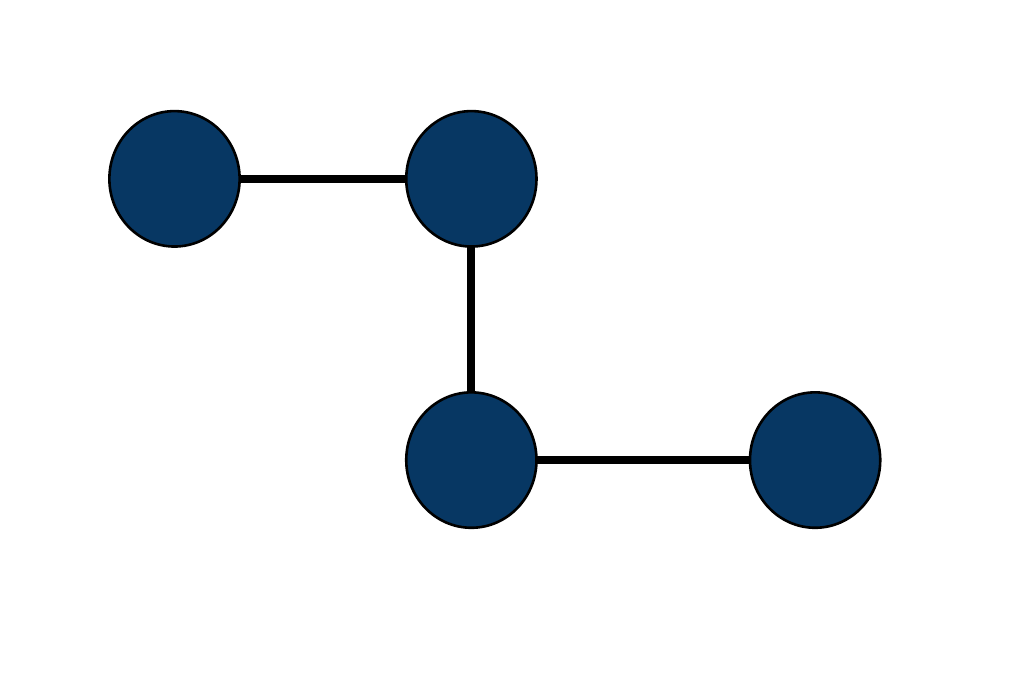} 
  \caption{Graph $\mathcal{G}$}
  \label{fig:proof_1_g}
\end{subfigure}%
\begin{subfigure}{.25\textwidth}
  \centering
  \includegraphics[width=.8\linewidth]{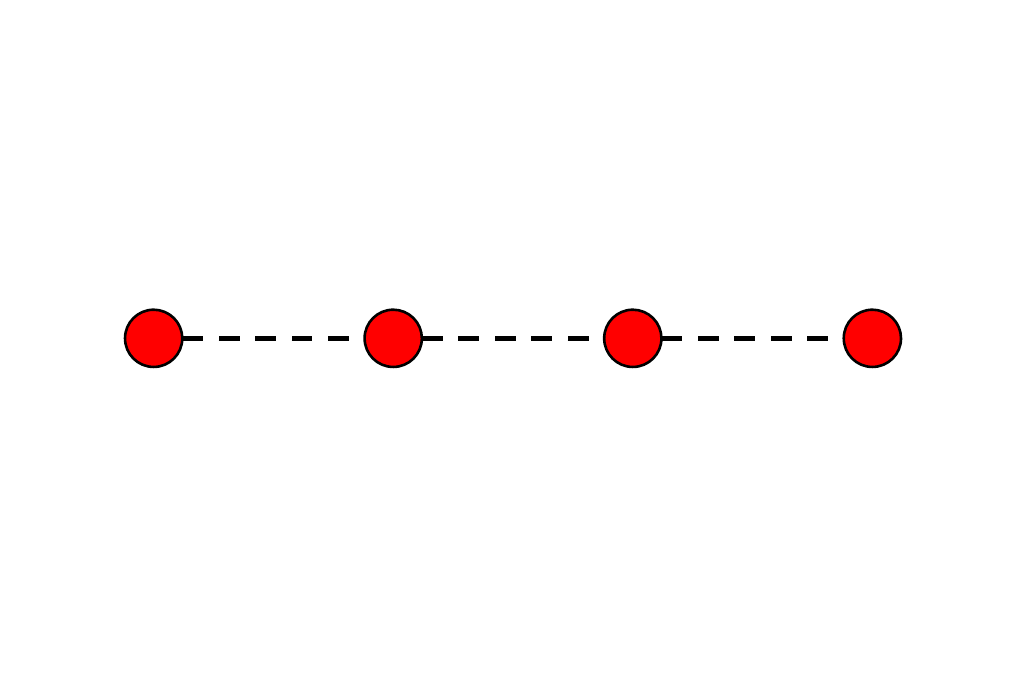}
  \caption{Large Separation}
  \label{fig:proof_1_a}
\end{subfigure}
\begin{subfigure}{.25\textwidth}
  \centering
  \includegraphics[width=.8\linewidth]{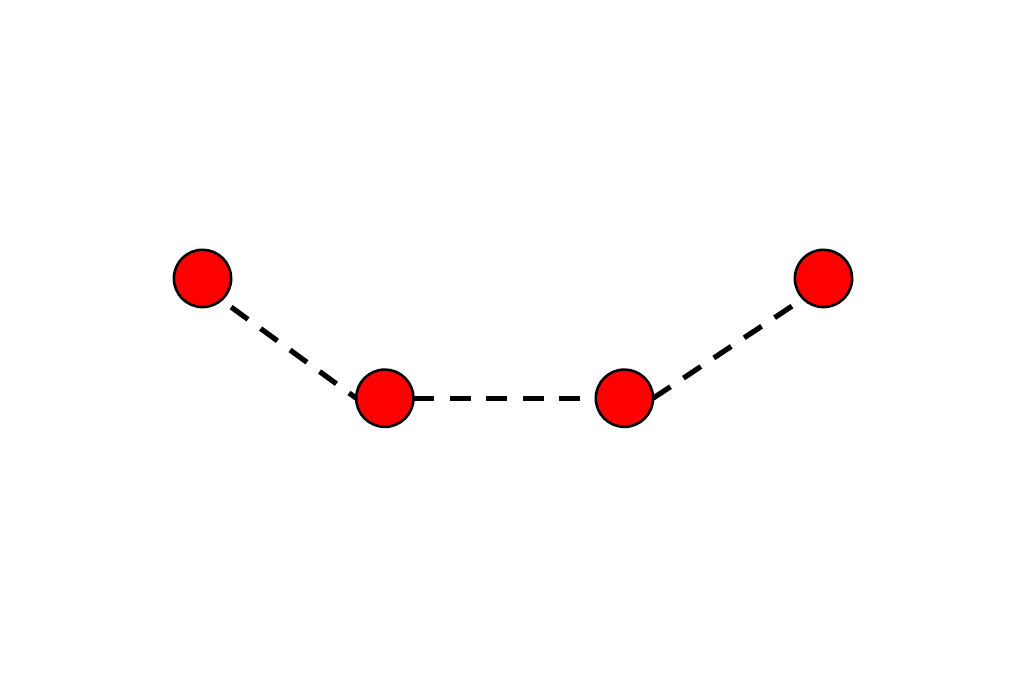}
  \caption{Moderate Separation}
  \label{fig:proof_1_b}
\end{subfigure}
\begin{subfigure}{.25\textwidth}
  \centering
  \includegraphics[width=.8\linewidth]{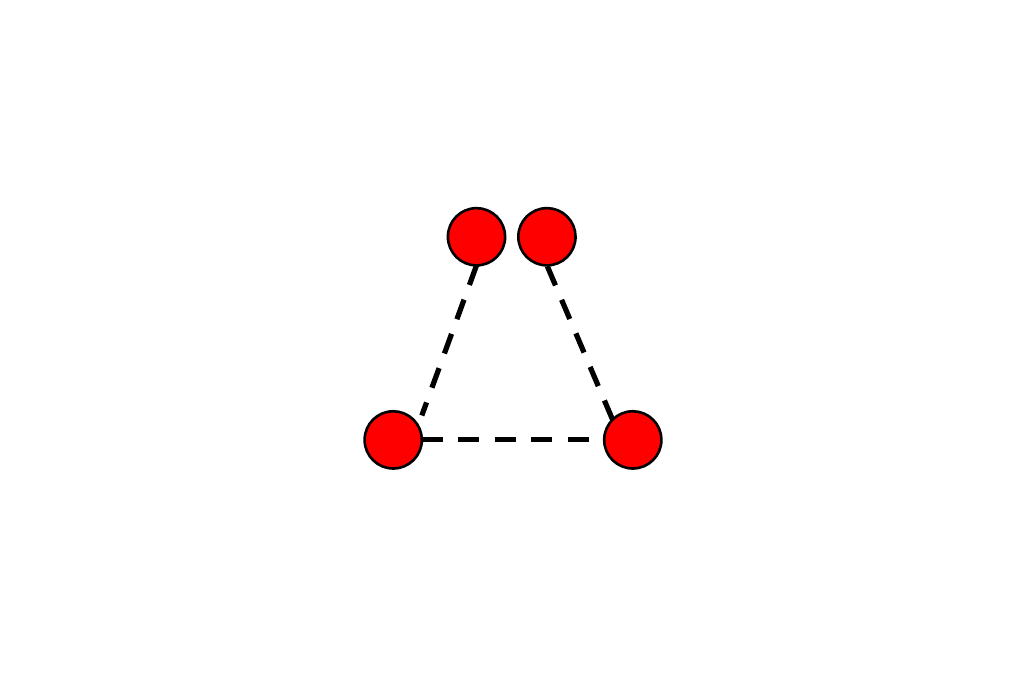}
  \caption{Small Separation}
  \label{fig:proof_1_c}
\end{subfigure}
\caption{
Mapping a toy graph (\ref{fig:proof_1_g}) into three different 2D-embedding spaces: (\ref{fig:proof_1_a}), (\ref{fig:proof_1_b}),  and (\ref{fig:proof_1_c}). The position of each node denotes the 2D-embedding vector and the dotted lines represent edges in $\cG$.}
\label{fig:toy}
\end{figure*}

\vspace{-3ex}
\subsection{GRL objective}
As a result of Theorem \ref{thm:1}, a GRL algorithm that only maximizes cohesion is not guaranteed to maximize separation and thus can lead to inferior embeddings such as the one shown in Figure \ref{fig:proof_1_c} for the toy graph. This is one of the major drawbacks of skip-gram based GRL algorithms that only attempt to maximize the similarity at nearby nodes (where neighborhood is defined using random walks). We posit this as a natural consequence of the origin of these algorithms in natural language processing (NLP) applications, where the definition of distances between words (and hence the separation) is not as straight-forward as in graphs. We thus present a generalized objective of GRL using both cohesion and separation. We can show that existing GRL algorithms (e.g., unigram negative sampling based approaches) optimize special cases of this GRL objective.

\begin{Definition}
\textbf{Generalized GRL Objective: } Given a graph $\cG$, the goal of a GRL algorithm is to optimize the following generalized objective function with respect to $\cZ$:
\begin{align}
    E(\cZ) =& ~ \coh(\alpha, \cZ) + \sep(\beta, \cZ) \nonumber\\
    =& \sum_{d=1}^{d_{max}} \sum_{(i,j) \in \cS_d} [\alpha_d ~ \sm(i,j) + \beta_d ~ \dsm(i,j)] \nonumber\\
    \text{such that,} & ~~ \frac{\alpha_1}{\beta_1} \gg 1, ~~ \frac{\alpha_{d_{max}}}{\beta_{d_{max}}} \ll 1, ~~ \text{and} ~~ \frac{\beta_{d_{max}}}{\beta_{1}} \gg 1 
    \nonumber
\end{align}
Note that we do not use a trade-off parameter between cohesion and separation  since any such parameter can be absorbed in $\alpha$ or $\beta$ as a constant multiplier. Different GRL algorithms optimize this generalized objective using different choices of similarity and dissimilarity functions, and settings of $\alpha$ and $\beta$ weights satisfying the GRL conditions in the above equation. From the perspective of separation, we would prefer a GRL algorithm that employs a large value of $\beta_{d_{max}}/\beta_1$, such that the dissimilarity at farthest node-pairs is substantially larger than that of the nearest node-pairs. We call this fraction $\beta_{d_{max}}/\beta_1$ as the \textbf{Separation Power} of a GRL algorithm.

\end{Definition}


\section{Proposed method}
\label{sec:method}
\paragraph{Negative sampling:}
Before we present our proposed GRL algorithm based on the ideas of cohesion and separation, we formally discuss the generic family of negative sampling algorithms of which our algorithm is a special case. The objective function of negative sampling is given by the following equation:



\begin{align}
    \max_{\cZ} \;\;\;\; \sum_{i\in\mathcal{V}} \sum_{j\in\mathcal{N}(i)}[\underbrace{\log\:\sigma(\bz_i^T \bz_j)}_\text{Positive Loss} + \nonumber\\
    {K} \sum_{k\in \mathcal{V}} \underbrace{P_{neg}(k|i)\log(\sigma(- \bz_i^T \bz_k))}_\text{Negative Loss}], \nonumber
\end{align}

where node-pair $(i, j)$ belongs to the positive corpus $D_{pos}$ while $(i, k)$ belongs to the negative corpus $D_{neg}$. 
We generally use random-walk strategy to construct $D_{pos}$, whereas $D_{neg}$ is constructed by sampling ${K}$ negative pairs $(i, k)$ for each positive sample $(i, j)$ with probability $P_{neg}(k|i)$ \cite{mikolov2013efficient, gutmann2010noise}.
A common choice of $P_{neg}(k|i)$ is the unigram distribution that samples $k$ with equal probability from all $n$ nodes, referred as the Unigram Negative Sampling (UNS) algorithm.
The objective function of UNS can be shown to be a special case of the generalized GRL objective where the similarity at nearby nodes (i.e., cohesion) corresponds to the positive loss while the dissimilarity at distant nodes (i.e., separation) corresponds to the negative loss. However, a major limitation with UNS is that the probability of sampling a negative node-pair is independent of the distance between the nodes. As a result, UNS pays equal importance to the dissimilarity of node pairs with varying distances in the calculation of separation, thus leading to poor separation power. Theorem \ref{thm:2} provides a formal analysis of the correspondence of UNS to the generalized GRL objective and shows that its separation power is equal to 1.


\begin{theorem}
Unigram Negative Sampling (UNS) Algorithm optimizes the generalized GRL objective with the following specifications:
    $\sm (i,j) = log(\sigma(\bz_i^T\bz_j))$, ~ $\dsm (i,j) = log(\sigma(-\bz_i^T\bz_j))$,
    $\alpha_d = \pi_d(C,\bA)$, where $\pi_d(C,\bA)$ is the probability of sampling a node-pair at distance $d$ using a $C$-length random walk on the graph with adjacency matrix $\bA$, and
    $\beta_d = KC/n.$
As a result, the Separation Power of UNS algorithm is equal to 1.
\label{thm:2}
\end{theorem}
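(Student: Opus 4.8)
The plan is to start from the UNS objective function displayed above and rewrite each of its two terms --- the \emph{positive loss} and the \emph{negative loss} --- as the two components $\coh(\alpha,\cZ)$ and $\sep(\beta,\cZ)$ of the generalized GRL objective, reading off $\alpha$ and $\beta$ in the process. Part (b) of the statement, that the Separation Power equals $1$, will then be an immediate corollary of the identification of the $\beta$ weights in part (a).

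For the positive loss $\sum_{i\in\cV}\sum_{j\in\mathcal{N}(i)}\log\sigma(\bz_i^T\bz_j)$, I would take the expectation over the $C$-length random walks used to build $D_{pos}$. A node-pair $(i,j)$ enters this sum once for every time $j$ appears in the context of $i$ along a walk, so in expectation the total weight attached to $(i,j)$ is proportional to $\pi_d(C,\bA)$ with $d=d(i,j)$, by the definition of $\pi_d$ as the probability that a $C$-length walk samples a pair at distance $d$ (any constant coming from the number of walks and the window size can be folded into the overall scale, equivalently into $K$). Grouping the pairs by distance and setting $\sm(i,j)=\log\sigma(\bz_i^T\bz_j)$ yields $\sum_{d=1}^{d_{max}}\pi_d(C,\bA)\sum_{(i,j)\in\cS_d}\sm(i,j)=\coh(\alpha,\cZ)$ with $\alpha_d=\pi_d(C,\bA)$; because a walk reaches farther pairs with (weakly) smaller probability, $\pi_d$ is nonincreasing in $d$, so the monotonicity requirement of Definition~\ref{def:cohesion} is met.

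For the negative loss, I substitute the UNS choice $P_{neg}(k|i)=1/n$, which collapses the inner term to $\tfrac{K}{n}\sum_{k\in\cV}\log\sigma(-\bz_i^T\bz_k)$, independent of $j$. Summing over the $|\mathcal{N}(i)|$ context pairs generated at node $i$ --- whose expected number is $C$ for a $C$-length walk --- gives $\tfrac{KC}{n}\sum_{i\in\cV}\sum_{k\in\cV}\log\sigma(-\bz_i^T\bz_k)$. Writing $\dsm(i,k)=\log\sigma(-\bz_i^T\bz_k)$ and partitioning $\cS=\cS_1\cup\cdots\cup\cS_{d_{max}}$, this equals $\sum_{d=1}^{d_{max}}\tfrac{KC}{n}\sum_{(i,k)\in\cS_d}\dsm(i,k)=\sep(\beta,\cZ)$ with the \emph{constant} weights $\beta_d=KC/n$ for all $d$, which is trivially nondecreasing and hence satisfies Definition~\ref{def:separation}. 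Adding the two pieces shows the UNS objective coincides with $E(\cZ)=\coh(\alpha,\cZ)+\sep(\beta,\cZ)$ under the claimed specifications; the side conditions $\alpha_1/\beta_1\gg1$ and $\alpha_{d_{max}}/\beta_{d_{max}}\ll1$ hold because $\pi_1$ carries essentially all of the short-walk probability mass whereas $\pi_{d_{max}}$ is negligible. Finally, $\beta_{d_{max}}/\beta_1=(KC/n)/(KC/n)=1$, proving the Separation Power claim.

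I expect the main obstacle to be the bookkeeping in the random-walk expectation for the positive term: precisely justifying that the expected context weight on a pair at distance $d$ is governed by $\pi_d(C,\bA)$, that each node contributes $C$ context pairs in expectation, and that the window-size and number-of-walks factors enter only multiplicatively --- so that they can be absorbed into $K$ and, in particular, cancel in the ratio $\beta_{d_{max}}/\beta_1$. Everything else (the $P_{neg}=1/n$ substitution, the regrouping by distance, the monotonicity checks) is routine.
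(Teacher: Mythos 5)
Your proposal is correct and follows essentially the same route as the paper's proof: rewrite the positive loss as a sum over distance classes weighted by $\pi_d(C,\bA)$, substitute $P_{neg}(k|i)=1/n$ and the factor $C$ from the context size into the negative loss to obtain the constant weight $\beta_d=KC/n$, and read off the separation power as $1$. Your additional checks of the monotonicity conditions on $\alpha_d$ and $\beta_d$ are a small bonus the paper omits, but they do not change the argument.
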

\begin{proof}
Provided in the Supplementary material.
\end{proof}
\paragraph{Distance-aware Negative Sampler:}
We propose a Distance-aware Negative Sampler (DNS) which selects a negative sample $k$ for node $i$ using the sampling probability $P_{neg}(k|i)$, where $P_{neg}(k|i)$ is linearly proportional to the pair-wise distance $d(k, i)$. Formally,
\begin{align*}
    P_{neg}(k|i) &\propto d(k, i)\\
    P_{neg}(k|i) &= \frac{d(k, i)}{\mathcal{D}(i, \bA)},
\end{align*}
where $\mathcal{D}(i, \bA)$ is the sum of distance of all node-pairs that contain node $i$, $\mathcal{D}(i, \bA) = \sum_{s \in \cV}d(s, i)$. Let $\mathcal{D}(\bA)$ be equal to $\mathbb{E}_{i}(\mathcal{D}(i,\bA))$. Note that $\mathcal{D}(\bA)$ depends on the average degree of the graph.  For a fully connected graph, $\mathcal{D}(\bA) = n - 1$. On the other extreme, when the graph is a chain of $n$ nodes, then $\mathcal{D}(\bA) = \frac{n(n-1)}{2}$. Generally, since most real world graphs are sparse, $\mathcal{D}(\bA) \gg n - 1$.   
By construction, our proposed DNS approach has a separation power of $d_{max}$ as stated in Theorem \ref{thm:3}.

\begin{theorem}
Distance-aware Negative Sampling (DNS) Algorithm optimizes the generalized GRL objective with the following specifications:
    $\sm (i,j) = log(\sigma(\bz_i^T\bz_j))$, ~ $\dsm (i,j) = log(\sigma(-\bz_i^T\bz_j))$,
    $\alpha_d = \pi_d(C,\bA)$, where $\pi_d(C,\bA)$ is the probability of sampling a node-pair at distance $d$ using a $C$-length random walk on the graph with adjacency matrix $\bA$, and
    $\beta_d = KCd/\mathcal{D}(\bA).$
As a result, the Separation Power of DNS algorithm is equal to $d_{max}$.
\label{thm:3}
\end{theorem}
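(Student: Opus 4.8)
The plan is to mirror the argument used for Theorem \ref{thm:2}, since DNS and UNS share the same similarity/dissimilarity functions and the same positive-loss analysis; only the negative-loss term changes. First I would expand the negative sampling objective and split it into the positive loss $\sum_{i\in\cV}\sum_{j\in\mathcal{N}(i)}\log\sigma(\bz_i^T\bz_j)$ and the negative loss $K\sum_{i\in\cV}\sum_{j\in\mathcal{N}(i)}\sum_{k\in\cV}P_{neg}(k|i)\log\sigma(-\bz_i^T\bz_k)$. For the positive loss, $D_{pos}$ is built from $C$-length random walks, so grouping the sampled pairs by their shortest-path distance and using that a pair at distance $d$ is drawn with probability $\pi_d(C,\bA)$ gives, in expectation, $\sum_{d=1}^{d_{max}}\pi_d(C,\bA)\sum_{(i,j)\in\cS_d}\log\sigma(\bz_i^T\bz_j) = \coh(\alpha,\cZ)$ with $\alpha_d = \pi_d(C,\bA)$; this is exactly the step established in the proof of Theorem \ref{thm:2}, and it yields $\alpha_d \geq \alpha_{d+1}$ because short-range co-occurrences dominate in a short walk.

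The new work is the negative-loss term. Substituting the DNS rule $P_{neg}(k|i) = d(k,i)/\mathcal{D}(i,\bA)$, the coefficient multiplying $\dsm(i,k) = \log\sigma(-\bz_i^T\bz_k)$ for a pair at distance $d = d(k,i)$ is $K\,|\mathcal{N}(i)|\,d(k,i)/\mathcal{D}(i,\bA)$. Using the factor $|\mathcal{N}(i)| = C$ coming from the random-walk window (as in the Theorem \ref{thm:2} proof) and replacing the per-node normalizer $\mathcal{D}(i,\bA)$ by its graph-level average $\mathcal{D}(\bA) = \mathbb{E}_i(\mathcal{D}(i,\bA))$ collapses this coefficient to the $i$-independent value $KCd/\mathcal{D}(\bA)$. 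Re-indexing the double sum $\sum_{i\in\cV}\sum_{k\in\cV}$ over $\cS = \cS_1\cup\cdots\cup\cS_{d_{max}}$ then gives the negative loss as $\sum_{d=1}^{d_{max}}\beta_d\sum_{(i,j)\in\cS_d}\dsm(i,j) = \sep(\beta,\cZ)$ with $\beta_d = KCd/\mathcal{D}(\bA)$. Since $\beta_d$ is linear, increasing in $d$, and nonnegative, it satisfies the defining monotonicity of separation, and combining the two pieces shows DNS optimizes $E(\cZ) = \coh(\alpha,\cZ) + \sep(\beta,\cZ)$ with the stated weights.

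To finish, I would check the GRL admissibility conditions and read off the separation power. From $\beta_1 = KC/\mathcal{D}(\bA)$ and $\beta_{d_{max}} = KCd_{max}/\mathcal{D}(\bA)$ we get $\beta_{d_{max}}/\beta_1 = d_{max} \gg 1$; since $\alpha_1 = \pi_1(C,\bA)$ is large while $\beta_1 = KC/\mathcal{D}(\bA)$ is small because $\mathcal{D}(\bA)\gg n-1$ for sparse graphs, $\alpha_1/\beta_1 \gg 1$; and $\alpha_{d_{max}} = \pi_{d_{max}}(C,\bA)$ is negligible for a short walk whereas $\beta_{d_{max}}$ is scaled up by the factor $d_{max}$, so $\alpha_{d_{max}}/\beta_{d_{max}} \ll 1$. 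Hence the separation power $\beta_{d_{max}}/\beta_1$ equals $d_{max}$, as claimed.

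I expect the main obstacle to be justifying the replacement of the node-specific normalizer $\mathcal{D}(i,\bA)$ by the graph average $\mathcal{D}(\bA)$: strictly, the per-pair negative weight depends on $i$, so identifying a single distance-indexed $\beta_d$ is an approximation valid in expectation (and exact under a regularity assumption such as a degree-regular graph), and the write-up must state this modeling step explicitly — just as the $\beta_d = KC/n$ identification in Theorem \ref{thm:2} implicitly treats $\mathcal{N}(i)$ and the normalization uniformly over nodes. A secondary, lighter point is the bookkeeping between ordered and unordered pairs and the exact constant extracted from the random-walk window, but these are routine and already handled in the Theorem \ref{thm:2} proof.
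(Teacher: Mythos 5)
Your proposal is correct and follows essentially the same route as the paper's proof: both start from the generic negative-sampling objective already decomposed by distance in the proof of Theorem \ref{thm:2}, substitute $P_{neg}(k|i)=d(k,i)/\mathcal{D}(i,\bA)$, replace the node-specific normalizer $\mathcal{D}(i,\bA)$ by its expectation $\mathcal{D}(\bA)$ (the paper makes exactly this approximation, stated in one line), and read off $\beta_d = KCd/\mathcal{D}(\bA)$ and the ratio $\beta_{d_{max}}/\beta_1 = d_{max}$. Your explicit verification of the admissibility conditions $\alpha_1/\beta_1 \gg 1$ and $\alpha_{d_{max}}/\beta_{d_{max}} \ll 1$ is a small addition the paper omits, and your flagged concern about the $\mathcal{D}(i,\bA)\mapsto\mathcal{D}(\bA)$ replacement correctly identifies the one modeling step the paper handles only by assertion.
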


\begin{proof}
Provided in the Supplementary material.
\end{proof}

\begin{corollary}
For UNS, $(\frac{\alpha_d}{\beta_d})_{UNS} = \frac{\pi_d(C,\bA)n}{KC}$ and for DNS, $(\frac{\alpha_d}{\beta_d})_{DNS} = \frac{\pi_d(C,\bA)\mathcal{D}(\bA)}{KCd}$. Hence, 
$(\frac{\alpha_d}{\beta_d})_{UNS} < (\frac{\alpha_d}{\beta_d})_{DNS}$ when $n < \frac{\mathcal{D}(\bA)}{d}$. 
\end{corollary}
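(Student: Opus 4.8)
The plan is to obtain both ratios by direct substitution of the parameter specifications established in Theorems~\ref{thm:2} and~\ref{thm:3}, and then to compare them by elementary algebra. First I would recall that Theorem~\ref{thm:2} asserts that UNS optimizes the generalized GRL objective with $\alpha_d = \pi_d(C,\bA)$ and $\beta_d = KC/n$; dividing the former by the latter immediately yields $(\alpha_d/\beta_d)_{UNS} = \pi_d(C,\bA)\,n/(KC)$. Likewise, Theorem~\ref{thm:3} gives DNS the specification $\alpha_d = \pi_d(C,\bA)$ and $\beta_d = KCd/\mathcal{D}(\bA)$, so that $(\alpha_d/\beta_d)_{DNS} = \pi_d(C,\bA)\,\mathcal{D}(\bA)/(KCd)$. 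This establishes the two displayed identities with no further work.

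For the inequality, I would observe that both expressions carry the common positive factor $\pi_d(C,\bA)/(KC)$: it is positive because $K,C \geq 1$ and $\pi_d(C,\bA) > 0$ for every distance $d \leq d_{max}$ that is realizable within a $C$-length random walk. Cancelling this factor from both sides of $(\alpha_d/\beta_d)_{UNS} < (\alpha_d/\beta_d)_{DNS}$ reduces the claim to $n < \mathcal{D}(\bA)/d$, which is precisely the stated condition; and since the cancelled factor is strictly positive, this reduction is an equivalence, so the condition is in fact both necessary and sufficient for $d$ with $\pi_d(C,\bA) > 0$.

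There is no genuine obstacle here --- the statement is a direct corollary of the two preceding theorems --- so the only point meriting a remark is the positivity of $\pi_d(C,\bA)$, needed to justify dividing through by it; when $\pi_d(C,\bA) = 0$ the cohesion weight $\alpha_d$ vanishes and the ratio is degenerate, making the comparison vacuous for that particular $d$. I would close by noting that, in light of the earlier observation that $\mathcal{D}(\bA) \gg n-1$ for sparse real-world graphs, the condition $n < \mathcal{D}(\bA)/d$ is satisfied for all small $d$ on such graphs --- exactly the regime where the cohesion weights $\alpha_d$ are largest --- which is the substantive content that this corollary is meant to surface.
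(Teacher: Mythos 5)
Your proof is correct and follows exactly the route the paper intends: the two ratios are immediate substitutions from the specifications in Theorems~\ref{thm:2} and~\ref{thm:3}, and the inequality follows by cancelling the common positive factor $\pi_d(C,\bA)/(KC)$. The remark on the positivity of $\pi_d(C,\bA)$ (and the degeneracy when it vanishes) is a sensible precaution that the paper leaves implicit.
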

The above  corollary helps us understand useful operating points of DNS. Since  $\mathcal{D}(\bA) \gg n$ for most graphs, the $(\alpha/\beta)$ ratio is generally always larger for DNS than UNS. We have also empirically observed that the $(\alpha/\beta)$ ratio increases with $C$ for all graphs considered in this work. As a result, DNS operates better at lower values of $C$ since $(\alpha/\beta)$ ratios remain small for moderate values of $d$.
Additionally, we have empirically observed that DNS works better for sparse graphs since there is a larger spread in the network distances across all node-pairs, making it possible for DNS to maximize separation in the embedding space for distant nodes.

The embedding space learned by DNS indeed preserves the graph-based similarity structure among nodes. Formally, Theorem \ref{thm:4} shows that the pairwise similarity in embedding space is a function of node-pair distance and for negative node-pairs, the similarity is inversely proportional to the distance. 

\begin{theorem}
Let the average pairwise similarity for any two nodes at distance d be given by $\xi_d$ = $\frac{1}{|\cS_{d}|}\sm(i, j) = \frac{1}{|\cS_{d}|} \sum_{(i,j)\in \cS_d} \sigma (z_i^Tz_j)$.  
We can then show that DNS generates embeddings such that $\xi_d$ is a function of $d$ and for $d > C$, $\xi_{d}$ is inversely proportional to $d$.
\label{thm:4}
\end{theorem}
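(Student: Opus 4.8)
The plan is to read off the $\alpha,\beta$ profile of DNS from Theorem~\ref{thm:3} and then run a first-order (equilibrium) analysis of the generalized objective $E(\cZ)$. The crucial structural observation is that a $C$-length random walk on $\cG$ starting at any node can only reach nodes within graph distance $C$, so $\pi_d(C,\bA)=0$, and hence $\alpha_d=0$, for every $d>C$. Substituting the DNS specification into $E(\cZ)=\sum_d\sum_{(i,j)\in\cS_d}[\alpha_d\log\sigma(\bz_i^T\bz_j)+\beta_d\log\sigma(-\bz_i^T\bz_j)]$ therefore splits it into a ``near'' part over $\cS_d$ with $d\le C$, where each pair carries both a cohesion term $\alpha_d\log\sigma(\bz_i^T\bz_j)$ and a separation term $\beta_d\log\sigma(-\bz_i^T\bz_j)$, and a ``far'' part over $\cS_d$ with $d>C$, where each pair carries only the separation term with weight $\beta_d=KCd/\mathcal{D}(\bA)$ that grows linearly in $d$.

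Next I would write the stationarity condition of $E$ at an optimum $\cZ^\star$. Using $\tfrac{d}{ds}\log\sigma(s)=1-\sigma(s)$ and $\tfrac{d}{ds}\log\sigma(-s)=-\sigma(s)$, differentiating with respect to $\bz_i$ gives, for each node $i$,
\begin{align}
\sum_{j:\, d(i,j)\le C}\alpha_{d(i,j)}\bigl(1-\sigma(\bz_i^{T}\bz_j)\bigr)\bz_j
\;=\; \sum_{j\in\cV}\beta_{d(i,j)}\,\sigma(\bz_i^{T}\bz_j)\,\bz_j. \nonumber
\end{align}
Because the embeddings are shared across all pairs and live in $\mathbb{R}^l$ with $l\ll n$, the dot products $\bz_i^{T}\bz_j$ over the ``far'' classes cannot all be pushed to $-\infty$ simultaneously; I would make this explicit by modelling embeddability as a convex resource constraint on the similarity profile (e.g.\ a bound on the Gram energy $\sum_i\|\bz_i\|^2\le R$) with Lagrange multiplier $\lambda$, which adds a $\lambda\bz_i$ term on the right. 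Then I would pass to the aggregate/mean-field level, which is also what makes ``$\xi_d$ is a function of $d$'' precise: since the DNS coefficients depend on a pair only through $d$, the pairs inside a fixed $\cS_d$ are exchangeable (exactly so on a vertex-transitive $\cG$, approximately on a large graph after averaging), so $\sigma(\bz_i^{T}\bz_j)$ concentrates around a class mean $\xi_d$ with $\bz_i^{T}\bz_j\approx\sigma^{-1}(\xi_d)$. Taking inner products with $\bz_i$, summing over $(i,j)\in\cS_d$, and normalising collapses the vector condition to a per-distance balance of the schematic form
\begin{align}
\alpha_d\,\mathbf{1}[d\le C]\,(1-\xi_d)\;=\;\beta_d\,\xi_d \;+\; \lambda\,\rho_d, \nonumber
\end{align}
where $\rho_d$ is the (mean-field, hence essentially $d$-independent in the operative regime) marginal use of the shared resource. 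Solving shows $\xi_d$ is determined by $d$ alone; for $d\le C$ it equals $\alpha_d/(\alpha_d+\beta_d+\lambda\rho)$, and for $d>C$ the cohesion term is absent, leaving $\beta_d\,\xi_d\approx-\lambda\rho\,$, i.e.\ $\xi_d\propto 1/\beta_d$. Since $\beta_d=KCd/\mathcal{D}(\bA)\propto d$, this gives $\xi_d\propto 1/d$ for $d>C$, as claimed.

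The main obstacle is the mean-field reduction: justifying that same-distance similarities concentrate around a common $\xi_d$ and that the marginal-resource factor $\rho_d$ is (to leading order) independent of $d$. On a general graph neither holds exactly, so the cleanest honest route is to state the result under a vertex-transitivity / approximate-homogeneity assumption on $\cG$, or to read ``$\propto$'' in the asymptotic, large-graph sense; the implicit linearization ($\sigma^{-1}(\xi_d)$ treated proportionally to $\xi_d$) is harmless precisely where the claim is made, namely large $d$ with $\xi_d$ small. A secondary point, easily dispatched with a one-line remark, is that the resource constraint is genuinely needed: without it the ``far'' part of $E$ is unbounded (each $\log\sigma(-s)\to 0$ as $s\to-\infty$) but cannot be attained on all of $\cS_d$, $d>C$, at once in finite dimension, which is exactly why a finite-capacity equilibrium — and hence a nontrivial $\xi_d$ — arises.
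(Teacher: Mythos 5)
Your proposal reaches the paper's conclusion by a genuinely different route. The paper never writes down first-order conditions: it takes the \emph{value} of the DNS loss $\cQ$ at the optimum, Taylor-linearizes $\log\sigma(\bz_i^T\bz_j)\approx\sigma(\bz_i^T\bz_j)-1$ on near pairs and $\log\sigma(-\bz_i^T\bz_j)\approx-\sigma(\bz_i^T\bz_j)$ on far pairs, and then algebraically solves the resulting single scalar identity for $\xi_d$; for $d>C$ the coefficient multiplying $|\cS_d|\,\xi_d$ is exactly $\beta_d=KCd/\cD(\bA)$, whence $\xi_d\propto 1/d$ once the remaining bracketed terms are treated as independent of $d$. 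You instead differentiate the objective, close the system with a capacity constraint, and read $\xi_d\propto 1/\beta_d$ off a per-distance mean-field balance. The two arguments share the same two load-bearing facts --- $\pi_d(C,\bA)=0$ for $d>C$, so far pairs feel only the separation term, and $\beta_d\propto d$ --- and each is non-rigorous at the analogous step: the paper silently assumes the leftover terms in its expression for $\xi_d$ are $d$-independent, while you explicitly assume same-distance similarities concentrate and that the resource marginal $\rho_d$ is $d$-independent. Your version is more principled in that it uses all the stationarity equations rather than one scalar identity and makes the hidden homogeneity assumption visible; the paper's version is shorter and stays entirely within the unconstrained objective.

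Two caveats on your route. Projecting the gradient onto $\bz_i$ yields far-pair terms of the form $\beta_d\,\sigma(\bz_i^T\bz_j)\,\bz_i^T\bz_j\approx\beta_d\,\xi_d\,\sigma^{-1}(\xi_d)$, and for small $\xi_d$ one has $\sigma^{-1}(\xi_d)\approx\log\xi_d$, which is not a constant multiple of $\xi_d$; taken literally your balance gives $\xi_d|\log\xi_d|\propto 1/d$ rather than $\xi_d\propto 1/d$. The paper sidesteps this because it balances loss values, where the far-pair contribution is $\beta_d\,\sigma(\bz_i^T\bz_j)$ with no extra logit factor. Also, the norm constraint with multiplier $\lambda$ is not part of the stated objective, so strictly you are analyzing a perturbed problem. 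Neither issue is worse than the approximations the paper itself makes, but to match the stated proportionality exactly you should balance loss values rather than projected gradients.
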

\begin{proof}
Provided in the Supplementary material.
\end{proof}

While negative sampling with linearly proportional distances is a simple heuristic, we can have a more general form of DNS by adding super-linearity or sub-linearity in the negative sampling probability which is,
$
    P_n(r|u) \propto (d(r, u)) ^ {\gamma}.
$
Here $\gamma$ is a hyper-parameter and we can vary $\gamma$ based on the properties of the dataset.

\begin{table*}[t]
  \caption{Summary statistics of the datasets we used for experiments where we choose the largest connected components from 390 components for CiteSeer and 78 components for Cora (PubMed and PPI are single connected component graph). We represent the largest component as $\mathcal{G=(V, E)}$ and the set of unique class labels as $y$. PPI dataset has 121 classes with binary labels. The average node degree is represented by $\overline{deg}$, and the maximum node pair distance is denoted by $d_{MAX}$.}
  \label{table:datasets}
  \centering
     \begin{tabular}{cccccccc}
    \toprule
    Stat                         & CiteSeer         & Cora       & PubMed     & PPI         & Syn. Sparse      & Syn. Moderate      & Syn. Dense \\
    \midrule
    $|\mathcal{V}|$              & $ 2,120 $       & $2,485$   & $19,717$  & $2,339$   & $2,000$        & $2,000$          & $2,000$        \\
    $|\mathcal{E}|$              & $ 7,358 $       & $ 10,138 $   & $88,648$  & $65,430$   & $4,982$        & $12,062$          & $30,472$        \\
    $|y|$                        & $ 6 $       & $7$   & $3$  & $121$   & $7$        & $5$          & $4$        \\
    $d_{MAX}$                    & $ 28 $       & $19$   & $18$  & $7$   & $106$        & $82$          & $4$        \\
    $\overline{deg}$             & $3.47$       & $4.08$   & $4.5$  & $27.97$   & $2.49$        & $6.03$          & $15.24$        \\
    \bottomrule
  \end{tabular}
  \vspace{-2ex}
\end{table*}

\paragraph{Complexity analysis:}
DNS requires pairwise shortest distance computation for all node pairs as a preprocessing step. The time complexity to compute all-pair shortest path lengths is $\Theta(nm + n^{2}log n)$ \cite{dijkstra1959note, 715934} and the space complexity is $\cO(n^{2})$ to store the normalized probabilities for all node pairs. While there are efficient techniques to precompute shortest distances that can be coupled with our approach \cite{6690045,cohen2003reachability, jin20093}, 
our basic DNS based model would still require $\cO(n^{2})$ space to store the normalized probabilities, which is not scalable.


\paragraph{Scalable DNS Approach:}
We develop an approach that reduces the space complexity of our DNS model without increasing the training time complexity. In this approach, we compress the pairwise distance matrix during preprocessing and decompress it back during training. We use some landmark nodes to store the shortest distances of node-to-landmark and landmark-to-landmark at the preprocessing step; subsequently, we decompress the information to reconstruct the node pairwise shortest distance matrix for minibatch nodes at the training phase. 

\begin{figure}
    \centering
    \includegraphics[width=7cm,height=7cm,keepaspectratio]{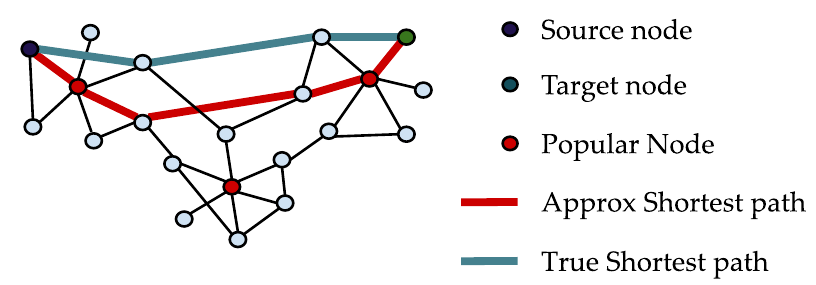}
    \caption{DNS-approx heuristic.}
    \label{fig:heu1}
    \vspace{-1ex}
\end{figure}

\begin{figure}
    \centering
    \includegraphics[width=7cm,height=7cm,keepaspectratio]{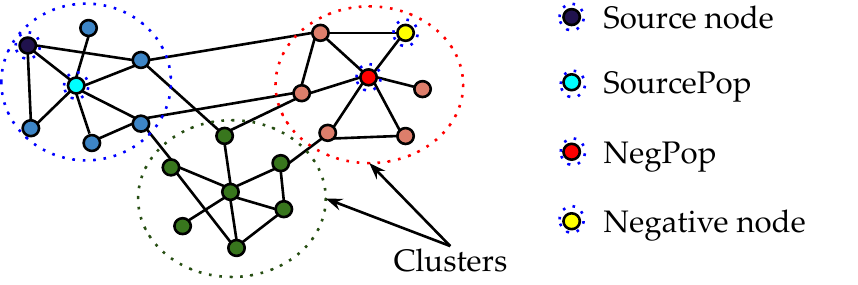}
    \vspace{-2ex}
    \caption{DNS-scalable heuristic.}
    \label{fig:heu2}
    \vspace{-3ex}
\end{figure}




Similar to the core-net approach, we select the landmark nodes as popular nodes that have higher degrees than a degree-threshold. We hypothesize our heuristic based on the intuition that the higher degree nodes are more likely on the shortest path of any two nodes. Consequently, we store the shortest path distances of all nodes to their closest popular nodes in a node-to-popular (N2P) vector and the shortest distances between popular nodes in a popular-to-popular (P2P) matrix. At the training phase, we reconstruct the distance profile of every minibatch node (source) by adding the N2P distance to its popular node, the P2P distances across popular nodes, and the N2P distance from popular nodes to target nodes (see Figure \ref{fig:heu1}). We denote this heuristic as DNS-approx as it is approximating the true shortest path between node-pairs with paths going through popular nodes. 

DNS-approx requires $\cO(p^{2} + n)$ space, where $p$ is the number of popular nodes (in general, we select 10\% nodes as popular nodes), and $n$ is the number of nodes in the graph. The degree-threshold, which decides the popular nodes, controls this space requirement of this heuristic as a higher degree threshold selects a lower number of popular nodes and vice versa. Although DNS-approx can reduce the space complexity by a large margin, we still need to reconstruct the approximate distance profile of a minibatch node with all other nodes, which costs us extra runtime during model training. To address this concern, we need a solution that does not need to reconstruct the approximate distances of a minibatch node with all other nodes and still perform distance-aware negative sampling. 


We propose another scalable DNS approach, DNS-scalable, which only uses the popular-to-popular matrix to sample negative-popular (NegPop) nodes where the probability of selecting a negative-popular node of another popular node is linearly proportional to its distance from the popular node of the source node (SourePop) node. 
Every NegPop node corresponds to a cluster of negative nodes based on the N2P vector mapping. Finally, to sample negative nodes, we uniformly sample a node from the cluster of the selected NegPop node (see Figure \ref{fig:heu2}).

\section{Evaluation setup}
\label{sec:experiments}

\paragraph{Benchmark datasets:}
In our experiments, we use four benchmark datasets for node classification: CiteSeer, Cora, PubMed, and PPI \cite{DBLP:journals/corr/YangCS16, DBLP:journals/corr/ZitnikL17} where CiteSeer, Cora, and PubMed are citation-networks and PPI is protein-protein interaction network. 
In the citation network, the nodes correspond to articles of different subjects, whereas the edges correspond to citations between those articles; consequently, the node prediction task on this network is to predict the article subject. Meanwhile, the physical interaction between different proteins with their defined roles (cellular functions) on a specific human tissue 
\footnote{Instead of working on multiple graphs, we randomly select one PPI network corresponding to a specific human tissue.} 
is represented using a protein-protein interaction network where the classification task is to predict the protein roles. 

Since our goal is to find meaningful node-embeddings of a graph that only reflect the graph structure information rather than the node feature information, we do not use any node features of these benchmark datasets for our experiments.
These datasets have multiple small disconnected components with the largest connected component that describes the graph structure properly. Consequently, our proposed DNS sampler requires a definite distance between any node pairs; therefore, we focus our experiments on the largest connected component of these networks. 

Table \ref{table:datasets} summarizes all the datasets with significant statistics where we see CiteSeer, Cora, and PubMed are sparse datasets with average degrees from 3.47 to 4.5, whereas PPI is quite dense with average degree 27.97. 

\textbf{Synthetic datasets.} We further analyze the representation quality of DNS-based GRL models with varying graph density using three synthetic datasets- sparse, moderate, and dense.
We provide in detail description of synthetic data generation in the supplementary material.

\paragraph{Baseline models:}
For baseline models, we choose DeepWalk \cite{perozzi2014deepwalk}, node2vec \cite{grover2016node2vec}, LINE \cite{tang2015line}, GAE \cite{kipf2016variational} and VGAE \cite{kipf2016variational} (details in Section \ref{subsection:baselines}). 
We implement the Unigram Negative Sampler and the Distance-aware Negative Sampler on DeepWalk and node2vec models. For the DeepWalk model, we also implement the UNS-deg formulation described in \ref{subsection:Negative_Sampling}. Further on the DeepWalk model, we also implement the DNS-approx and the DNS-scalable method. Note that our DNS sampler does not depend on the DeepWalk method, and we can pair DNS with any other GRL approach like GraphSAGE and GCN. We describe the hyperparameter settings
and the node classification setup in details in the supplementary materials.

\paragraph{Evaluation Metrics:} We use F1-Macro to report the classification accuracy on node classification tasks. Moreover, we visualize the node representations using standard visualization tools like t-SNE, which is a dimensionality reduction technique that preserves local similarities.

\begin{table*}[t]
  \caption{The summary of the model performances in terms of downstream node classification F1-macro score. We highlight the best score for each dataset. For Cora, CiteSeer, PPI, and PubMed, we choose context window 4 to report the results. Both DW-DNS-approx and DW-DNS-scalable use 10\% nodes as popular nodes. We run each model 5 times and report the performances in terms of mean and standard deviations. }
  \label{table:f1-benchmark}
  \centering
  \begin{tabular}{lllll}
    \toprule
    & \multicolumn{4}{c}{Dataset}                   \\
    \cmidrule(r){2-5}
    Models     & CiteSeer     & Cora     & PubMed     & PPI \\
    \midrule
    GAE              & $0.40 \pm 0.01$       & $0.61 \pm 0.02$   & $0.59 \pm 0.02$  & $0.68\pm  0.00$        \\
    VGAE             & $0.39 \pm 0.02$       & $0.58 \pm 0.02$   & $0.60 \pm 0.02$  & $0.67\pm  0.00$        \\
    LINE             & $0.37 \pm 0.05$       & $0.52 \pm 0.05$   & $0.47 \pm 0.07$  & $0.68\pm  0.00$        \\
    node2vec-UNS     & $0.43 \pm 0.02$       & $0.54 \pm 0.01$   & $0.56 \pm 0.01$  & $0.63\pm  0.00$        \\
    \textbf{node2vec-DNS}     & $0.52 \pm 0.01$       & $0.62\pm  0.00$   & $0.58 \pm 0.01$  & $0.64\pm  0.00$        \\
    DeepWalk-UNS     & $0.51 \pm 0.01$       & $0.67\pm  0.00$   & $0.58\pm  0.00$  & $\mathbf{0.69 \pm  0.00}$        \\
    DeepWalk-UNS-deg & $0.47\pm  0.00$       & $0.65 \pm 0.01$   & $0.54\pm  0.00$  & $0.68\pm  0.00$        \\
    \textbf{DeepWalk-DNS}     & $\mathbf{0.61}\pm \mathbf{0.01}$       & $\mathbf{0.72 \pm 0.01}$   & $ 0.63\pm  0.00$  & $\mathbf{0.69\pm  0.00}$        \\
    \textbf{DW-DNS-approx} & $0.59 \pm 0.01$ & $0.71 \pm 0.01$ & $\mathbf{0.64 \pm  0.00}$ & $\mathbf{0.69 \pm  0.00}$ \\
    \textbf{DW-DNS-scalable} & $0.57 \pm 0.01$ & $0.70 \pm 0.01$ & $0.63 \pm 0.01$ & $0.68 \pm  0.00$ \\
    \bottomrule
  \end{tabular}
\end{table*}
\section{Results}
\begin{figure*}[t]
\begin{subfigure}{.33\textwidth}
  \centering
  \includegraphics[width=.99\linewidth]{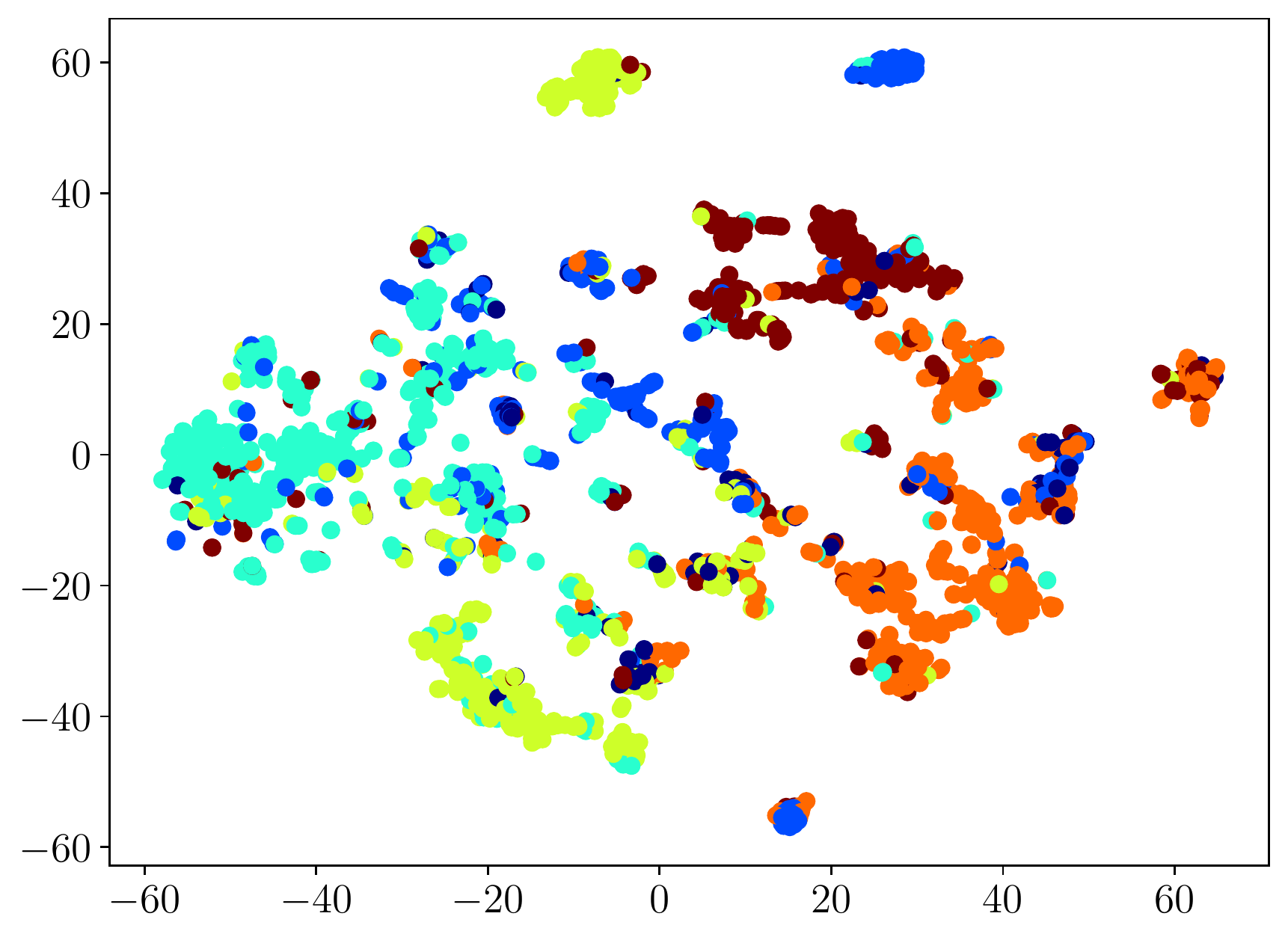}
  \caption{DeepWalk-DNS}
  \label{fig:tsne-DNS}
\end{subfigure}
\begin{subfigure}{.33\textwidth}
  \centering
  \includegraphics[width=.99\linewidth]{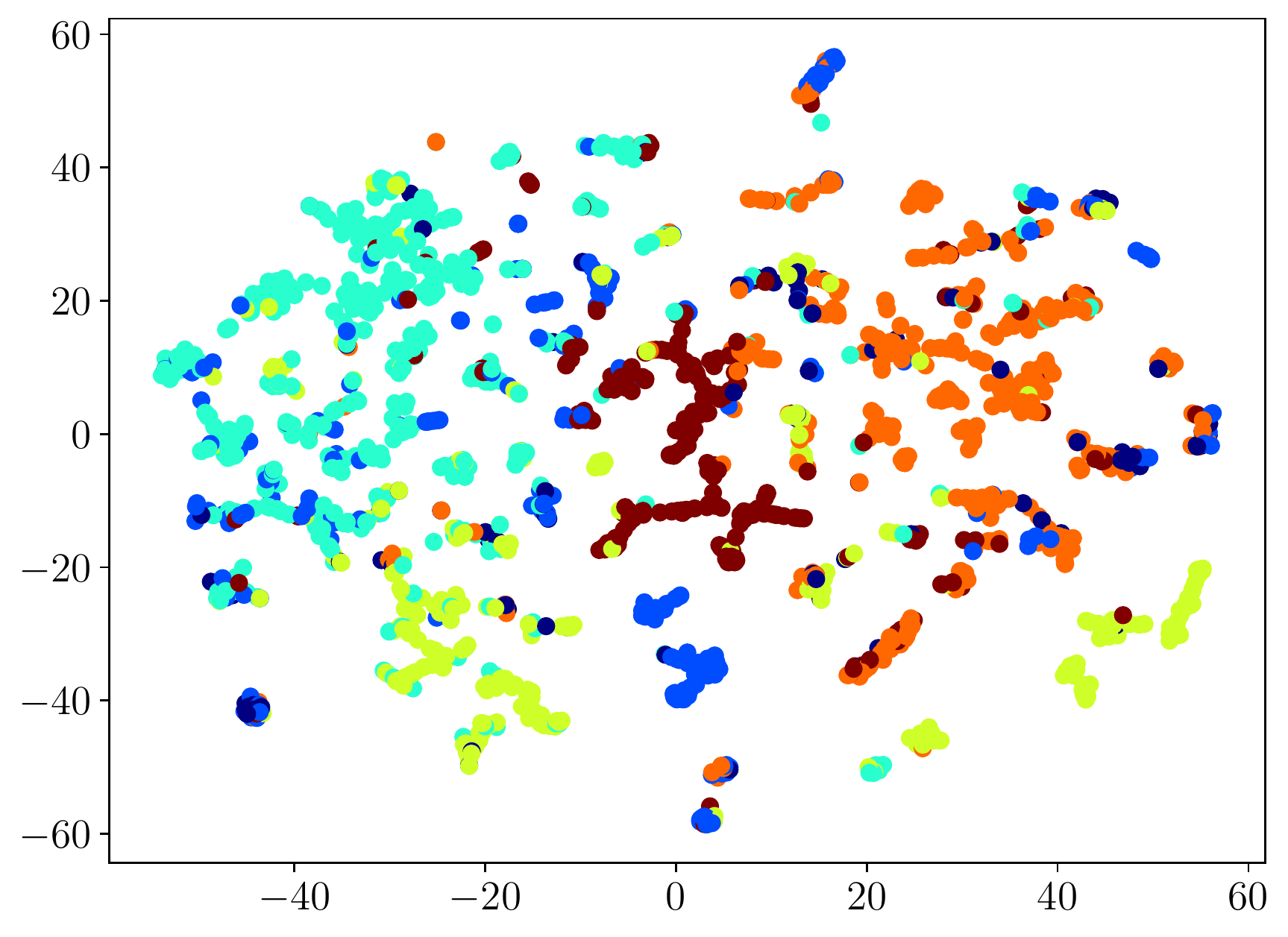}
  \caption{DeepWalk-UNS}
  \label{fig:tsne-UNS}
\end{subfigure}
\begin{subfigure}{.33\textwidth}
  \centering
  \includegraphics[width=.99\linewidth]{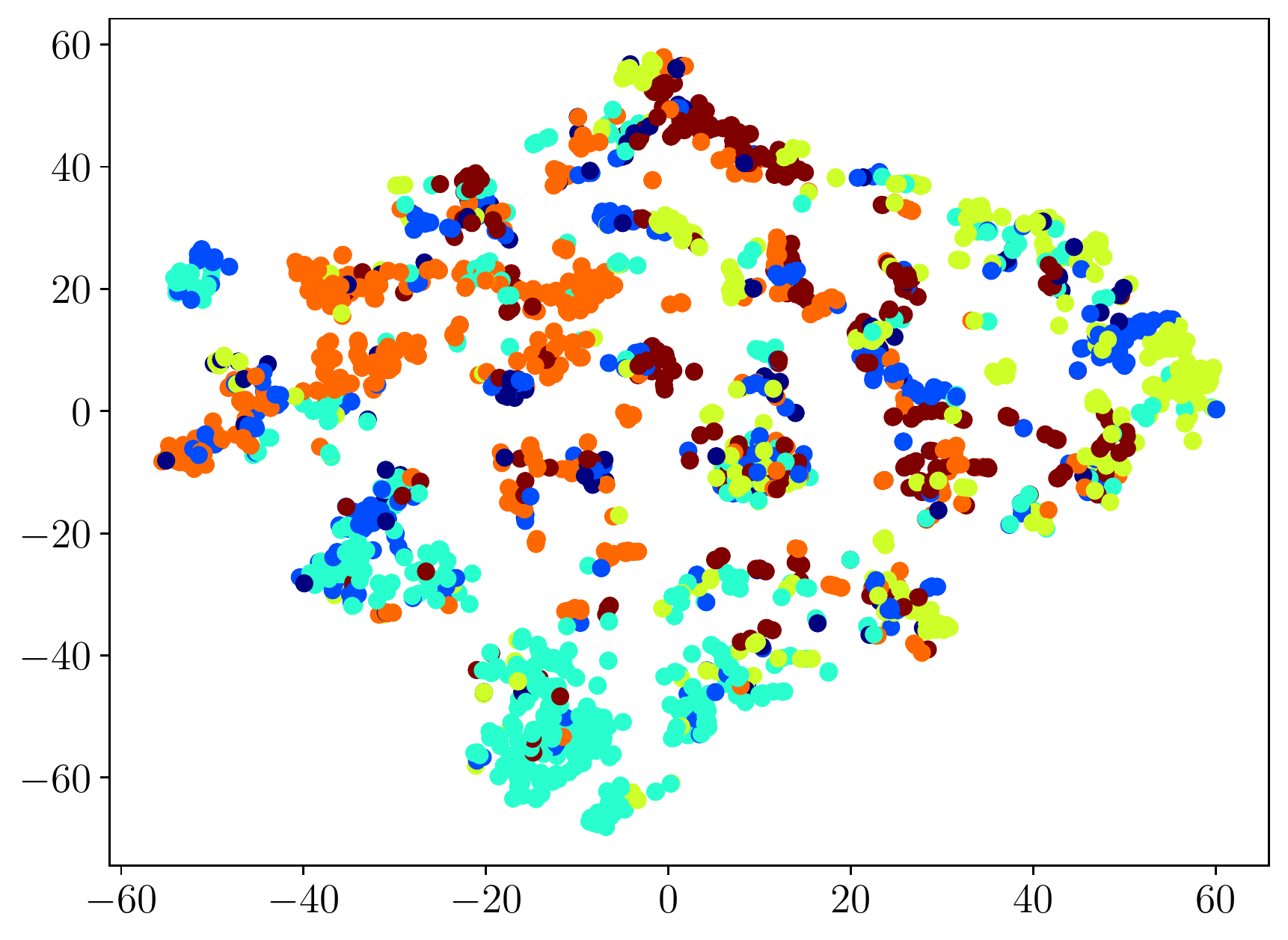}
  \caption{GAE}
  \label{fig:tsne-GAE}
\end{subfigure}
\caption{
t-SNE plot for embeddings generated by DeepWalk with Distance-aware Negative Sampler model (DeepWalk-DNS), DeepWalk with Unigram Negative Sampler model (DeepWalk-UNS), and Graph Auto Encoder model (GAE) on CiteSeer dataset.}
\label{fig:tsne-citeseer}
\vspace{-2ex}
\end{figure*}

\begin{figure*}[t]
\begin{subfigure}{.33\textwidth}
  \centering
  \includegraphics[width=.99\linewidth]{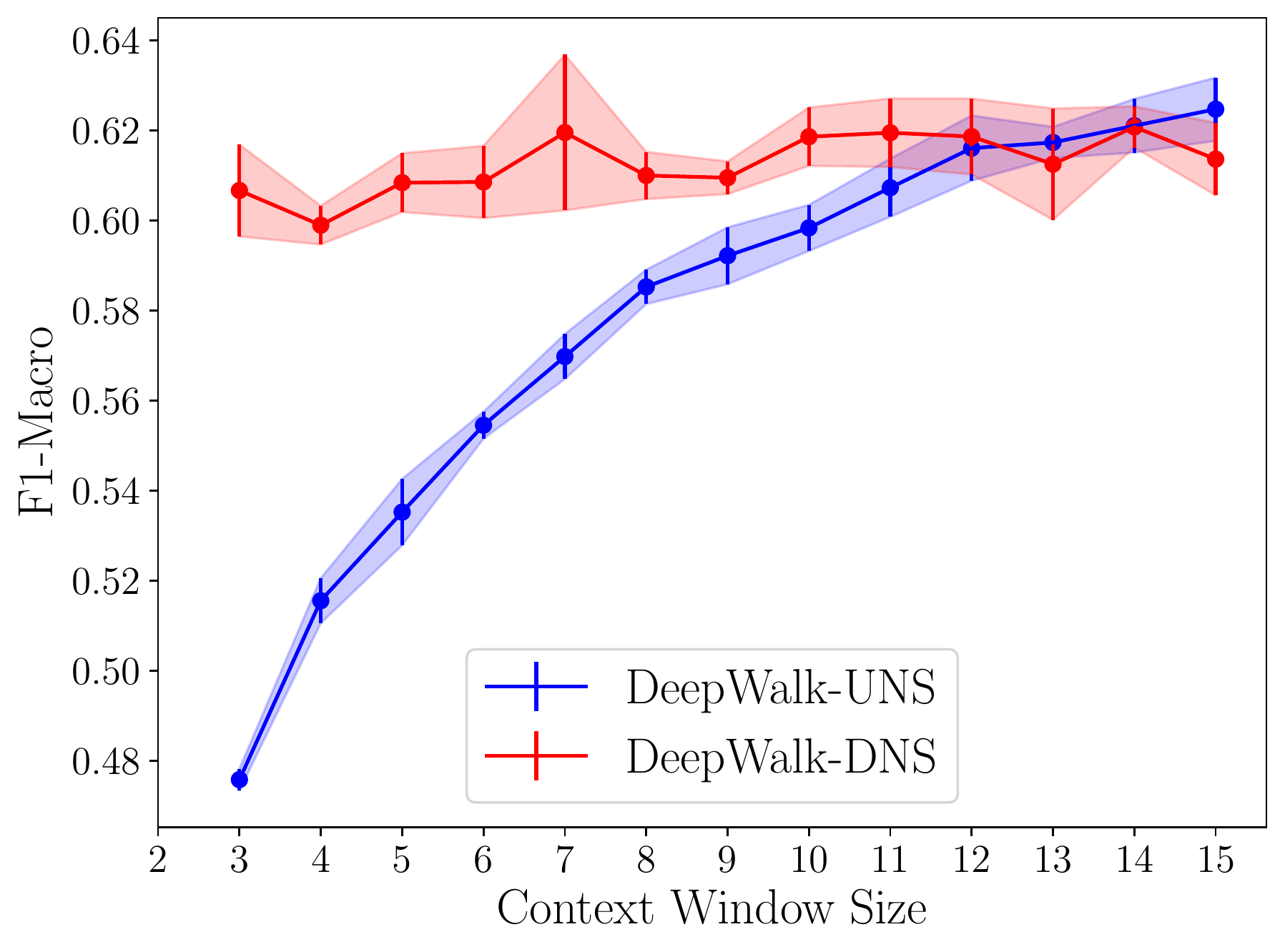}
  \caption{CiteSeer}
  \label{fig:f1-context-citesser}
\end{subfigure}
\begin{subfigure}{.33\textwidth}
  \centering
  \includegraphics[width=.99\linewidth]{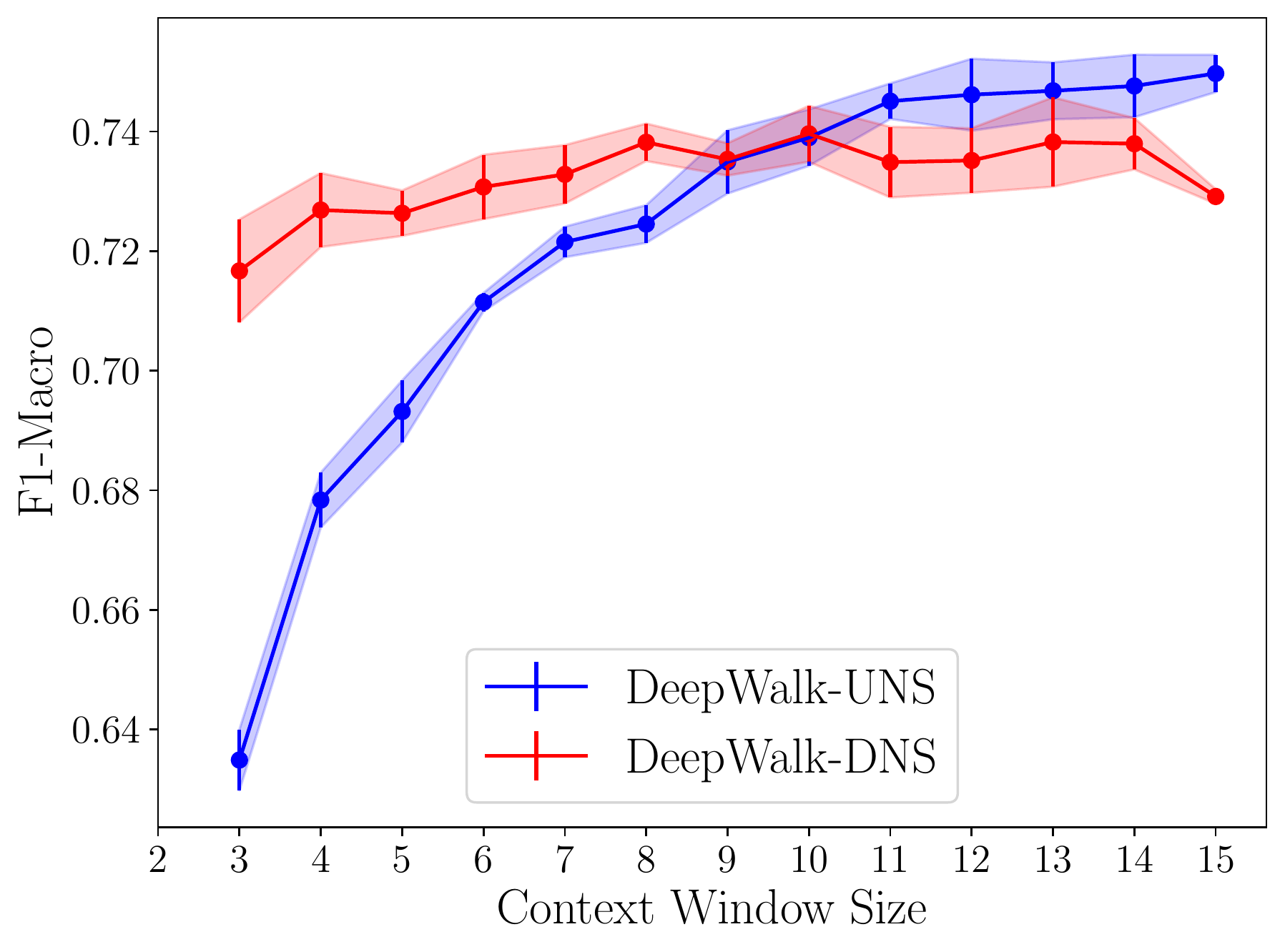}
  \caption{Cora}
  \label{fig:f1-context-cora}
\end{subfigure}
\begin{subfigure}{.33\textwidth}
  \centering
  \includegraphics[width=.99\linewidth]{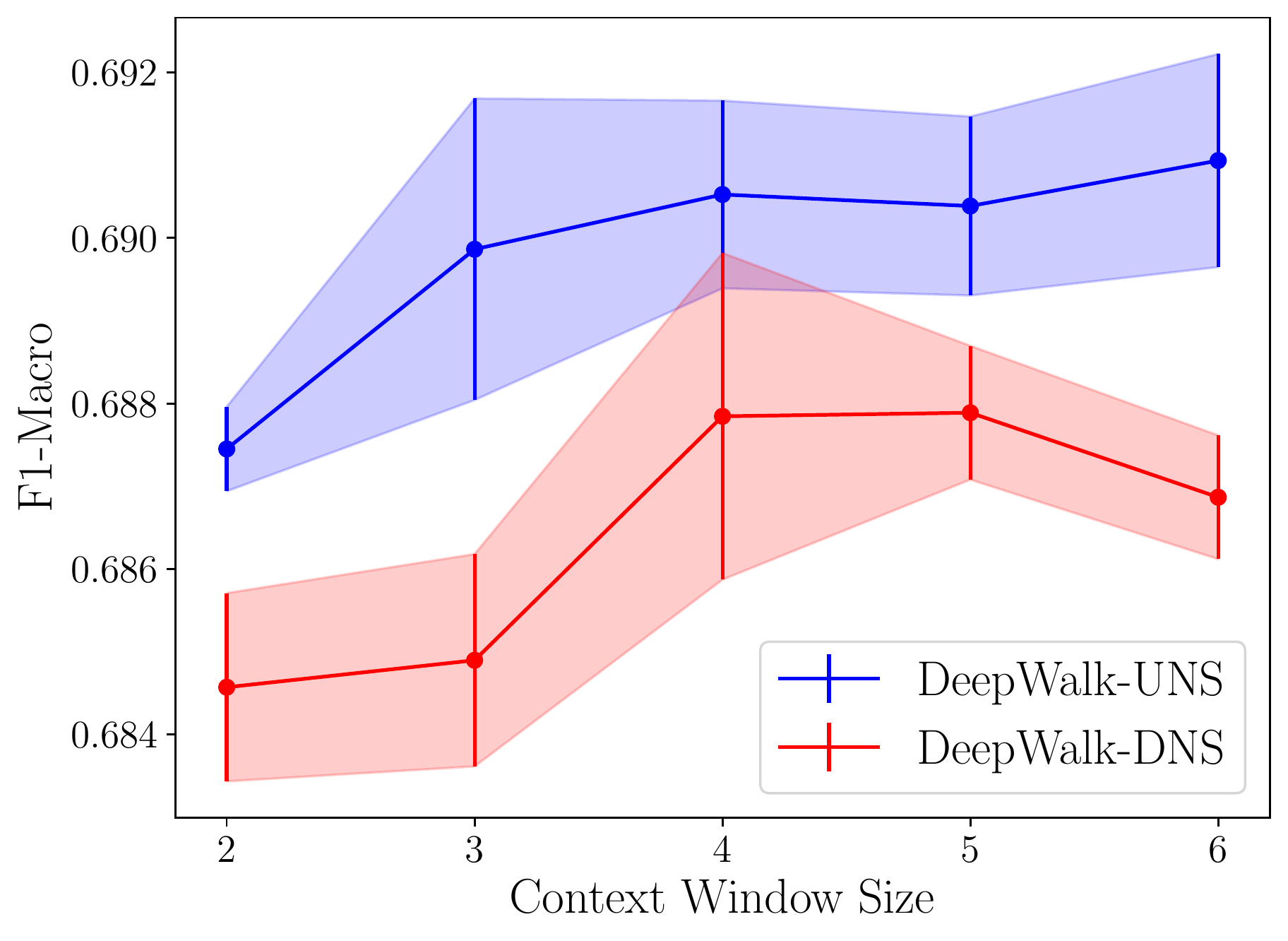}
  \caption{PPI}
  \label{fig:f1-context-ppi}
\end{subfigure}
\caption{
Node classification performance (measured by F1-Macro score) plot with varying context window on CiteSeer, Cora, and PPI dataset. DeepWalk with Distance-aware Negative Sampler (DeepWalk-DNS) and with Unigram Negative Sampler (DeepWalk-UNS) are the competing models.}
\label{fig:f1-context}
\end{figure*}

\begin{figure*}[t]
\begin{subfigure}{.33\textwidth}
  \centering
  \includegraphics[width=.99\linewidth]{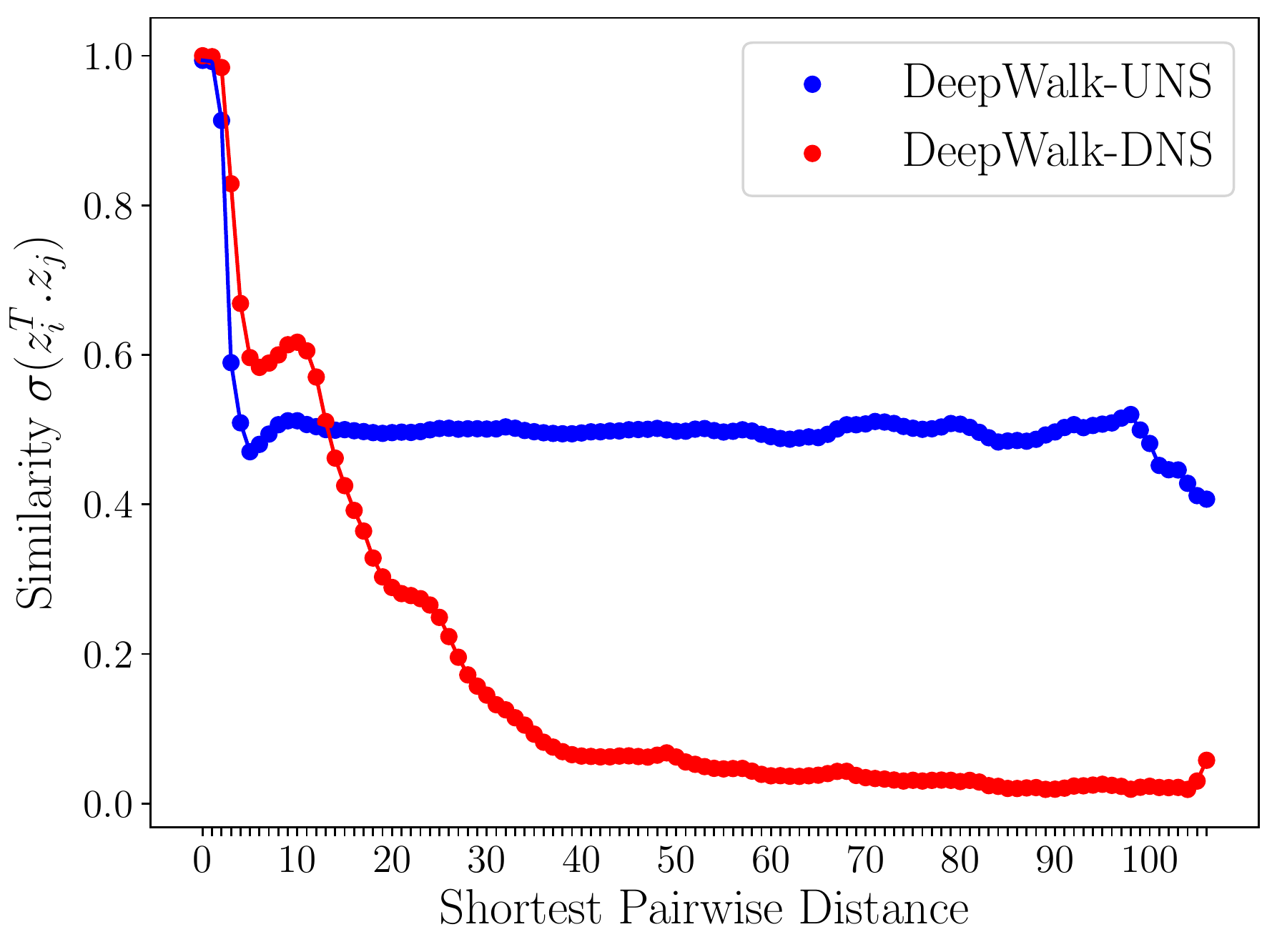}
  \caption{Synthetic Sparse}
  \label{fig:sim-sparse}
\end{subfigure}
\begin{subfigure}{.33\textwidth}
  \centering
  \includegraphics[width=.99\linewidth]{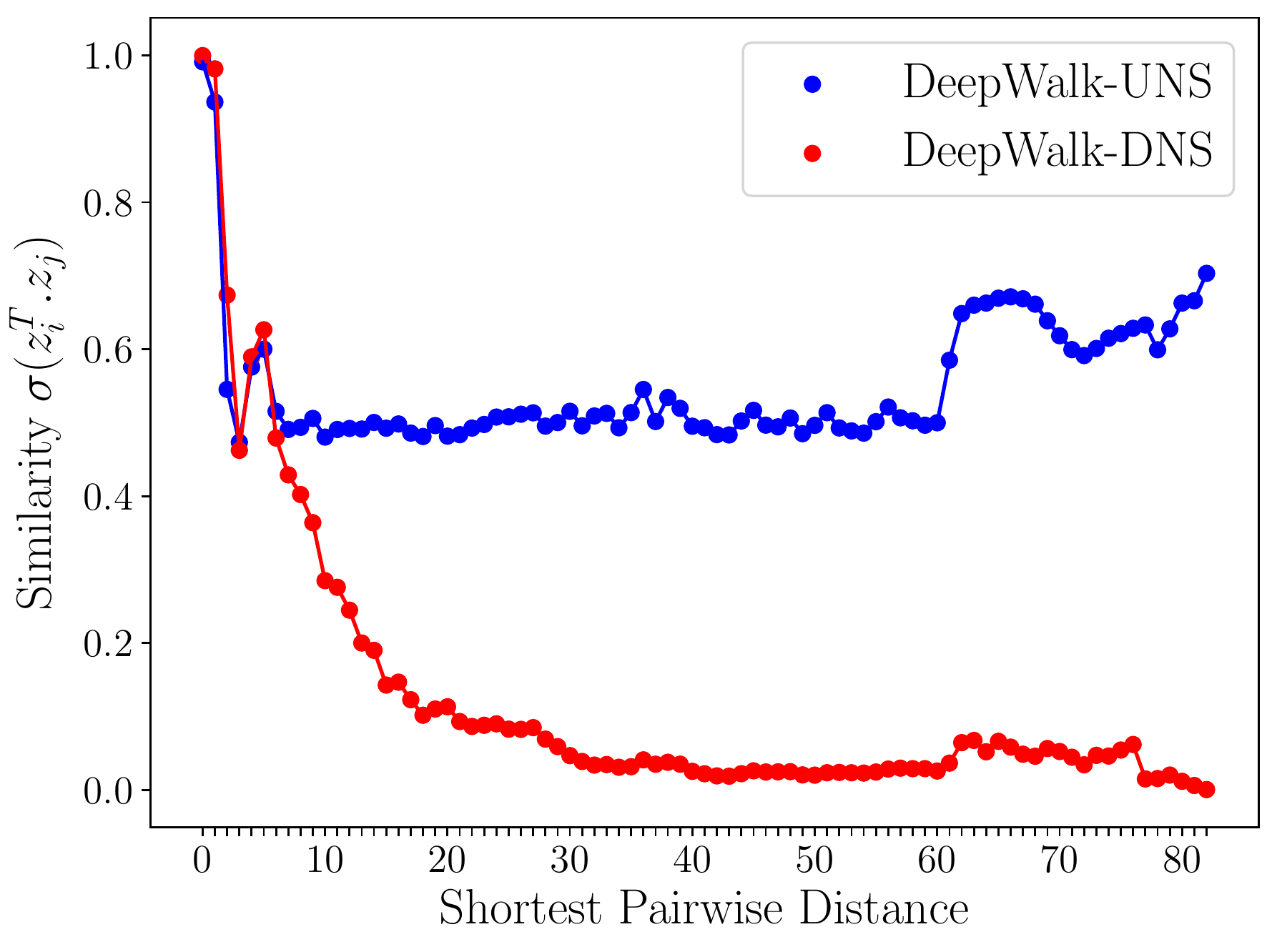}
  \caption{Synthetic Moderate}
  \label{fig:sim-moderate}
\end{subfigure}
\begin{subfigure}{.33\textwidth}
  \centering
  \includegraphics[width=.99\linewidth]{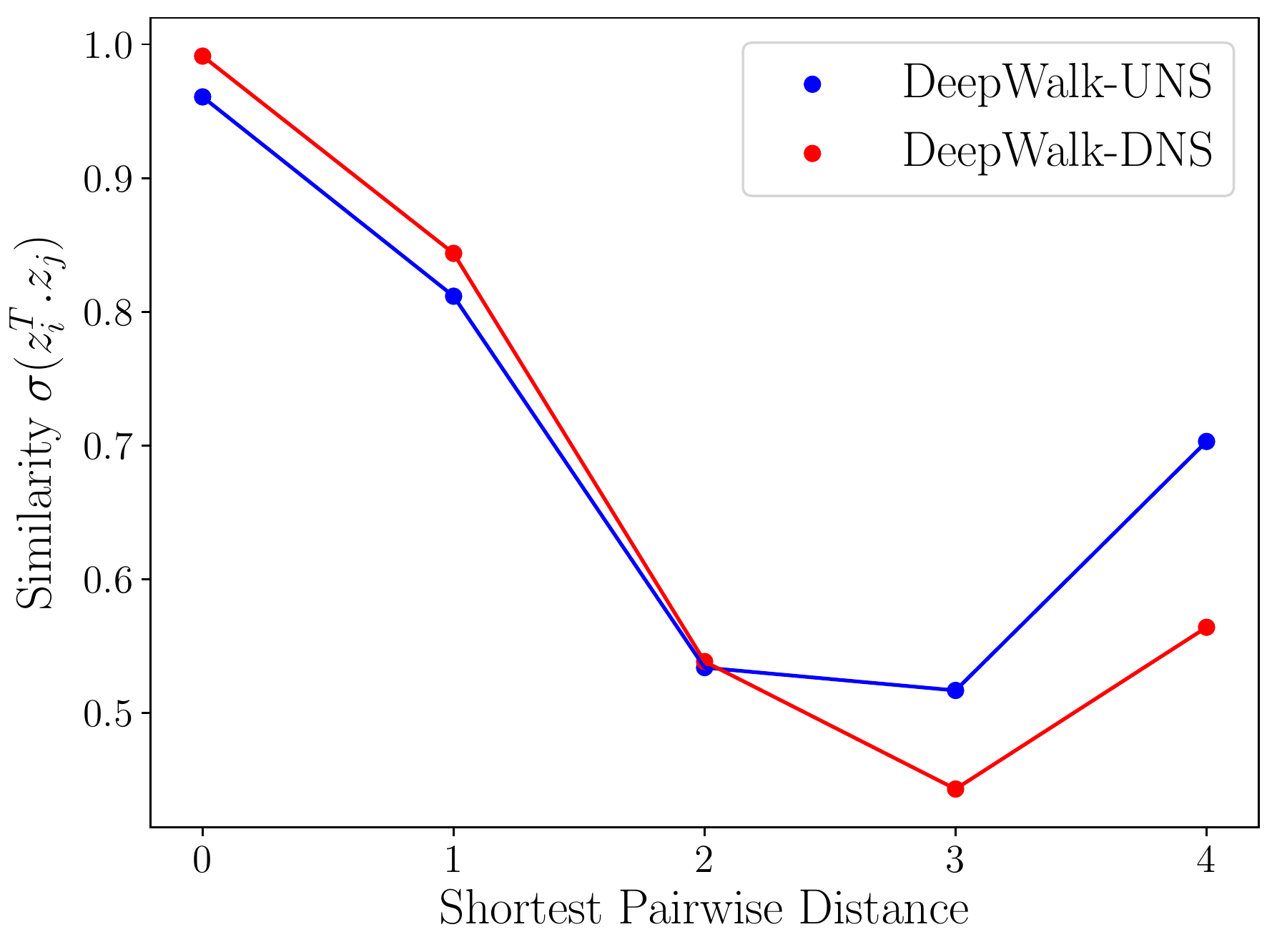}
  \caption{Synthetic Dense}
  \label{fig:sim-dense}
\end{subfigure}
\caption{
Average Pairwise Similarity of all node-pairs in embedding space where similarity = $\sigma(z_i^Tz_j)$ for $z_i$ and $z_j$ node embeddings. Embeddings generated by DNS based GRL model show minimum similarity for distant nodes with the similarity decreasing with increasing distance $d$.
}
\label{fig:sim-synthetic}
\end{figure*}

Table \ref{table:f1-benchmark} compares the performance of our proposed DNS-based GRL models (DeepWalk-DNS and node2vec-DNS) and DNS-based scalable approaches (DW-DNS-approx and DW-DNS-scalable) with other baseline models on the benchmark node classification tasks. From the results, DNS-based models show a significant improvement in the F1-Macro score than that of the traditional sampling-based models across all benchmark datasets. Further, our scalable DNS models show comparable performance as the basic DNS models. The t-SNE plot (Figure  \ref{fig:tsne-citeseer}) shows that the DNS-based model learns more meaningful feature visualizations with better cohesion and separation of the classes (shown using colors)  than that of the other models for CiteSeer dataset.

To measure the impact of varying context windows, Figure \ref{fig:f1-context} shows F1 score of DeepWalk-DNS and DeepWalk-UNS with varying context windows on CiteSeer, Cora, and PPI datasets, where the performance of DNS-based methods tends to get closer to UNS-based methods with increasing context window, which is in-line with our discussion in Section 4. However, we can see that the F1-score of DNS is significantly larger than that of UNS for a large range of context windows smaller than a reasonable value. In practice, we prefer low context windows during negative sampling for better optimization time at learning phase \cite{grover2016node2vec}. Moreover, for dense graphs, such as PPI, dissimilar nodes have low pairwise distances that weaken our node-similarity assumption and decrease DNS-based model performance. However, we can set the value of $\gamma$ to a small value in $\gamma$-linear negative sampling, which reduces the effect of distances and improves performance for the densely connected graphs. We demonstrate the sensitivity of the performance of $\gamma$-DNS to  $\gamma$ in Supplementary Materials. We also perform ablation studies to understand the importance of distance-aware negative sampling probabilities in comparison to baseline methods in Supplementary Materials.
Figure \ref{fig:sim-synthetic} shows the similarity of the embeddings generated by DNS-based models on the synthetic graphs, which is inversely proportional to the pairwise distance that maximizes the separation of distant node pairs. A detailed analysis of DeepWalk-DNS on the synthetic dataset and the effect of $\gamma$ in $\gamma$-linear negative sampling is in Supplementary Materials.

\begin{table}[t]
    \caption{Comparison of scalable heuristics on PubMed dataset. We denote the model size as \textit{size}, training time per epoch as \textit{time} with second as unit, and F1-Macro score of the downstream node classification task as \textit{acc}.}
    \label{tab:scalable}
    \centering
    \begin{tabular}{cccc}
    \toprule
     & \textit{size} & \textit{time} & \textit{acc}\\
    \midrule
    DNS-approx $(p=0.1n)$  & 43.7 MB & 1216 &0.64\\
    DNS-approx $(p=0.3n)$ & 256.4 MB & 1357 &0.63\\
    DNS-approx $(p=0.5n)$ & 871.1 MB & 1556 &0.63\\
    \midrule
    DNS-scalable $(p=0.1n)$  & 66 MB & 108  &0.63\\
    DNS-scalable $(p=0.3n)$ & 239.7 MB & 281 &0.64\\
    DNS-scalable $(p=0.5n)$ & 639.9 MB & 655 &0.64\\
    \midrule
    Basic DNS & 1.5 GB & 90 & 0.63\\
    \bottomrule
    \end{tabular}
\end{table}

Table \ref{tab:scalable} shows the comparison of DNS-approx and DNS-scalable with the DNS approach in terms of the model size, the per epoch training time, and the downstream node classification accuracy on PubMed dataset. The space complexity of both heuristics is $\cO(p^2 + n)$. 
From the table, we see that both heuristics take much less space than the basic DNS approach, even when we consider 50\% of nodes as popular nodes. We save all the trained models  and the size of the saved model objects is used as a measure of space complexity. We see the model sizes are in a similar range for both heuristics when they have an equal number of popular nodes. However, we see a large reduction in the training time for the DNS-scalable method since it does not reconstruct the shortest distance matrix, like DNS-approx. The training time for the DNS-scalable approach increases with the number of popular nodes because we select negative-popular nodes from the popular matrix, and the number of selection operations increases with the popular matrix size. All the models perform similarly on the downstream node classification task, which shows that we can get similar performance from our DNS-scalable method with a much smaller space complexity (with a small number of popular nodes) and a comparable training time complexity. 
\vspace{-2ex}
\section{Conclusions and Future Work}
This work presents a detailed discussion on Distance-aware Negative Sampling (DNS) for unsupervised Graph Representation Learning (GRL) where the node representations reflect graph structure better than the existing GRL methods by maximizing cohesion and separation on different sizes of networks. With theoretical analysis on cohesion and separation and empirical results on the connected components of benchmark datasets, we present DNS sampler as state-of-the-art sampler that better optimizes the negative-sampling objective on unsupervised GRL. Future directions of this research could focus on the scalable DNS with disconnected components.
\vspace{-2ex}
\section*{Acknowledgments}
This work was supported by NSF grant \#1940247.
\vspace{-3ex}

\bibliographystyle{unsrt}

\small{
\bibliography{reference}

\begin{thebibliography}{10}

\bibitem{DBLP:journals/corr/YangCS16}
Zhilin Yang, William~W. Cohen, and Ruslan Salakhutdinov.
\newblock Revisiting semi-supervised learning with graph embeddings.
\newblock {\em CoRR}, abs/1603.08861, 2016.

\bibitem{DBLP:journals/corr/KipfW16}
Thomas~N. Kipf and Max Welling.
\newblock Semi-supervised classification with graph convolutional networks.
\newblock {\em CoRR}, abs/1609.02907, 2016.

\bibitem{DBLP:journals/corr/GilmerSRVD17}
Justin Gilmer, Samuel~S. Schoenholz, Patrick~F. Riley, Oriol Vinyals, and
  George~E. Dahl.
\newblock Neural message passing for quantum chemistry.
\newblock {\em CoRR}, abs/1704.01212, 2017.

\bibitem{DBLP:journals/corr/abs-1710-03059}
Alberto Garc{\'{\i}}a{-}Dur{\'{a}}n and Mathias Niepert.
\newblock Learning graph representations with embedding propagation.
\newblock {\em CoRR}, abs/1710.03059, 2017.

\bibitem{hamilton2017inductive}
Will Hamilton, Zhitao Ying, and Jure Leskovec.
\newblock Inductive representation learning on large graphs.
\newblock In {\em NeurIPS}, pages 1024--1034, 2017.

\bibitem{velivckovic2017graph}
Petar Veli{\v{c}}kovi{\'c}, Guillem Cucurull, Arantxa Casanova, Adriana Romero,
  Pietro Lio, and Yoshua Bengio.
\newblock Graph attention networks.
\newblock {\em arXiv preprint arXiv:1710.10903}, 2017.

\bibitem{mikolov2013efficient}
Tomas Mikolov, Kai Chen, Greg Corrado, and Jeffrey Dean.
\newblock Efficient estimation of word representations in vector space.
\newblock {\em arXiv preprint arXiv:1301.3781}, 2013.

\bibitem{ahmed2013distributed}
Amr Ahmed, Nino Shervashidze, Shravan Narayanamurthy, Vanja Josifovski, and
  Alexander~J Smola.
\newblock Distributed large-scale natural graph factorization.
\newblock In {\em WWW}, pages 37--48, 2013.

\bibitem{cao2015grarep}
Shaosheng Cao, Wei Lu, and Qiongkai Xu.
\newblock Grarep: Learning graph representations with global structural
  information.
\newblock In {\em CIKM}, pages 891--900, 2015.

\bibitem{ou2016asymmetric}
Mingdong Ou, Peng Cui, Jian Pei, Ziwei Zhang, and Wenwu Zhu.
\newblock Asymmetric transitivity preserving graph embedding.
\newblock In {\em KDD}, pages 1105--1114, 2016.

\bibitem{perozzi2014deepwalk}
Bryan Perozzi, Rami Al-Rfou, and Steven Skiena.
\newblock Deepwalk: Online learning of social representations.
\newblock In {\em KDD}, pages 701--710, 2014.

\bibitem{DBLP:journals/corr/abs-1710-09599}
Sami Abu{-}El{-}Haija, Bryan Perozzi, Rami Al{-}Rfou, and Alex Alemi.
\newblock Watch your step: Learning graph embeddings through attention.
\newblock {\em CoRR}, abs/1710.09599, 2017.

\bibitem{armandpour2019robust}
Mohammadreza Armandpour, Patrick Ding, Jianhua Huang, and Xia Hu.
\newblock Robust negative sampling for network embedding.
\newblock In {\em AAAI}, volume~33, pages 3191--3198, 2019.

\bibitem{kipf2016variational}
Thomas~N Kipf and Max Welling.
\newblock Variational graph auto-encoders.
\newblock {\em arXiv preprint arXiv:1611.07308}, 2016.

\bibitem{velivckovic2018deep}
Petar Veli{\v{c}}kovi{\'c}, William Fedus, William~L Hamilton, Pietro Li{\`o},
  Yoshua Bengio, and R~Devon Hjelm.
\newblock Deep graph infomax.
\newblock {\em arXiv preprint arXiv:1809.10341}, 2018.

\bibitem{grover2016node2vec}
Aditya Grover and Jure Leskovec.
\newblock node2vec: Scalable feature learning for networks.
\newblock In {\em KDD}, pages 855--864, 2016.

\bibitem{tang2015line}
Jian Tang, Meng Qu, Mingzhe Wang, Ming Zhang, Jun Yan, and Qiaozhu Mei.
\newblock Line: Large-scale information network embedding.
\newblock In {\em WWW}, pages 1067--1077, 2015.

\bibitem{yang2020understanding}
Zhen Yang, Ming Ding, Chang Zhou, Hongxia Yang, Jingren Zhou, and Jie Tang.
\newblock Understanding negative sampling in graph representation learning.
\newblock {\em arXiv preprint arXiv:2005.09863}, 2020.

\bibitem{harris1954distributional}
Zellig~S Harris.
\newblock Distributional structure.
\newblock {\em Word}, 10(2-3):146--162, 1954.

\bibitem{gutmann2010noise}
Michael Gutmann and Aapo Hyv{\"a}rinen.
\newblock Noise-contrastive estimation: A new estimation principle for
  unnormalized statistical models.
\newblock In {\em AISTATS}, pages 297--304, 2010.

\bibitem{dijkstra1959note}
Edsger~W Dijkstra.
\newblock A note on two problems in connexion with graphs.
\newblock {\em Numerische mathematik}, 1(1):269--271, 1959.

\bibitem{715934}
M.~L. {Fredman} and R.~E. {Tarjan}.
\newblock Fibonacci heaps and their uses in improved network optimization
  algorithms.
\newblock In {\em 25th Annual Symposium on Foundations of Computer Science,
  1984.}, pages 338--346, 1984.

\bibitem{6690045}
L.~{Fu} and J.~{Deng}.
\newblock Graph calculus: Scalable shortest path analytics for large social
  graphs through core net.
\newblock In {\em WI-IAT}, volume~1, pages 417--424, 2013.

\bibitem{cohen2003reachability}
Edith Cohen, Eran Halperin, Haim Kaplan, and Uri Zwick.
\newblock Reachability and distance queries via 2-hop labels.
\newblock {\em SIAM Journal on Computing}, 32(5):1338--1355, 2003.

\bibitem{jin20093}
Ruoming Jin, Yang Xiang, Ning Ruan, and David Fuhry.
\newblock 3-hop: a high-compression indexing scheme for reachability query.
\newblock In {\em SIGMOD}, pages 813--826, 2009.

\bibitem{DBLP:journals/corr/ZitnikL17}
Marinka Zitnik and Jure Leskovec.
\newblock Predicting multicellular function through multi-layer tissue
  networks.
\newblock {\em CoRR}, abs/1707.04638, 2017.

\end{thebibliography}
}
\section*{Appendix A: Proofs of Theorems}

\text{THEOREM 4.1}
\textit{
Unigram Negative Sampling (UNS) Algorithm optimizes the generalized GRL objective with the following specifications:
    $\sm (i,j) = log(\sigma(\bz_i^T\bz_j))$, ~ $\dsm (i,j) = log(\sigma(-\bz_i^T\bz_j))$,
    $\alpha_d = \pi_d(C,\bA)$, where $\pi_d(C,\bA)$ is the probability of sampling a node-pair at distance $d$ using a $C$-length random walk on the graph with adjacency matrix $\bA$, and
    $\beta_d = KC/n.$
As a result, the Separation Power of UNS algorithm is equal to 1.
}
\vspace{-5ex}
\begin{proof}
To understand the relationship between the generalized GRL objective and UNS, let us look at the generic objective function of negative sampling (that applies for both UNS and DNS):
\vspace{-2ex}
\begin{align}
    E(\cZ) =& ~ \sum_{i\in\mathcal{V}} \sum_{j\in\mathcal{N}(i)}[\log\:\sigma(\bz_i^T \bz_j) + \nonumber\\ 
    &{K} \sum_{k\in \mathcal{V}} P_{neg}(k|i)\log(\sigma(- \bz_i^T \bz_k))] \nonumber
    \end{align}
where $\mathcal{N}(i)$ represents the set of nodes (of size $C$) that belong to the neighborhood of node $i$ which we construct using random walk strategy. If we denote the probability of sampling a node $j$ in a $C$-length random walk from $i$ as $P_{walk}(j|i)$, then $P_{walk}(j|i) = \pi_d(C,\bA)$, where $d$ is the distance between nodes $i$ and $j$. The objective function of generic negative sampling can thus be written as:
    \begin{align}
    E(\cZ) =& \sum_{i\in\mathcal{V}} \sum_{d=1}^{d_{max}} \sum_{j\in\mathcal{V} \land d(i,j) = d} \pi_{d}({C}, \bA) \log\:\sigma(\bz_i^T \bz_j) + \label{eqn:10a}\\
    &\sum_{i\in\mathcal{V}} \sum_{j\in\mathcal{N}(i)}{K} \sum_{k\in \mathcal{V}} P_{neg}(k|i)\log(\sigma(- \bz_i^T \bz_k))\nonumber\\
    =& ~ \sum_{d=1}^{d_{max}} \sum_{(i,j) \in \cS_d} \pi_{d}({C}, \bA)\log\:\sigma(\bz_i^T \bz_j) +  \nonumber\\
    &~~ {KC}\sum_{i\in\mathcal{V}}  \sum_{k\in \mathcal{V}} P_{neg}(k|i)\log(\sigma(- \bz_i^T \bz_k)) \nonumber\\
    =& ~ \sum_{d=1}^{d_{max}} \sum_{(i,j) \in \cS_d} \pi_{d}({C}, \bA)\log\:\sigma(\bz_i^T \bz_j) +  \nonumber\\
    &~~ {KC} \sum_{d=1}^{d_{max}} \sum_{(i,j) \in \cS_d} P_{neg}(k|i)\log(\sigma(- \bz_i^T \bz_k)) \nonumber
\end{align}
In UNS, $P_{neg}(j|i) = \frac{1}{n}$ where $n$ is the number of nodes. Hence, the objective function for UNS becomes:
\begin{align}
    E(\cZ) =& ~\sum_{d=1}^{d_{max}} \sum_{(i,j) \in \cS_d} \pi_{d}({C}, \bA)\log\:\sigma(\bz_i^T \bz_j) +  \nonumber \\
    &~~ \frac{KC}{n} \sum_{d=1}^{d_{max}} \sum_{(i,j) \in \cS_d} ~\log(\sigma(- \bz_i^T \bz_j))\nonumber\\
    =& ~\sum_{d=1}^{d_{max}} \sum_{(i,j) \in \cS_d} [\pi_{d}({C}, \bA)\log\:\sigma(\bz_i^T \bz_j) +  \frac{KC}{n} \log(\sigma(- \bz_i^T \bz_j)) ] \nonumber \\
    =& ~\sum_{d=1}^{d_{max}} \sum_{(i,j) \in \cS_d} [\alpha_d ~ \sm(i,j) + \beta_d ~ \dsm(i,j)] \nonumber
\end{align}
where, $\sm (i,j) = log(\sigma(\bz_i^T\bz_j))$, ~ $\dsm (i,j) = log(\sigma(-\bz_i^T\bz_j))$, $\alpha_d = \pi_d(C,\bA)$, and    $\beta_d = KC/n$. Clearly, the above equation corresponds to the generalized GRL objective.
The Separation Power of UNS is equal to $\frac{\beta_{d_{max}}}{\beta_{1}} = \frac{{K}{C}/n}{{K}{C}/n} = 1$.
\end{proof}

\text{THEOREM 4.2}
\textit{
Distance-aware Negative Sampling (DNS) Algorithm optimizes the generalized GRL objective with the following specifications:
    $\sm (i,j) = log(\sigma(\bz_i^T\bz_j))$, ~ $\dsm (i,j) = log(\sigma(-\bz_i^T\bz_j))$,
    $\alpha_d = \pi_d(C,\bA)$, where $\pi_d(C,\bA)$ is the probability of sampling a node-pair at distance $d$ using a $C$-length random walk on the graph with adjacency matrix $\bA$, and
    $\beta_d = KCd/\mathcal{D}(\bA).$
As a result, the Separation Power of DNS algorithm is equal to $d_{max}$.
}
\begin{proof}
For Distance-aware Negative Sampler (DNS), the negative sampling probability $P_{neg}(j|i)$ is linearly proportional to the pair-wise distance $d(j, i)$; $P_{neg}(j|i) = \frac{d(j, i)}{\cD(i, \bA)} = \frac{d}{\cD(i, \bA)}$. We approximate the expected value of $\cD(i, \bA)$ as $\cD(\bA)$. The objective function for DNS can thus be obtained by substituting the value of $P_{neg}(j|i)$ in Equation \ref{eqn:10a} as follows:

\begin{align}
    E(\cZ) =& ~\sum_{d=1}^{d_{max}} \sum_{(i,j) \in \cS_d} \pi_{d}({C}, \bA)\log\:\sigma(\bz_i^T \bz_j) +  \label{eqn:13}\\ &~~\sum_{d=1}^{d_{max}} \sum_{(i,j) \in \cS_d} {K}{C}~\frac{d}{\cD(\bA)}~\log(\sigma(- \bz_i^T \bz_j))\nonumber\\
        =& ~\sum_{d=1}^{d_{max}} \sum_{(i,j) \in \cS_d} [\pi_{d}({C}, \bA)\log\:\sigma(\bz_i^T \bz_j) +  \nonumber\\ &\frac{KCd}{\cD(\bA)} \log(\sigma(- \bz_i^T \bz_j)) ] \nonumber\\
    =& ~\sum_{d=1}^{d_{max}} \sum_{(i,j) \in \cS_d} [\alpha_d ~ \sm(i,j) + \beta_d ~ \dsm(i,j)] \nonumber
\end{align}
where, $\sm (i,j) = log(\sigma(\bz_i^T\bz_j))$, ~ $\dsm (i,j) = log(\sigma(-\bz_i^T\bz_j))$, $\alpha_d = \pi_d(C,\bA)$, and    $\beta_d = KCd/\mathcal{D}(\bA)$. Clearly, Equation \ref{eqn:13} corresponds to the generalized GRL objective.
The Separation Power of DNS is equal to $\frac{\beta_{d_{max}}}{\beta_{1}} = \frac{{K}{C}d_{max}/\cD(\bA)}{{K}{C}/\cD(\bA)} = d_{max}$.
\end{proof}

\text{THEOREM 4.3}
\textit{
Let the average pairwise similarity for any two nodes at distance d be given by $\xi_d$ = $\frac{1}{|\cS_{d}|}\sm(i, j) = \frac{1}{|\cS_{d}|} \sum_{(i,j)\in \cS_d} \sigma (z_i^Tz_j)$.  
We can then show that DNS generates embeddings such that $\xi_d$ is a function of $d$ and for $d > C$, $\xi_{d}$ is inversely proportional to $d$. 
}
\vspace{-1ex}
\begin{proof}
Let us assume that the DNS based GRL model has reached its global maximum with loss $\mathcal{Q}$. From Equation \ref{eqn:13}, the loss of DNS based GRL model is given by,
\vspace{-2ex}
\begin{align}
    \cQ &= -~\sum_{d=1}^{d_{max}} \sum_{(i,j) \in \cS_d} \pi_{d}({C}, \bA)\log\:\sigma(\bz_i^T \bz_j) ~- \label{eqn:15}\\ &~~{K}{C}\sum_{d=1}^{d_{max}} \sum_{(i,j) \in \cS_d} \frac{d}{\cD(\bA)}~\log(\sigma(- \bz_i^T \bz_j))\nonumber\\
    &= -~\sum_{d=1}^{d_{max}} \sum_{(i,j) \in \cS_d} \pi_{d}({C}, \bA)\log\:\sigma(\bz_i^T \bz_j) ~- \nonumber\\
    &~~{K}{C}\sum_{d=1}^{d_{max}} \sum_{(i,j) \in \cS_d} \frac{d}{\cD(\bA)}~\log(1 -\sigma(\bz_i^T \bz_j))\nonumber
\end{align}
Since the model has reached its global optimum, the similarity $\sigma(\bz_i^T\bz_j)$ for nearby node-pairs ($d(i, j)<C$) should be high such that $1 - \sigma(\bz_i^T\bz_j)$ low. We approximate $log(\sigma(\bz_i^T\bz_j)) \approx \sigma(\bz_i^T\bz_j) - 1$ as the remainder term in its Taylor's series expansion is close to zero. Moreover, we expand $log(1 - \sigma(\bz_i^T\bz_j)) = - \sigma(\bz_i^T\bz_j) - \frac{\sigma(\bz_i^T\bz_j)^2}{2}- \frac{\sigma(\bz_i^T\bz_j)^3}{3} - \dots = - \sigma(\bz_i^T\bz_j) - \mathcal{R}(\sigma(\bz_i^T\bz_j))$.
Note that the length of the random walk is at most ${C}$. Consequently, for $d > {C}$, $\pi_{d}({C}, \bA) = 0$. We can thus rearrange Equation \ref{eqn:15} as,
\vspace{-2ex}
\begin{align*}
    \cQ &= -~\sum_{d=1}^{C} \sum_{(i,j) \in \cS_d} \pi_{d}({C}, \bA)(\sigma(\bz_i^T\bz_j) - 1) ~\\ \nonumber
    &\phantom{{}={}} -{K}{C}\sum_{d=1}^{d_{max}} \sum_{(i,j) \in \cS_d} \frac{d}{\cD(\bA)}(- \sigma(\bz_i^T\bz_j) - \mathcal{R}(\sigma(\bz_i^T\bz_j)))\\ \nonumber
    &= -~\sum_{d=1}^{C} \sum_{(i,j) \in \cS_d} \pi_{d}({C}, \bA)(\sigma(\bz_i^T\bz_j) - 1) ~\\ \nonumber
    &\phantom{{}={}} +{K}{C}\sum_{d=1}^{d_{max}} \sum_{(i,j) \in \cS_d} (\frac{d}{\cD(\bA)}\sigma(\bz_i^T\bz_j) + \frac{d}{\cD(\bA)}\mathcal{R}(\sigma(\bz_i^T\bz_j)))\\ \nonumber
\end{align*}
We approximate $\frac{d}{\cD(\bA)}\mathcal{R}(\sigma(\bz_i^T\bz_j)) \approx 0$ because $R(\sigma(\bz_i^T\bz_j)) \approx 0$ for distant pairs and $\frac{d}{\cD(\bA)} \approx 0$ for nearby pairs at optimum. Hence, we obtain:
\vspace{-2ex}
\begin{align}
    \cQ &= -~\sum_{d=1}^{C} \sum_{(i,j) \in \cS_d} \pi_{d}({C}, \bA)\sigma(\bz_i^T\bz_j) + \label{eqn:17} \\
    &~\sum_{d=1}^{C} \sum_{(i,j) \in \cS_d} \pi_{d}({C}, \bA) + {K}{C}\sum_{d=1}^{d_{max}} \sum_{(i,j) \in \cS_d} \frac{d}{\cD(\bA)}\sigma(\bz_i^T\bz_j) \nonumber \\ 
    &= ~\sum_{d=1}^{C} \sum_{(i,j) \in \cS_d} \pi_{d}({C}, \bA) -~\sum_{d=1}^{C} \sum_{(i,j) \in \cS_d} (\pi_{d}({C}, \bA)- \frac{{K}{C}d}{\cD(\bA)}) \nonumber\\ 
    &\phantom{{}={}} \sigma(\bz_i^T\bz_j) +\sum_{d={C}+1}^{d_{max}} ~\sum_{(i,j) \in \cS_d} \frac{{K}{C}d}{\cD(\bA)}\sigma(\bz_i^T\bz_j) \nonumber
\end{align}
\vspace{-3.75ex}
For $d \leq {C}$, we rearrange Equation \ref{eqn:17}  as,
\begin{align*}
    \cQ &= ~\sum_{d'=1}^{C} ~\sum_{(i,j) \in \cS_{d'}} \pi_{d'}({C}, \bA) -\nonumber\\
    &~\sum_{d'=1}^{C} \sum_{(i,j) \in \cS_{d'}} (\pi_{d'}({C}, \bA)- \frac{{K}{C}d'}{\cD(\bA)})~\sigma(\bz_i^T\bz_j) \\ \nonumber
    &\phantom{{}={}} +\sum_{d'={C}+1}^{d_{max}} ~\sum_{(i,j) \in \cS_{d'}} \frac{{K}{C}d'}{\cD(\bA)}\sigma(\bz_i^T\bz_j) \\ \nonumber
    &= ~\sum_{d'=1}^{C} ~\sum_{(i,j) \in \cS_{d'}} \pi_{d'}({C}, \bA)-\nonumber \\
    &\sum_{(i,j) \in \cS_{d}} (\pi_{d}({C}, \bA)- \frac{{K}{C}d}{\cD(\bA)})~\sigma(\bz_i^T\bz_j) \\ \nonumber
    &\phantom{{}={}} -~\sum_{\substack{d'=1 \\ d' \neq d}}^{C}~ \sum_{(i,j) \in \cS_{d'}} (\pi_{d'}({C}, \bA)- \frac{{K}{C}d'}{\cD(\bA)})~\sigma(\bz_i^T\bz_j) +\nonumber\\
    &~~\sum_{d'={C}+1}^{d_{max}} ~\sum_{(i,j) \in \cS_{d'}} \frac{{K}{C}d'}{\cD(\bA)}\sigma(\bz_i^T\bz_j) \\ \nonumber
    &= ~\sum_{d'=1}^{C} \sum_{(i,j) \in \cS_{d'}} \pi_{d'}({C}, \bA)- (\pi_{d}({C}, \bA)- \frac{{K}{C}d}{\cD(\bA)})|\cS_d|\xi_d \\ \nonumber
    &\phantom{{}={}} -~\sum_{\substack{d'=1 \\ d' \neq d}}^{C}~ \sum_{(i,j) \in \cS_{d'}} (\pi_{d'}({C}, \bA)-\frac{{K}{C}d'}{\cD(\bA)})~\sigma(\bz_i^T\bz_j) +\nonumber\\
    &~~\sum_{d'={C}+1}^{d_{max}} ~\sum_{(i,j) \in \cS_{d'}} \frac{{K}{C}d'}{\cD(\bA)}\sigma(\bz_i^T\bz_j) \nonumber
\end{align*}
\begin{align*}
    \xi_d &= ~\frac{1}{|\cS_d|~(\pi_{d}({C}, \bA)- \frac{{K}{C}d}{\cD(\bA)})} \times \nonumber\\
    &[-\cQ + ~~~\sum_{d'=1}^{C} ~\sum_{(i,j) \in \cS_{d'}} \pi_{d'}({C}, \bA) \\ \nonumber
    &\phantom{{}={}} -~\sum_{\substack{d'=1 \\ d' \neq d}}^{C}~ \sum_{(i,j) \in \cS_{d'}} (\pi_{d'}({C}, \bA)- \frac{{K}{C}d'}{\cD(\bA)})~\sigma(\bz_i^T\bz_j) +\nonumber\\
    &~~\sum_{d'={C}+1}^{d_{max}} ~\sum_{(i,j) \in \cS_{d'}} \frac{{K}{C}d'}{\cD(\bA)}\sigma(\bz_i^T\bz_j)] \nonumber
\end{align*}
From the above Equation, $\xi_d = f(d, \Theta)$ for $d \leq {C}$, where $\Theta$ is the set of parameters of $f$ other than $d$.

For $d > {C}$, we rearrange Equation \ref{eqn:17}  as,
\begin{align*}
    \cQ &= ~\sum_{d'=1}^{C} ~\sum_{(i,j) \in \cS_{d'}} \pi_{d'}({C}, \bA) -~\sum_{d'=1}^{C}\sum_{(i,j) \in \cS_{d'}} (\pi_{d'}({C}, \bA) \nonumber\\
    &-\frac{{K}{C}d'}{\cD(\bA)})~\sigma(\bz_i^T\bz_j) +\sum_{d'={C}+1}^{d_{max}} ~\sum_{(i,j) \in \cS_{d'}} \frac{{K}{C}d'}{\cD(\bA)}\sigma(\bz_i^T\bz_j) \\ \nonumber
    &= ~\sum_{d'=1}^{C} ~\sum_{(i,j) \in \cS_{d'}} \pi_{d'}({C}, \bA) -~\sum_{d'=1}^{C}~ \sum_{(i,j) \in \cS_{d'}} (\pi_{d'}({C}, \bA) \nonumber\\
    &-\frac{{K}{C}d'}{\cD(\bA)})~\sigma(\bz_i^T\bz_j) +~\sum_{(i,j) \in \cS_{d}} \frac{{K}{C}d}{\cD(\bA)}\sigma(\bz_i^T\bz_j) +\nonumber\\
    &~~\sum_{\substack{d'={C}+1 \\ d' \neq d}}^{d_{max}} ~\sum_{(i,j) \in \cS_{d'}} \frac{{K}{C}d'}{\cD(\bA)}\sigma(\bz_i^T\bz_j) \\ \nonumber
     &= ~\sum_{d'=1}^{C} ~\sum_{(i,j) \in \cS_{d'}} \pi_{d'}({C}, \bA) -~\sum_{d'=1}^{C}~ \sum_{(i,j) \in \cS_{d'}} (\pi_{d'}({C}, \bA) \nonumber\\
     &-\frac{{K}{C}d'}{\cD(\bA)})~\sigma(\bz_i^T\bz_j) +\frac{{K}{C}d}{\cD(\bA)}~|\cS_d|\xi_d \nonumber\\
     &+~~\sum_{\substack{d'={C}+1 \\ d' \neq d}}^{d_{max}} ~\sum_{(i,j) \in \cS_{d'}} \frac{{K}{C}d'}{\cD(\bA)}\sigma(\bz_i^T\bz_j) \nonumber
\end{align*}
\begin{align*}
    \xi_d &= \frac{\cD(\bA)}{|\cS_d|{K}{C}~d} ~~[~\cQ -~\sum_{d'=1}^{C} ~\sum_{(i,j) \in \cS_{d'}} \pi_{d'}({C}, \bA)\\ \nonumber
    &\phantom{{}={}} +~\sum_{d'=1}^{C}~ \sum_{(i,j) \in \cS_{d'}} (\pi_{d'}({C}, \bA)- \frac{{K}{C}d'}{\cD(\bA)})~\sigma(\bz_i^T\bz_j) ~ \nonumber\\
    &-~\sum_{\substack{d'={C}+1 \\ d' \neq d}}^{d_{max}} ~\sum_{(i,j) \in \cS_{d'}} \frac{{K}{C}d'}{\cD(\bA)}\sigma(\bz_i^T\bz_j)~] \nonumber
\end{align*}
From the above Equation, $\xi_d$ is inversely proportional to $d$ for $d>{C}$.
\end{proof}

\section*{Appendix B: Implementation Details}

\paragraph{Synthetic Data Generation:}
We construct the synthetic networks by generating a node degree sequence that follows the power-law distribution. For our experiments, we use \textit{networkx.utils.powerlaw\_sequence} to generate the degree sequence which takes two parameters: the number of nodes and the exponent of the power-law distribution, where we set the number of nodes as 2,000 and vary the exponent to generate varying networks with different density. After that, we use \textit{networkx.expected\_degree\_graph} to construct a network from each degree sequence; whereas each network may have many disconnected components. To connect all the components of the network, we randomly choose one node from each disconnected component and connect them using minimum number of artificial edges.
We generate structure-induced node labels using a simple label propagation approach. Initially, we randomly select $k$ seed nodes for $k$ distinct classes. For sparse network, we choose $k=7$ classes, whereas, for moderate and dense networks, we choose $k=5$ and $k=4$ classes respectively. At each iteration, we propagate the node label to its adjacent unlabeled nodes. Consequently, we iterate this procedure until all the nodes get labeled. Therefore, the node labels are generated only using the structure information, such as the proximity from the seed node. 

\paragraph{Node Classification Setup:}
We use Logistic Regression (LR) with an lbfgs solver that supports 150 max iterations as our downstream model. For the PPI dataset, we use multi-class settings of LR.
For benchmark datasets, we use the PyTorch Geometric 
\textit{train-test-validation} mask on the largest component to generate the training nodes, testing nodes and validation nodes. Meanwhile, for the synthetic datasets, we randomly select 10\% nodes for training, 40\% for validation, and 40\% for testing.     
We ran our experiments in a single machine with 2 NVIDIA Titan RTX GPUs (24Gb of RAM) and  1 Intel(R) Xeon(R) W-2135 CPU (@ 3.70GHz). We use PyTorch with cuda-10.1 for our experiments.

\paragraph{Hyper-parameters:}
We set the embedding dimension as 128, the number of random walks per node as 50, and the number of negative samples as 20, whereas for node2vec model, we also set return parameter $p$ as 1 and inout parameter $q$ as 4 for our experiments. To optimize these models, we use Adam optimizer with 0.01 learning rate. Moreover, we run all node wise negative sampling-based GRL model for 30 epochs and all edgewise negative sampling base models for 400 epochs, as edgewise sampler models take more iterations to converge. 

\begin{figure*}[t]
\begin{subfigure}{.33\textwidth}
  \centering
  \includegraphics[width=.99\linewidth]{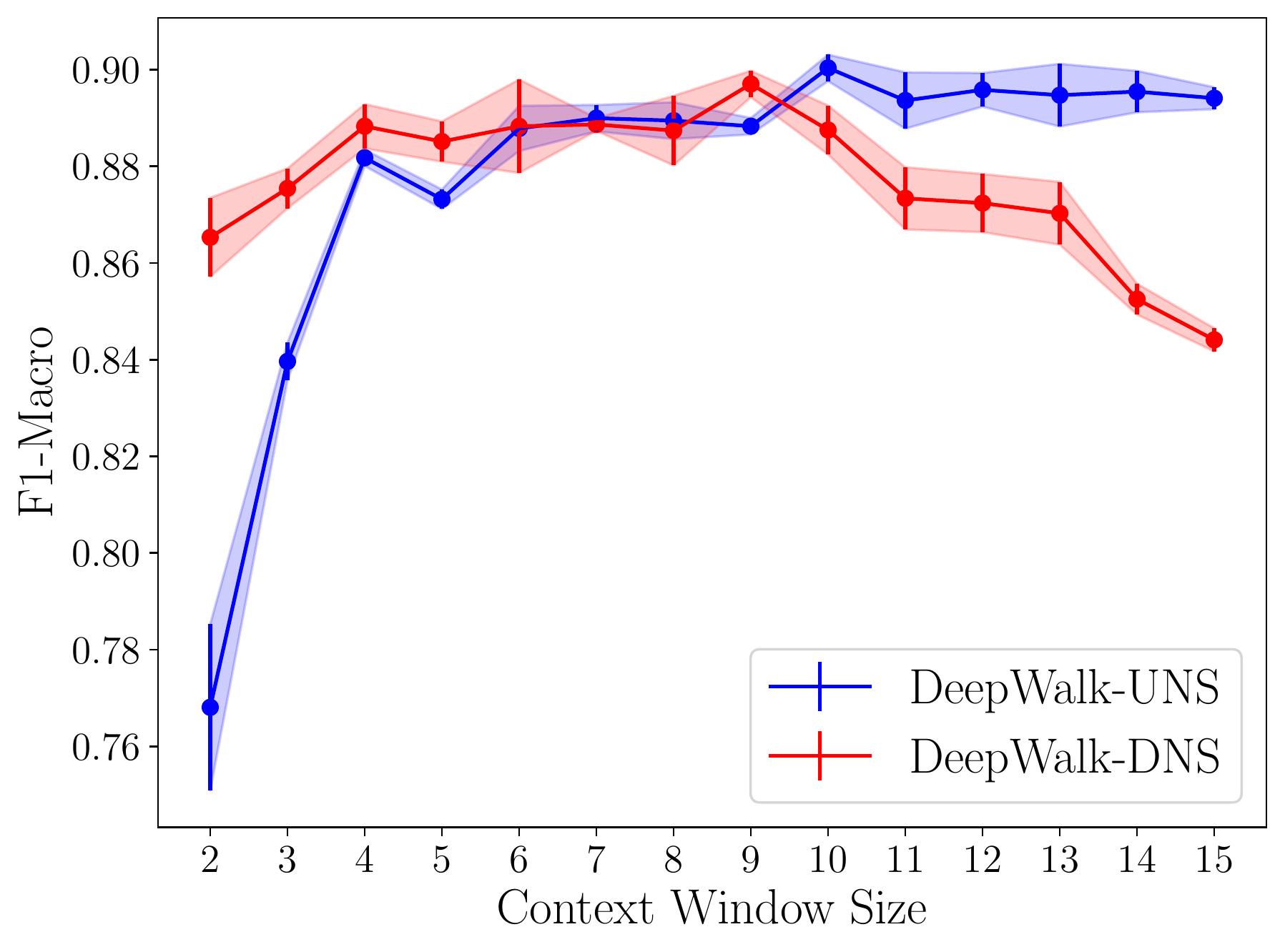}
  \caption*{Synthetic Sparse}
\end{subfigure}
\begin{subfigure}{.33\textwidth}
  \centering
  \includegraphics[width=.99\linewidth]{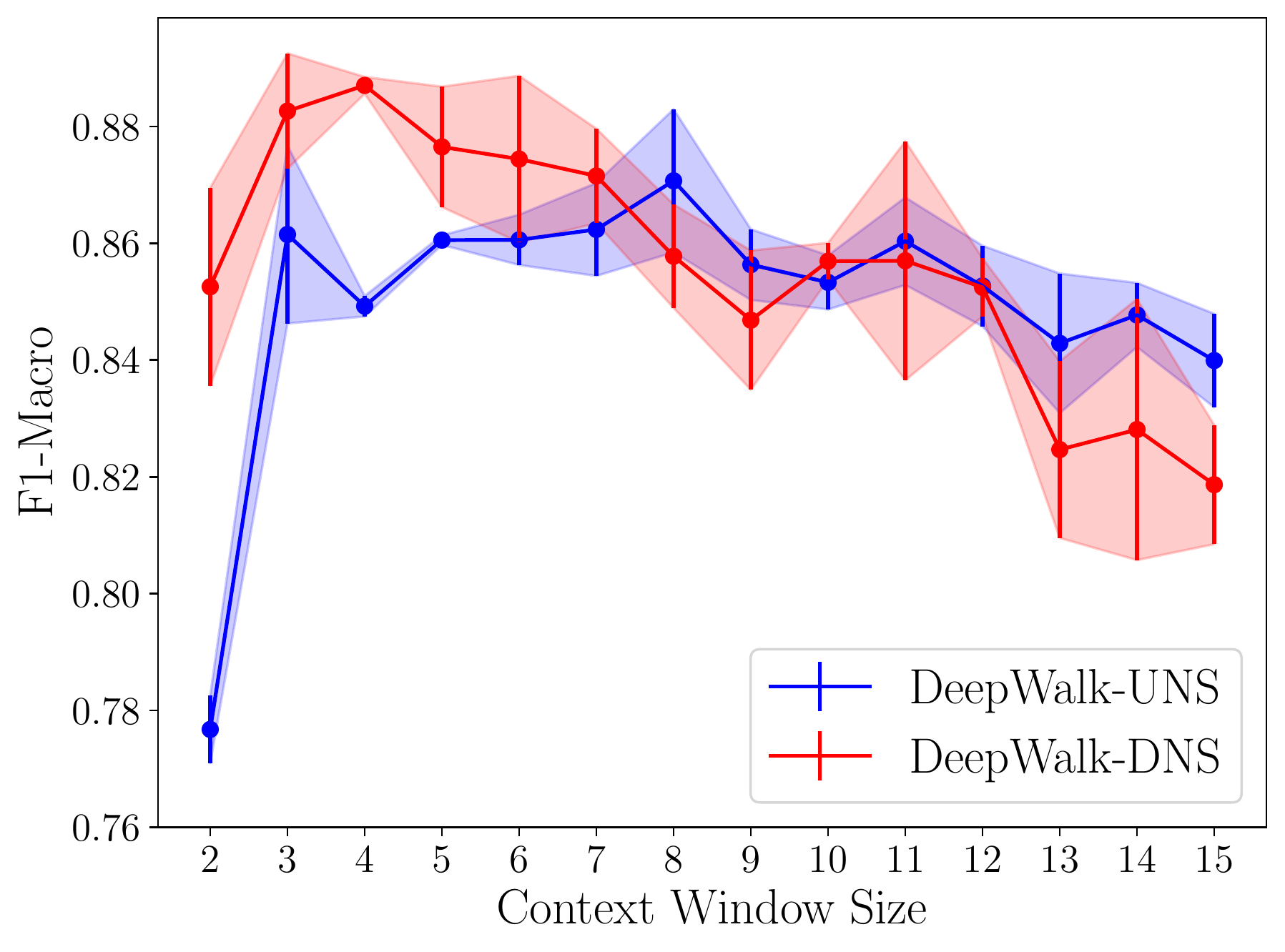}
  \caption*{Synthetic Moderate}
\end{subfigure}
\begin{subfigure}{.33\textwidth}
  \centering
  \includegraphics[width=.99\linewidth]{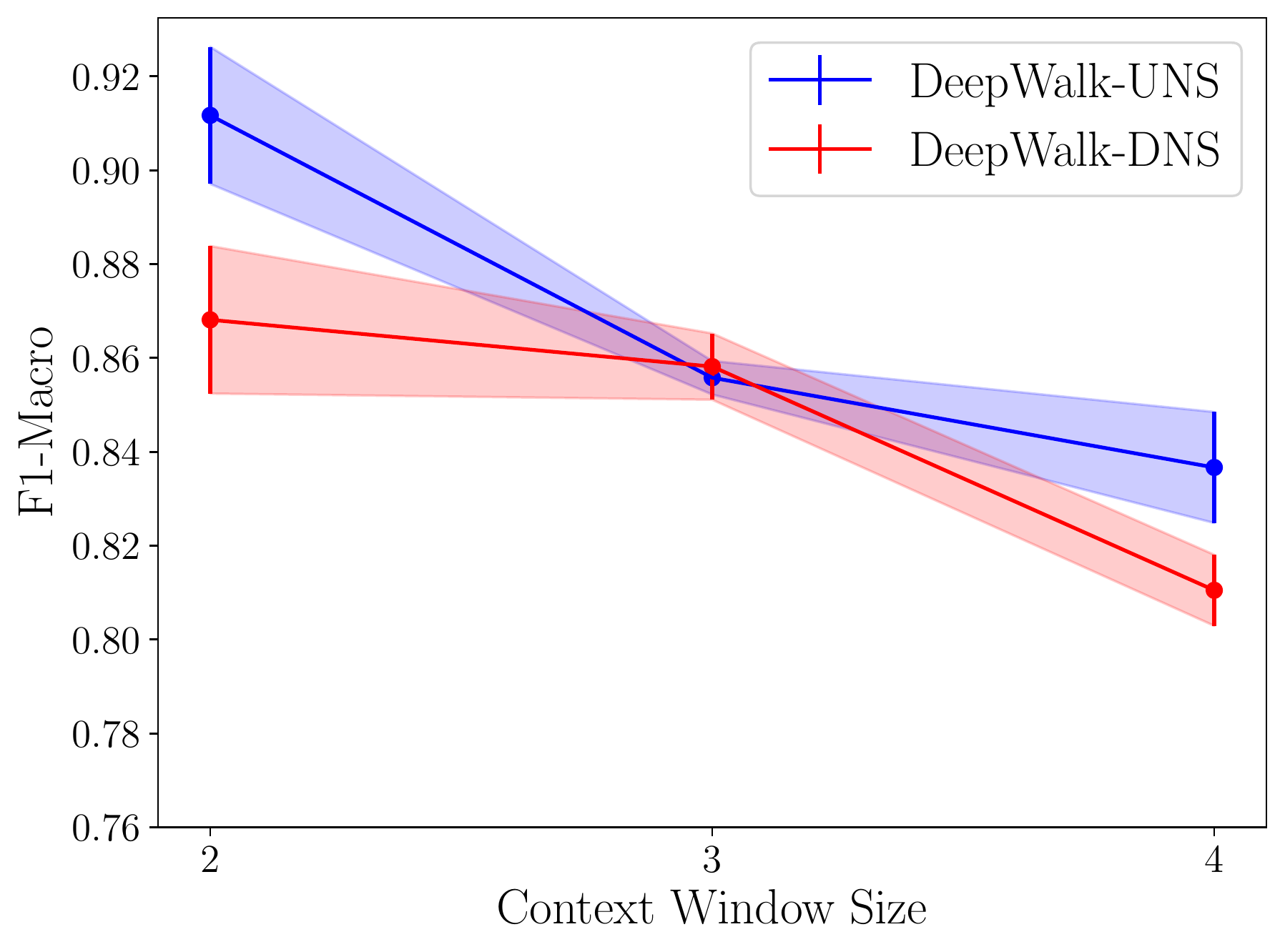}
  \caption*{Synthetic Dense}
\end{subfigure}

\bigskip
\begin{subfigure}{.33\textwidth}
  \centering
  \includegraphics[width=.99\linewidth]{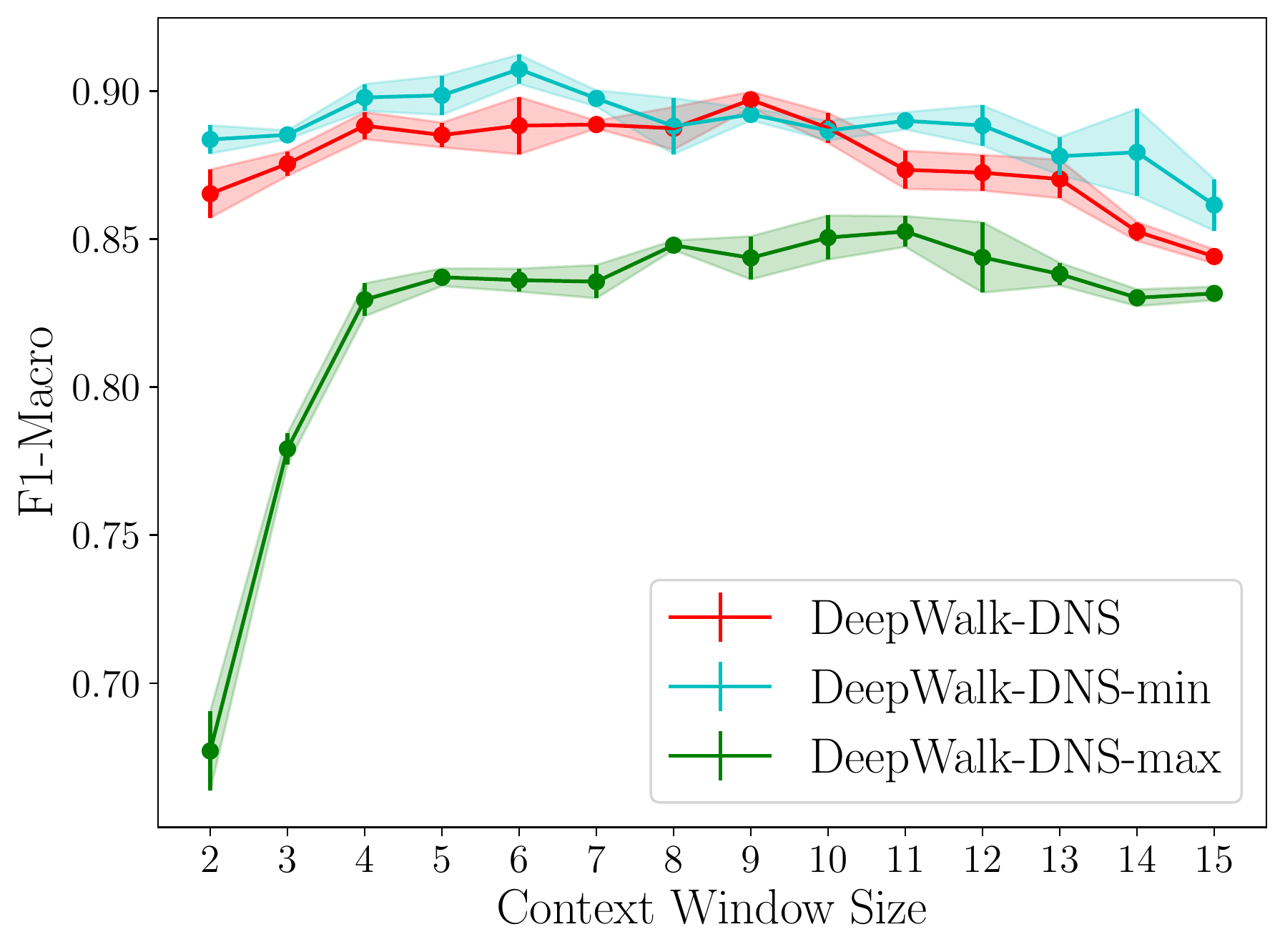}
  \caption*{Synthetic Sparse}
\end{subfigure}
\begin{subfigure}{.33\textwidth}
  \centering
  \includegraphics[width=.99\linewidth]{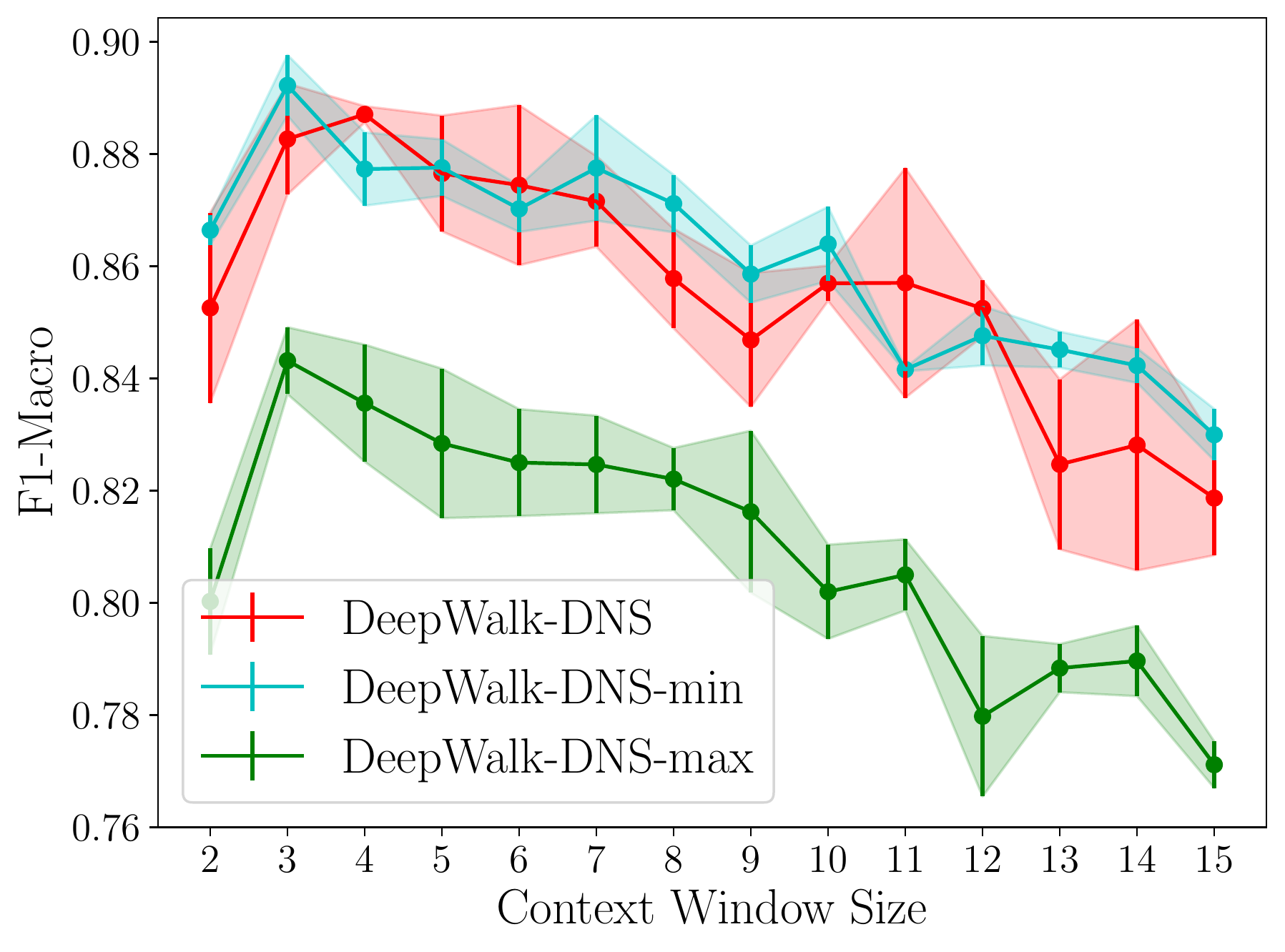}
  \caption*{Synthetic Moderate}
\end{subfigure}
\begin{subfigure}{.33\textwidth}
  \centering
  \includegraphics[width=.99\linewidth]{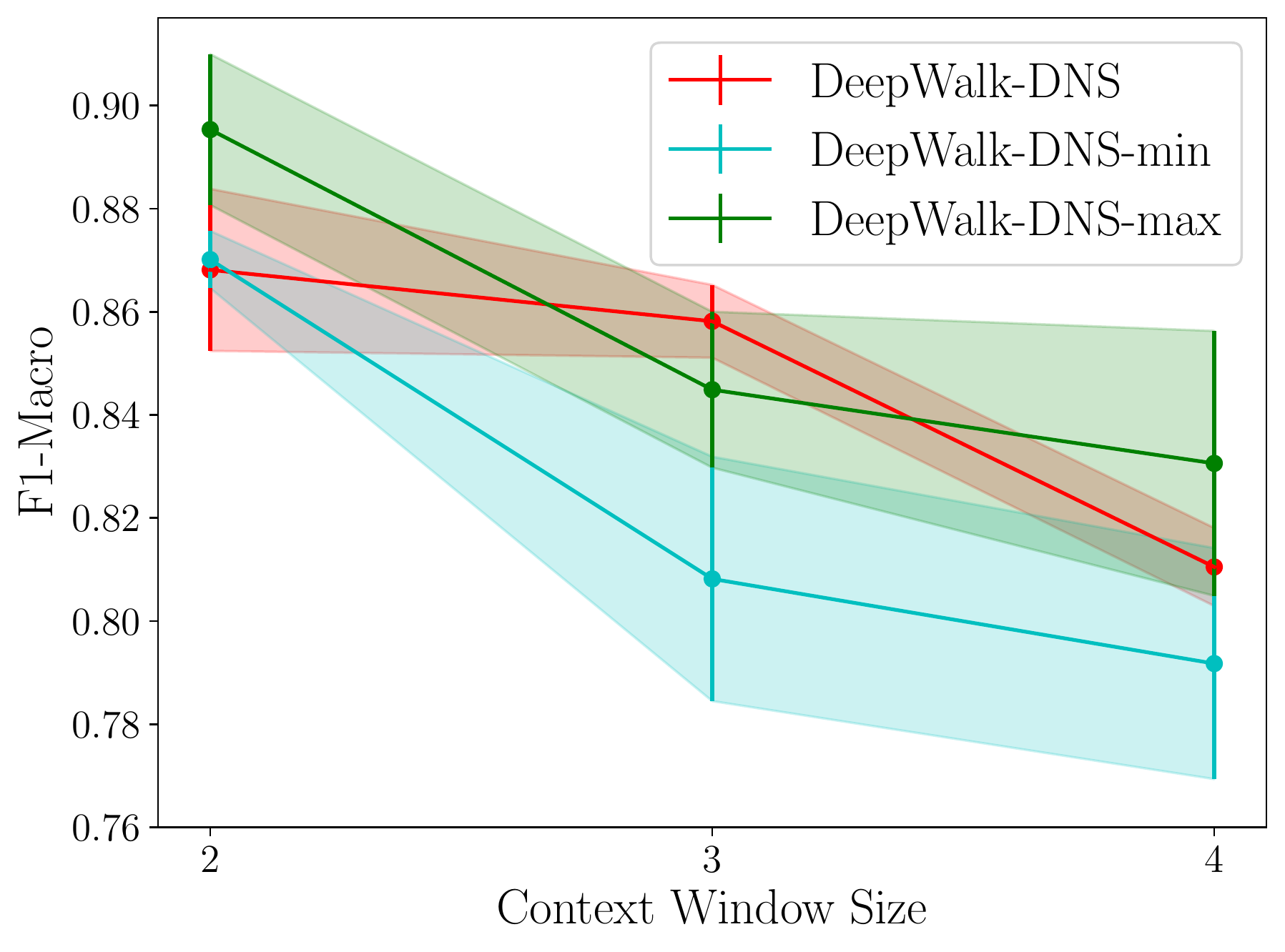}
  \caption*{Synthetic Dense}
\end{subfigure}

\caption{
F1-Macro score plot with varying context window on Synthetic Sparse, Synthetic Moderate, and Synthetic Dense dataset. Competing models are DeepWalk-UNS, DeepWalk-DNS, and its variants DeepWalk-DNS-min, DeepWalk-DNS-max.}
\label{fig:ablation}
\vspace{-3ex}

\end{figure*}

\section*{Appendix C: Additional Analysis of Results}

\begin{figure}[t]
    \centering
    \includegraphics[width=7cm,height=8cm,keepaspectratio]{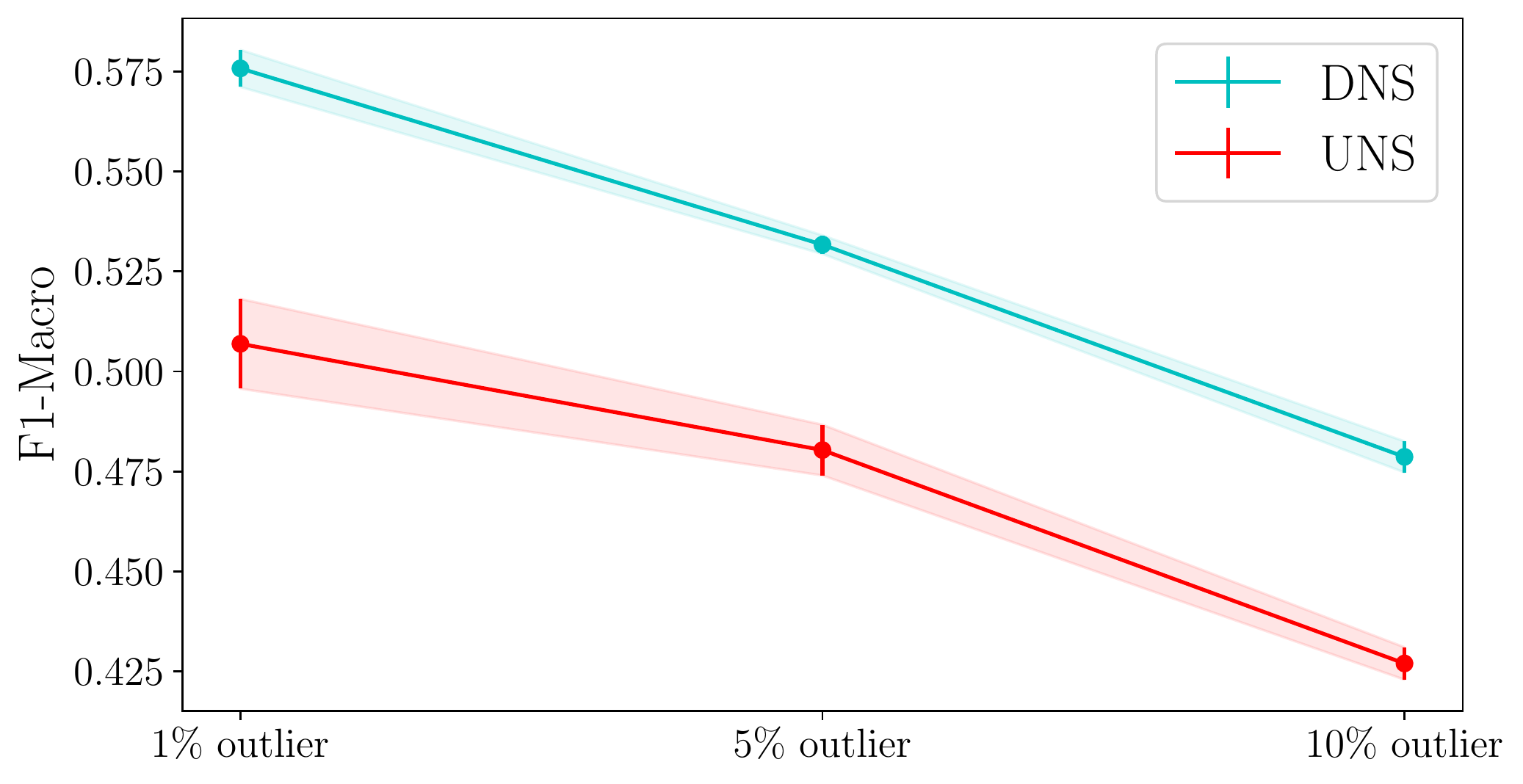}
    \caption{Sensitivity towards outlier points on CiteSeer dataset.}
    \label{fig:outlier}
    \vspace{-1ex}
\end{figure}

\subsubsection*{Ablation Study:}
We perform an ablation study of our DNS sampler by splitting the negative sampling probability into two parts; the splitting point is the pairwise distance for which DNS probability $\approx$ UNS probability. In the first ablation model, we set negative sampling probability linearly proportional to the pairwise distance for nearby nodes while maintaining uniform negative sampling probability for the rest of the nodes. In the second ablation model, we set a uniform negative sampling probability for nearby nodes while setting DNS-like probability for distant nodes. Let us denote the first sampler as DNS-min since its negative sampling probability $P_{min}(k|i) = \min (P_{DNS}(k|i), \frac{1}{n})$ and the second sampler as DNS-max that has negative sampling probability $P_{max}(k|i) = \max (P_{DNS}(k|i), \frac{1}{n})$. Both DNS-min and DNS-max samplers have higher separation than UNS sampler. 

Figure \ref{fig:ablation} shows the node classification performance of different samplers with the DeepWalk model. The top row of Figure \ref{fig:ablation} shows the F1-Macro score of DeepWalk-UNS and DeepWalk-DNS with varying context size. As discussed in Section 4, we see the performance of the DeepWalk-DNS model decreases with increasing context window size. Moreover, low negative sampling probability for nearby nodes is not effective for the synthetic dense graph. 

In the second row of Figure \ref{fig:ablation}, we see the comparison of DNS, DNS-min, and DNS-max in terms of node classification performance. The DeepWalk-DNS-max follows the trend of the DeepWalk-UNS model performance for lower context windows, whereas, the DeepWalk-DNS-min model more likely follows the trend of the DeepWalk-DNS model in all the synthetic graphs. 


\begin{figure*}
\begin{subfigure}{.33\textwidth}
  \centering
  \includegraphics[width=.99\linewidth]{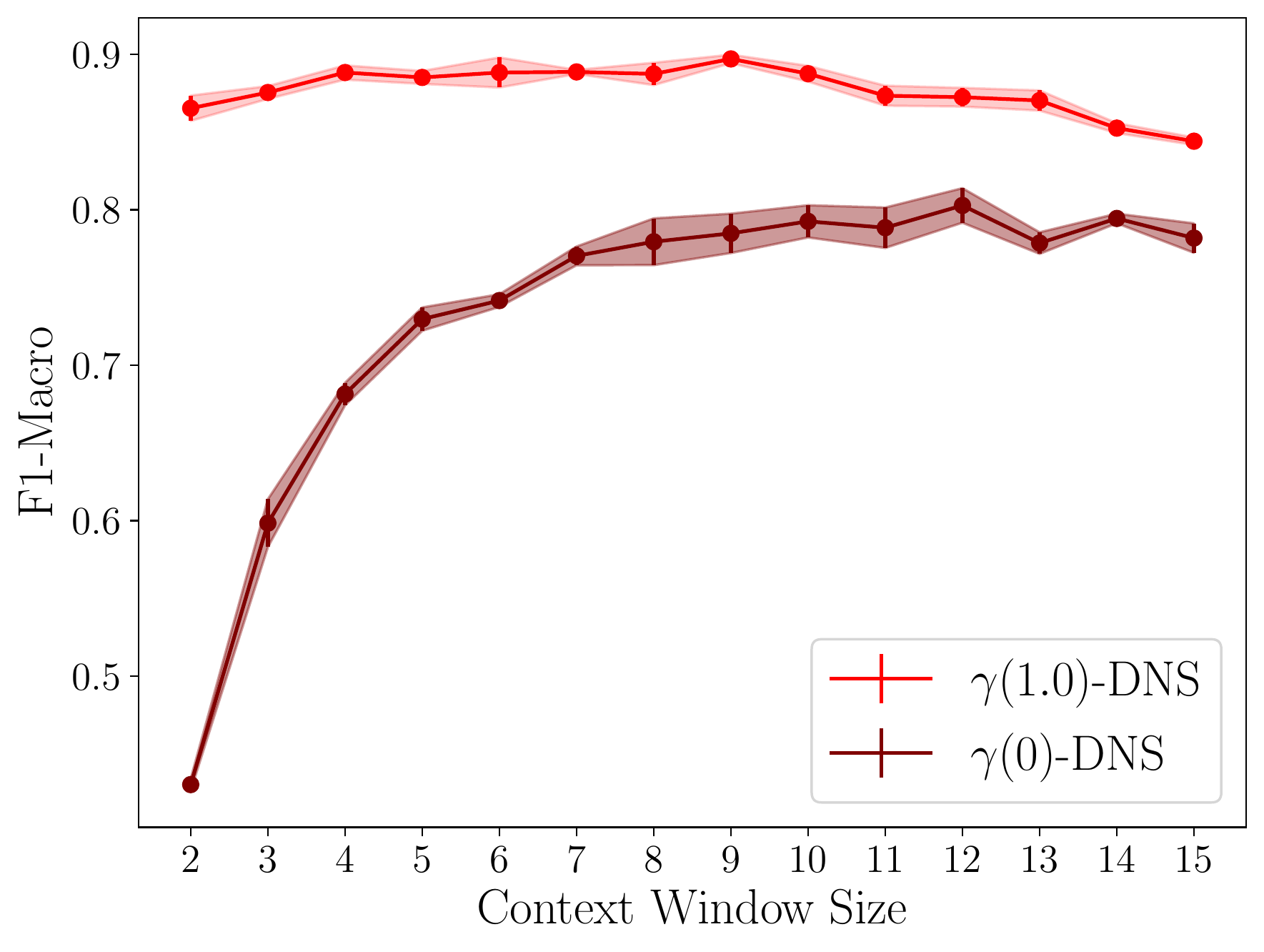}
\end{subfigure}
\begin{subfigure}{.33\textwidth}
  \centering
  \includegraphics[width=.99\linewidth]{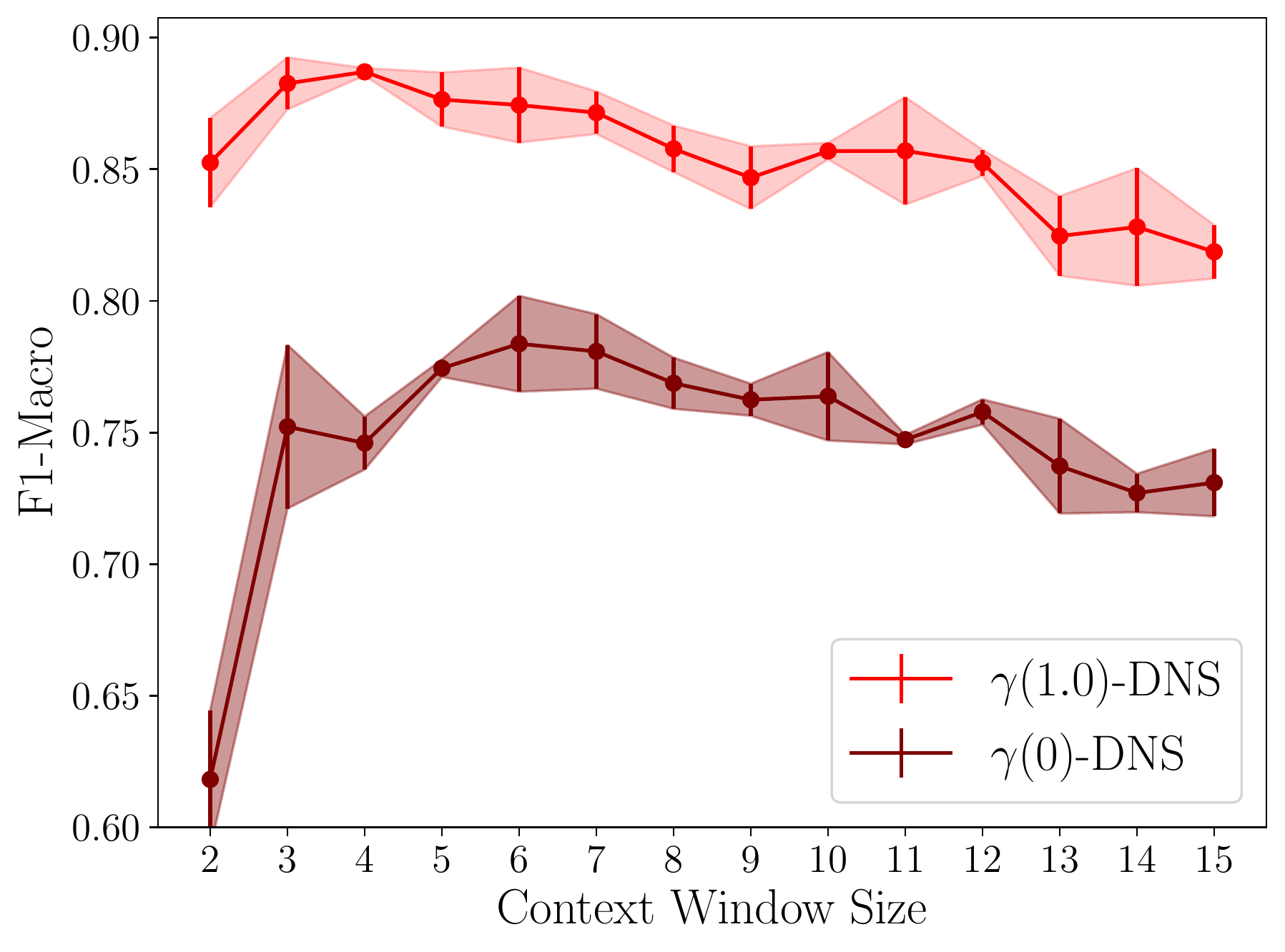}
\end{subfigure}
\begin{subfigure}{.33\textwidth}
  \centering
  \includegraphics[width=.99\linewidth]{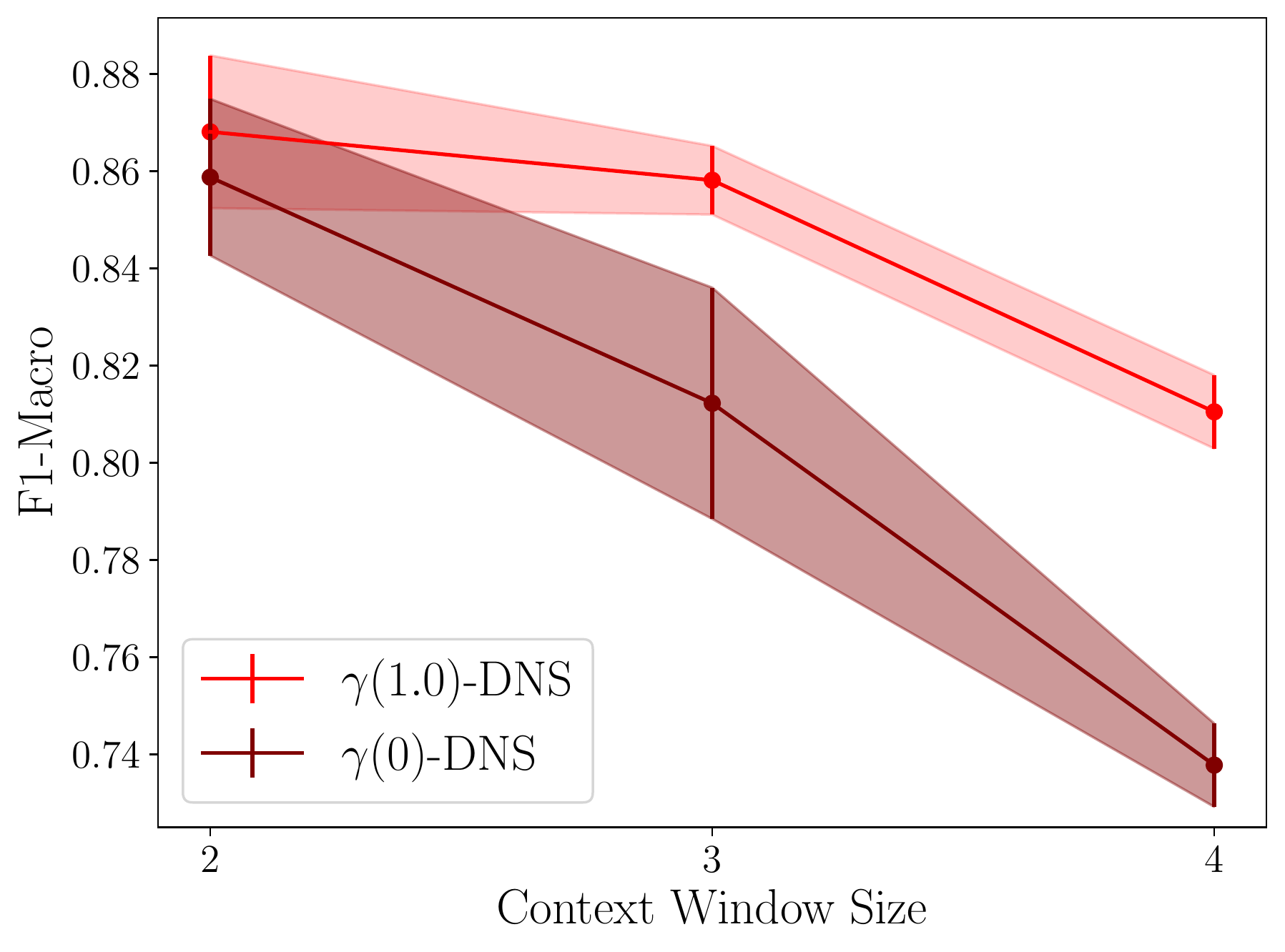}
\end{subfigure}

\bigskip
\begin{subfigure}{.33\textwidth}
  \centering
  \includegraphics[width=.99\linewidth]{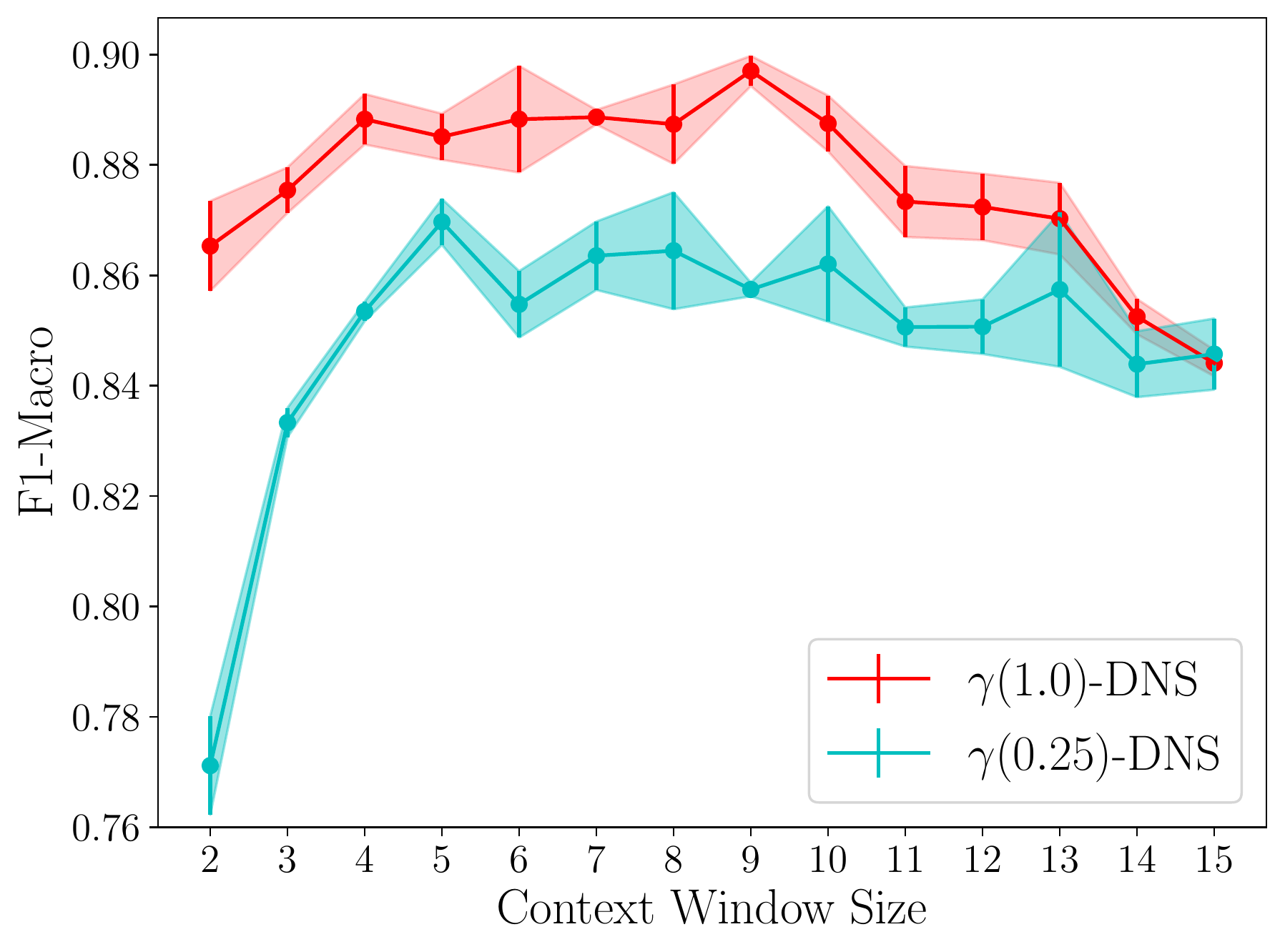}
\end{subfigure}
\begin{subfigure}{.33\textwidth}
  \centering
  \includegraphics[width=.99\linewidth]{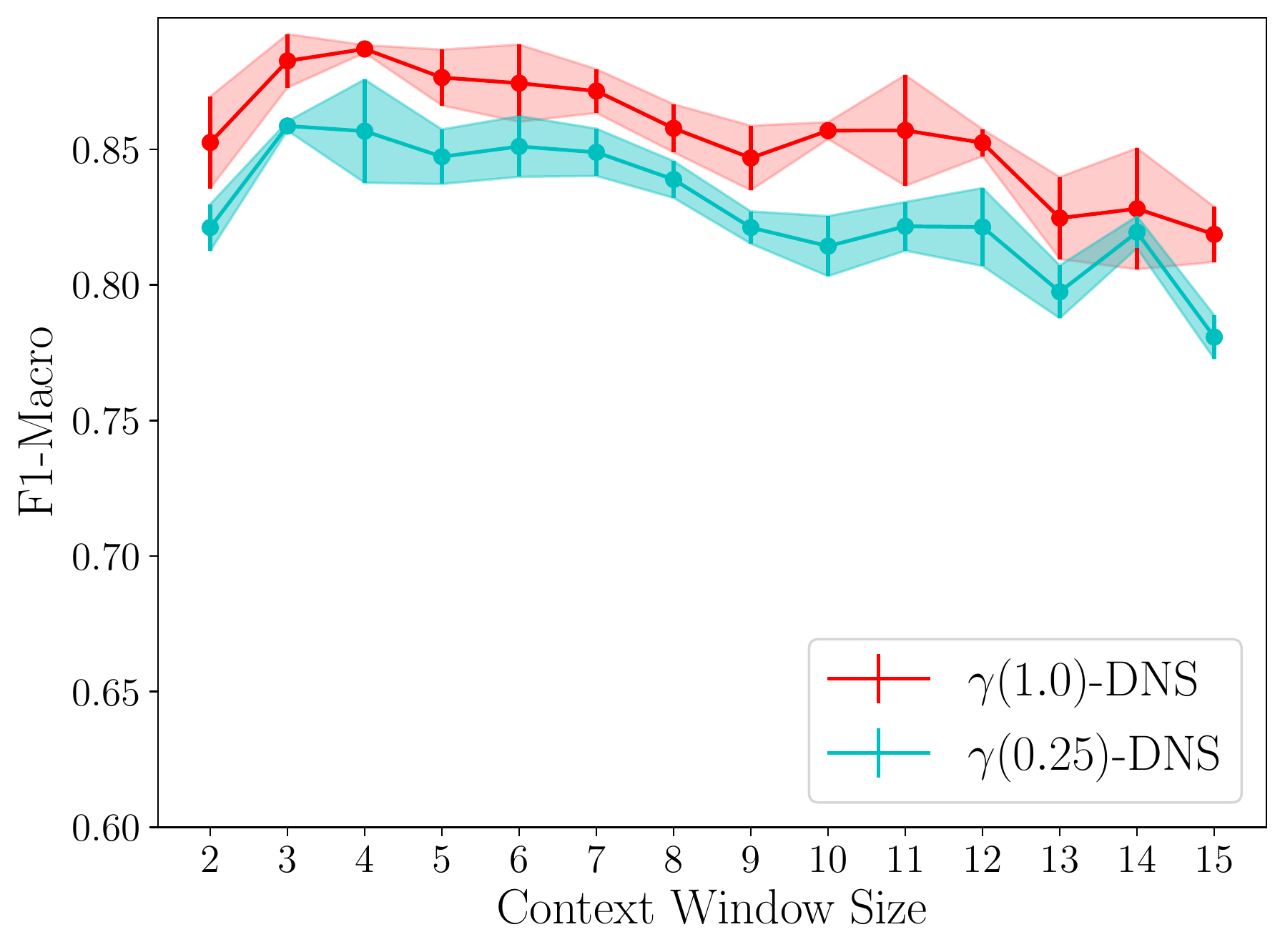}
\end{subfigure}
\begin{subfigure}{.33\textwidth}
  \centering
  \includegraphics[width=.99\linewidth]{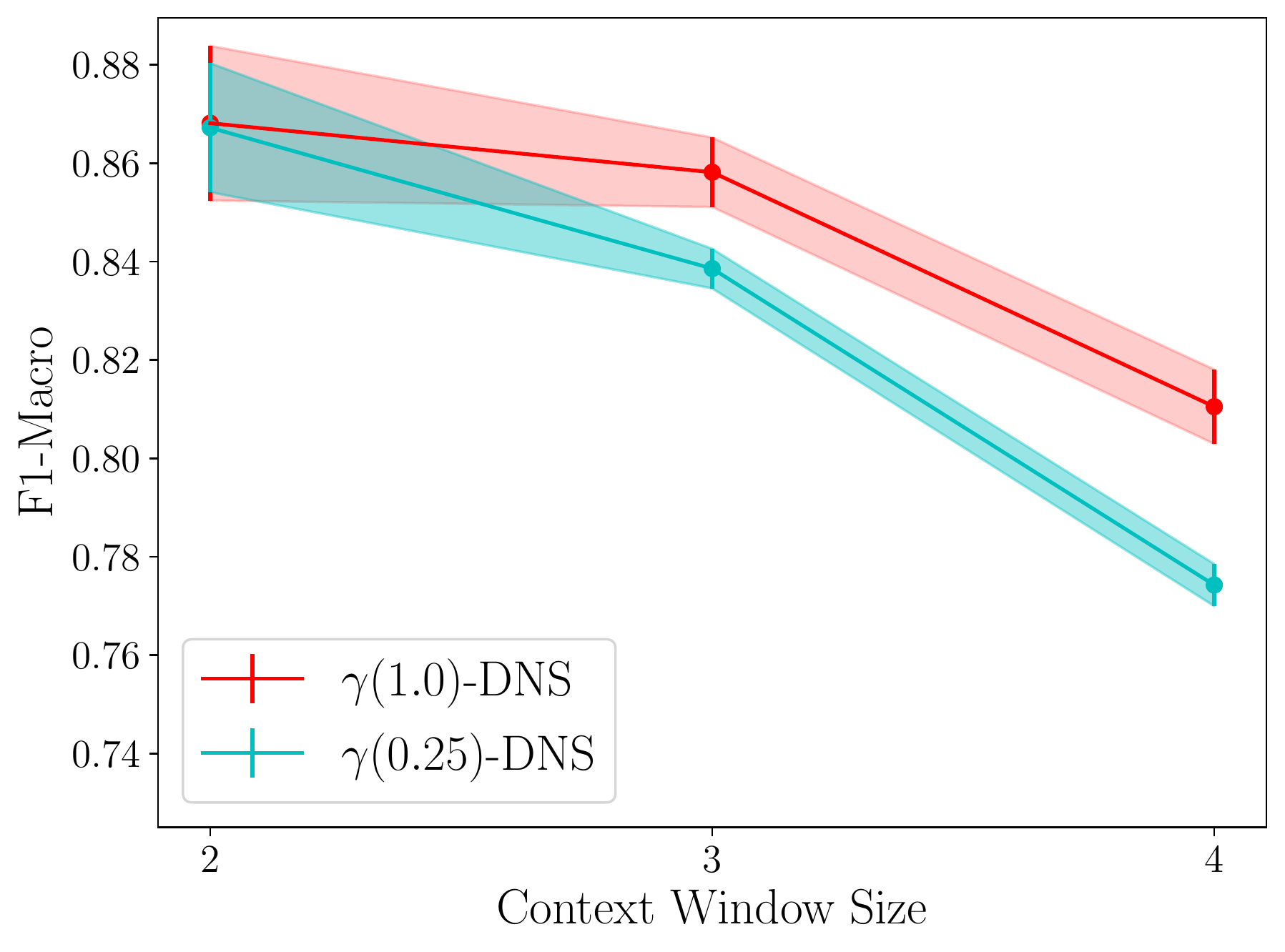}
\end{subfigure}

\bigskip
\begin{subfigure}{.33\textwidth}
  \centering
  \includegraphics[width=.99\linewidth]{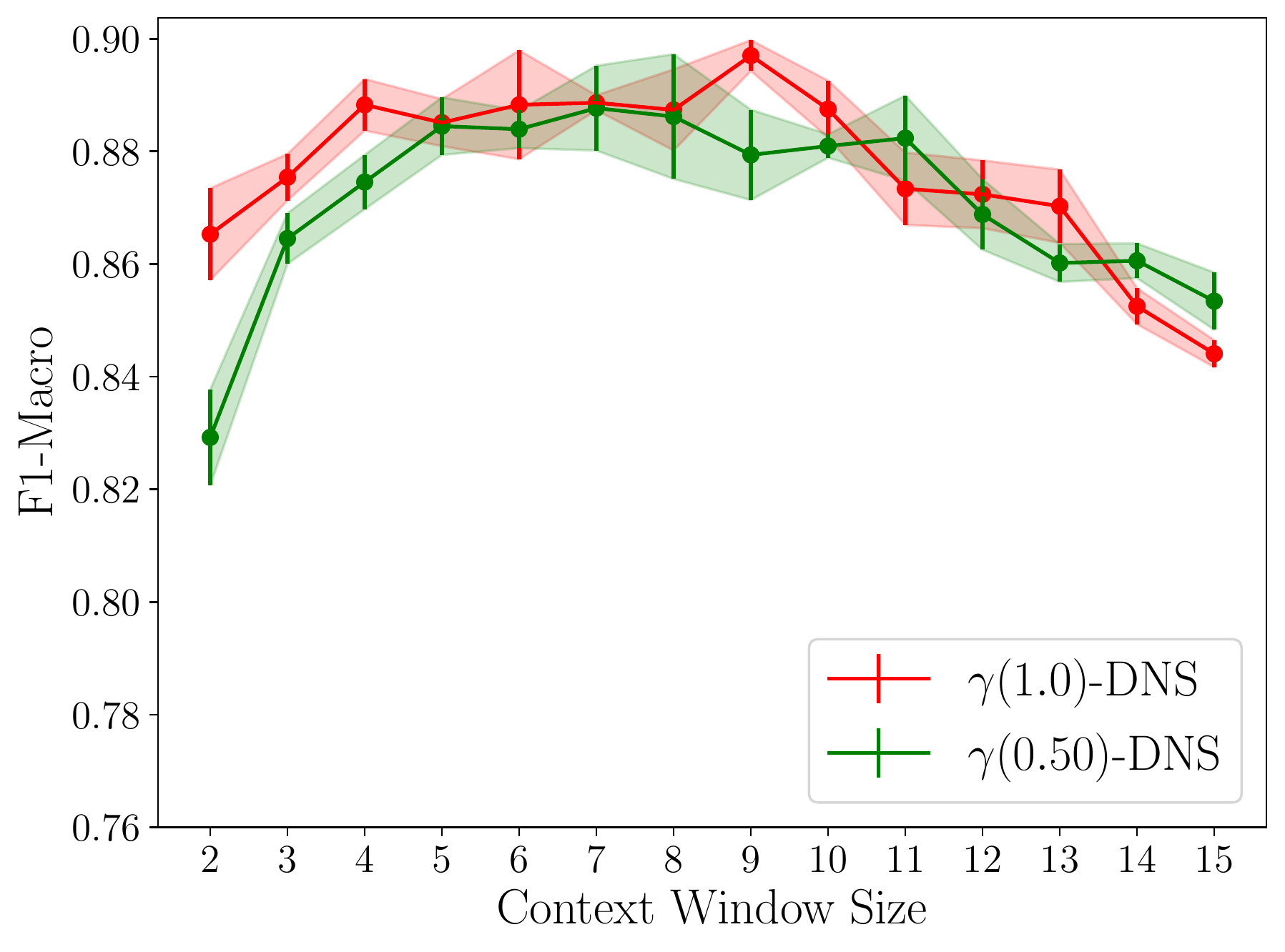}
\end{subfigure}
\begin{subfigure}{.33\textwidth}
  \centering
  \includegraphics[width=.99\linewidth]{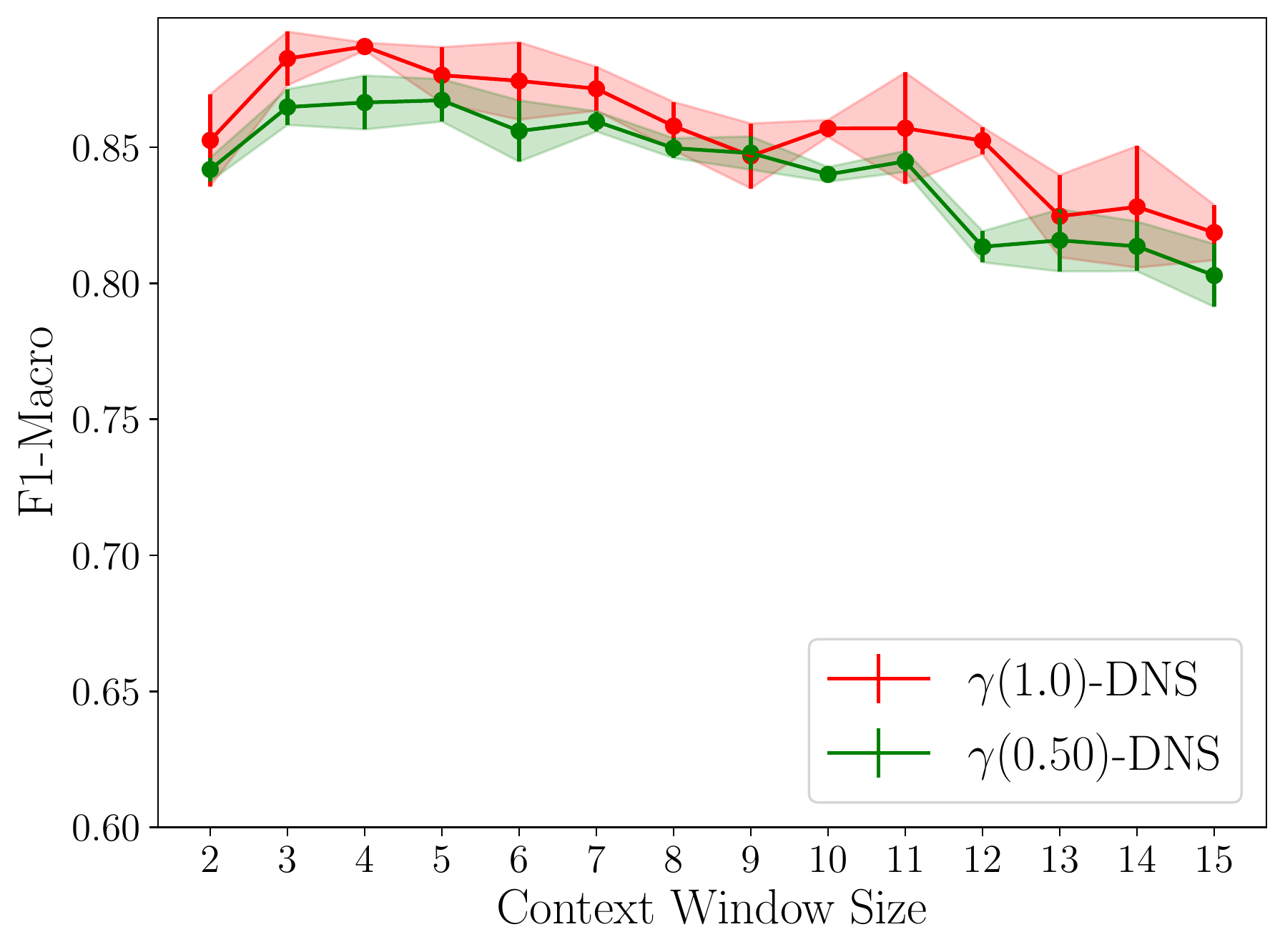}
\end{subfigure}
\begin{subfigure}{.33\textwidth}
  \centering
  \includegraphics[width=.99\linewidth]{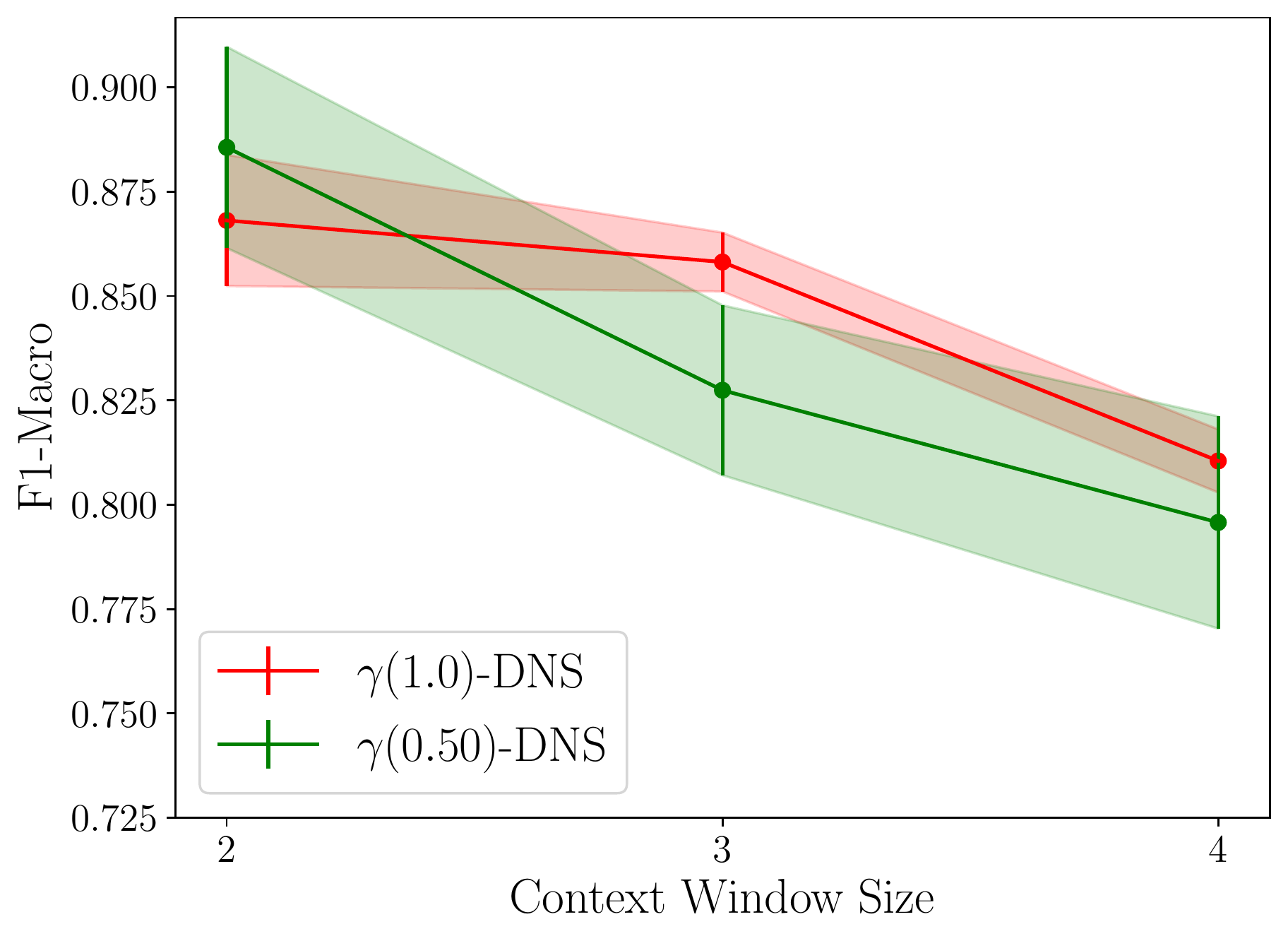}
\end{subfigure}

\bigskip
\begin{subfigure}{.33\textwidth}
  \centering
  \includegraphics[width=.99\linewidth]{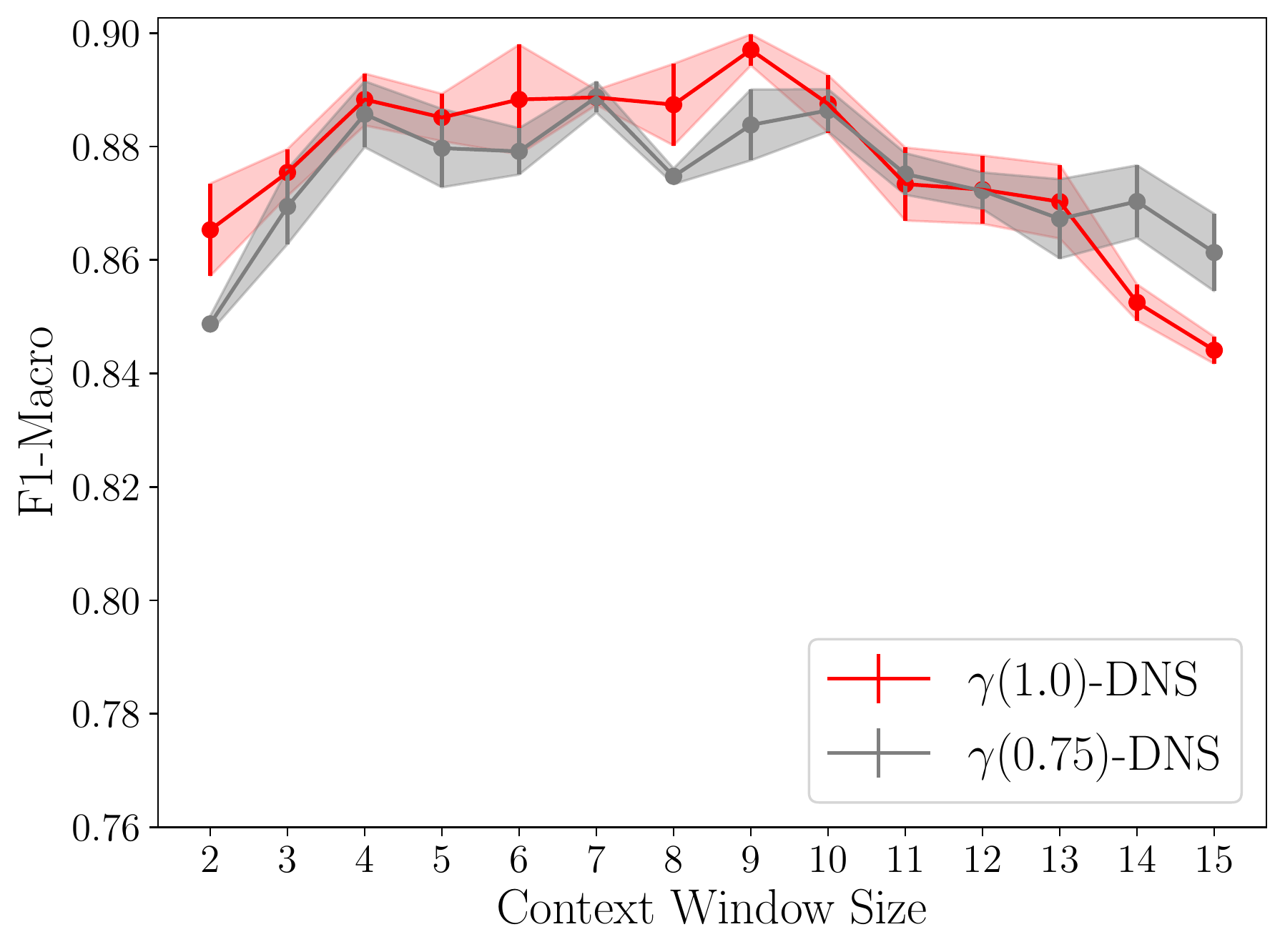}
\end{subfigure}
\begin{subfigure}{.33\textwidth}
  \centering
  \includegraphics[width=.99\linewidth]{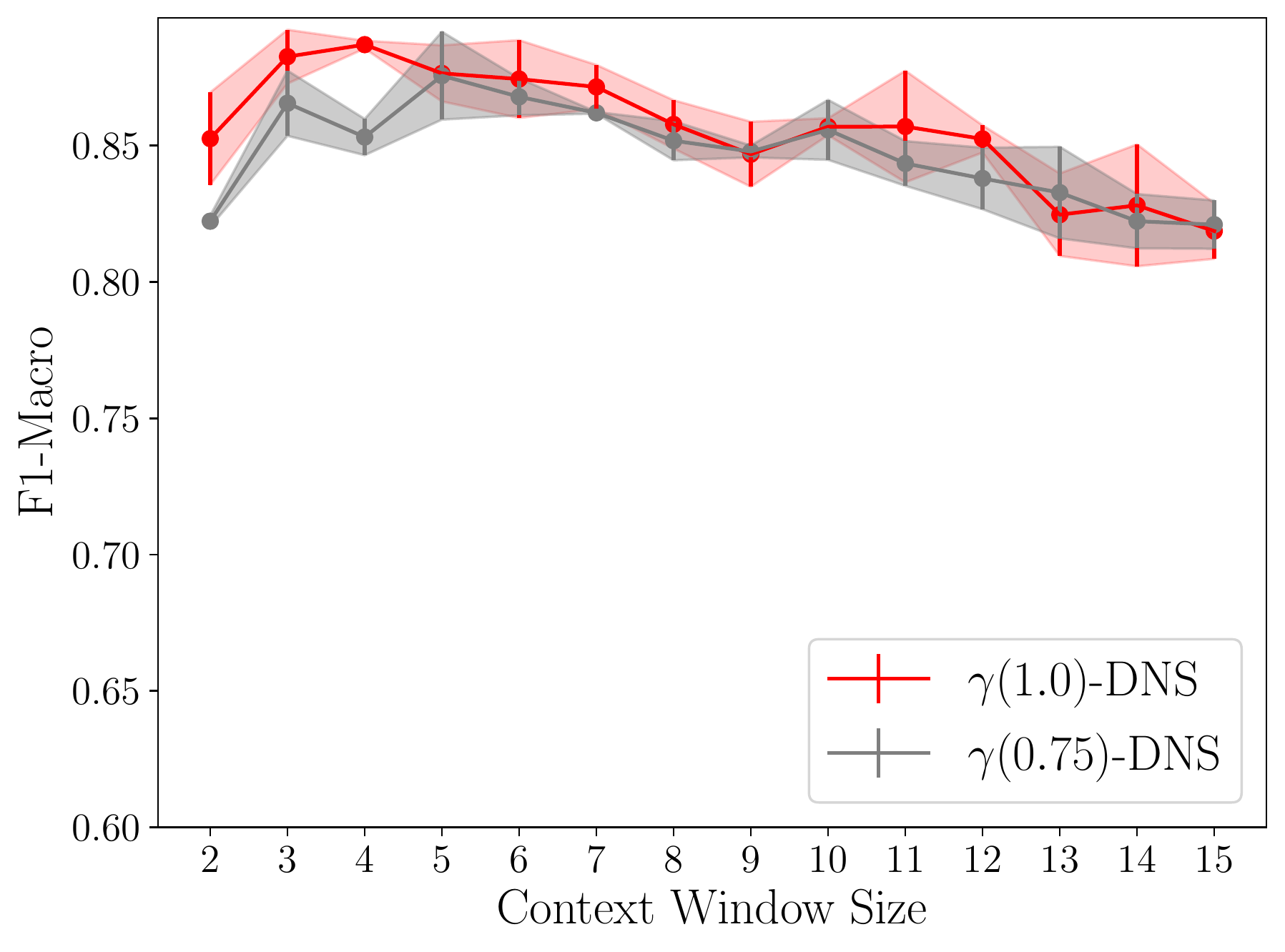}
\end{subfigure}
\begin{subfigure}{.33\textwidth}
  \centering
  \includegraphics[width=.99\linewidth]{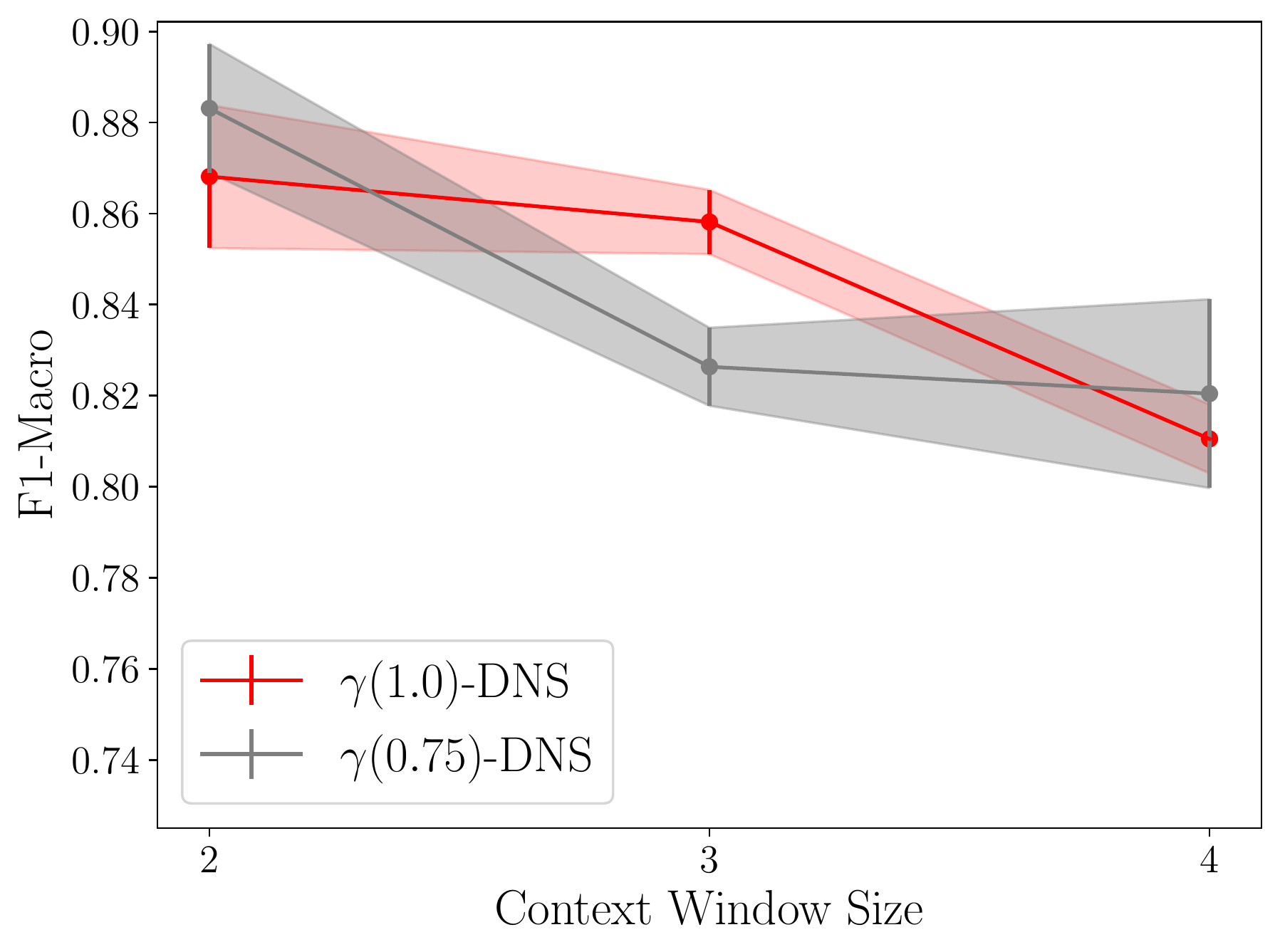}
\end{subfigure}

\bigskip
\begin{subfigure}{.33\textwidth}
  \centering
  \includegraphics[width=.99\linewidth]{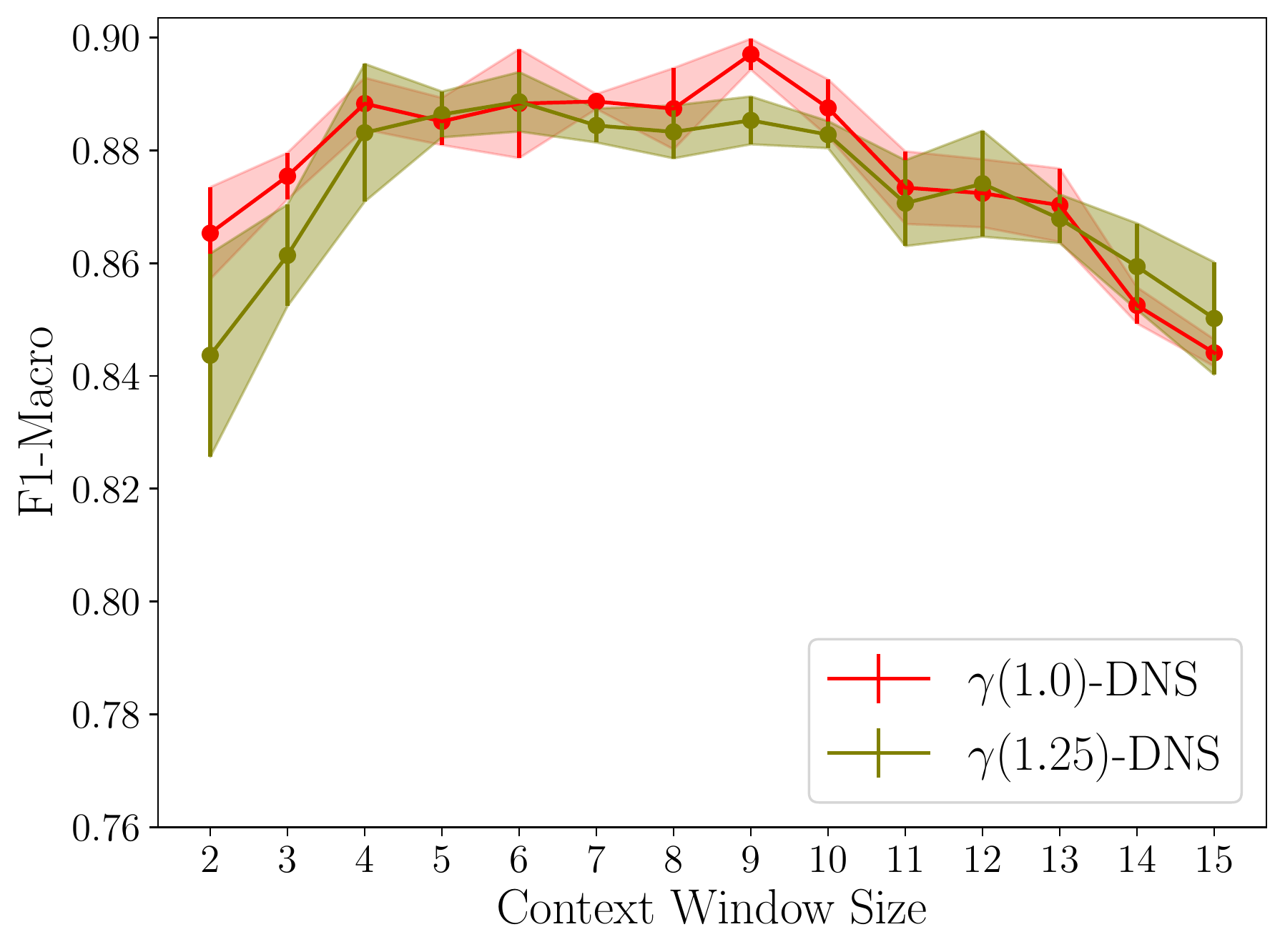}
  \caption*{Synthetic Sparse}
\end{subfigure}
\begin{subfigure}{.33\textwidth}
  \centering
  \includegraphics[width=.99\linewidth]{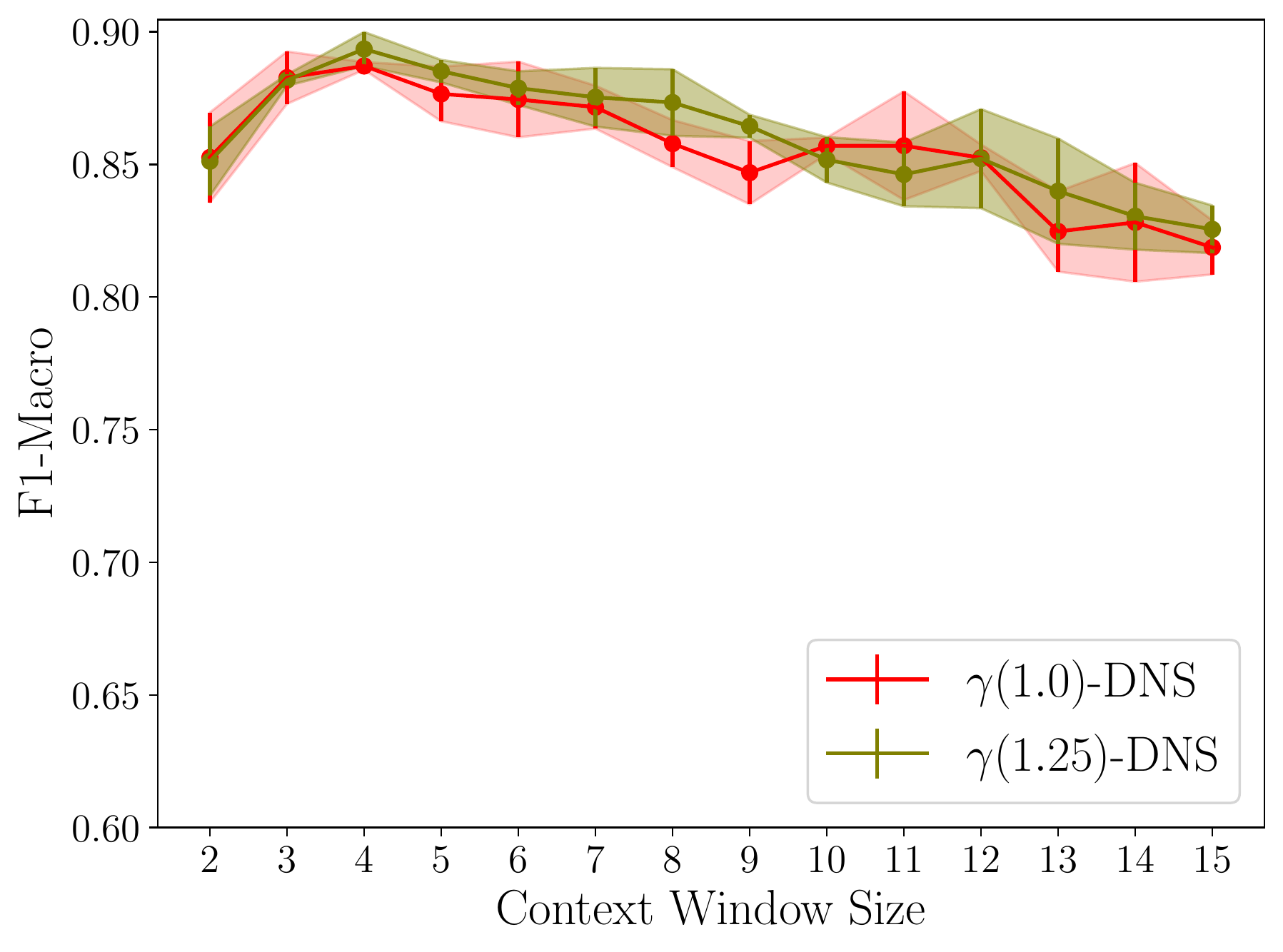}
  \caption*{Synthetic Moderate}
\end{subfigure}
\begin{subfigure}{.33\textwidth}
  \centering
  \includegraphics[width=.99\linewidth]{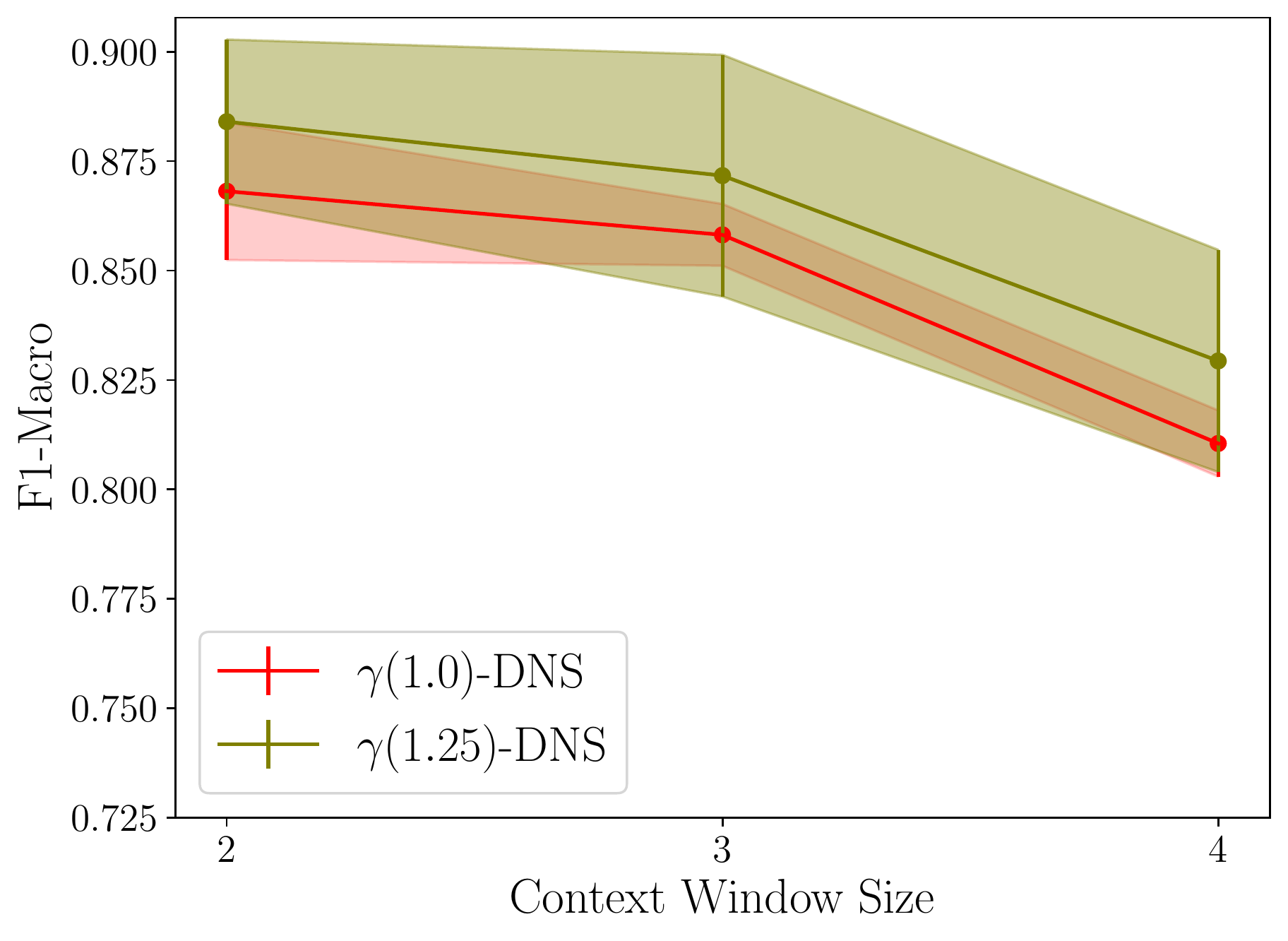}
  \caption*{Synthetic Dense}
\end{subfigure}
\caption{ Node classification performance (F1-Macro score) comparison for various $\gamma$-linear sampler based models with varying context window on Synthetic Sparse, Synthetic Moderate, and Synthetic Dense dataset.}
\label{fig:g-f1}
\end{figure*}

\subsubsection*{$\gamma$-linear Negative Sampling:}
We perform an empirical study to visualize the effect of varying $\gamma$ in $\gamma$-linear negative sampler on the synthetic datasets. We train the DeepWalk model with $\gamma$-linear negative sampler and denote it by $\gamma$-DNS. Moreover, we choose different values for $\gamma$ from 0 to 1.25 for this experiment (the models are denoted by $\gamma(value)$-DNS). In this experiment, we denote the DeepWalk-DNS model by $\gamma(1.0)$-DNS. From Figure \ref{fig:g-f1}, we see that $\gamma$ value closer to 1 follows the trend of the DeepWalk-DNS performance, whereas, $\gamma$ value closer to 0 follows the trend of the DeepWalk-UNS model performance. Theoretically, the performance of the $\gamma(0)$-DNS based model should be close to the UNS based model, but there is deviation across runs that require further investigation.

\subsection*{Sensitivity towards outlier points:}
To evaluate the sensitivity towards outlier points, we artificially add distant nodes with similar class values on the CiteSeer dataset. Figure \ref{fig:outlier} shows the sensitivity of our DNS model and the UNS model for these outlier points. Both models perform poorly with increasing number of outliers, which shows outliers hurt all models with the local similarity assumption.

\end{document}